\newtheorem{theorem}{Theorem}
\newtheorem{proof}{Proof}
\newtheorem{corollary}{Corollary}
\newtheorem{mydef}{Definition}
\newtheorem{fact}{Fact}
\begin{document}

\title{\Huge Asymmetric Minwise Hashing}

\author{
\textbf{\Large Anshumali Shrivastava}\vspace{0.05in}\\
       {Department of Computer Science}\\
       {Computer and Information Science}\\
       {Cornell University}\\
      {Ithaca, NY 14853, USA}\\
       \texttt{anshu@cs.cornell.edu}
\and
\textbf{\Large Ping Li} \vspace{0.05in}\\
         Department of Statistics and Biostatistics\\
         Department of Computer Science\\
       Rutgers University\\
          Piscataway, NJ 08854, USA\\
       \texttt{pingli@stat.rutgers.edu}
}

\date{}
\maketitle

\begin{abstract}
\noindent Minwise hashing (Minhash) is a widely popular indexing scheme in practice. Minhash is designed for estimating set resemblance and is known to be suboptimal in many applications  where the desired measure is set overlap (i.e., inner product between binary vectors) or set containment. Minhash has inherent bias towards smaller sets, which adversely affects its performance in applications where such a penalization is not desirable.  In this paper, we propose asymmetric minwise hashing ({\em MH-ALSH}), to  provide a solution to this problem. The new scheme utilizes asymmetric transformations to cancel the bias of traditional minhash towards smaller sets, making the final ``collision probability'' monotonic in the inner product.  Our theoretical comparisons show that for the task of retrieving with binary inner products asymmetric minhash is provably better than traditional minhash and other recently proposed hashing algorithms for general inner products. Thus, we obtain an algorithmic improvement over existing approaches in the literature. Experimental evaluations on four publicly available high-dimensional datasets validate our claims and the proposed scheme outperforms, often significantly, other hashing algorithms on the task of near neighbor retrieval with set containment. Our proposal is simple and easy to implement in practice.
\end{abstract}

\newpage\clearpage

\section{Introduction}\label{sec:intro}

Record matching (or linkage), data cleansing and plagiarism detection are among the most frequent operations in  many large-scale data processing systems over the web.  \emph{Minwise hashing} (or minhash)~\cite{Proc:Broder,Proc:Broder_STOC98} is a popular technique deployed by big data industries for these tasks. Minhash was originally developed for economically estimating the {\em resemblance} similarity between sets (which can be equivalently viewed as binary vectors). Later, because of its locality sensitive property~\cite{Proc:Indyk_STOC98}, minhash became a widely used hash function for creating hash buckets leading to efficient algorithms for numerous  applications including spam detection~\cite{Proc:Broder}, collaborative filtering~\cite{Proc:Bachrach_IJCAI09}, news personalization~\cite{Proc:Das_WWW07}, compressing social networks~\cite{Proc:Chierichetti_KDD09}, graph sampling~\cite{Proc:Cormode_SIGMOD05}, record linkage~\cite{Proc:Koudas_SIGMOD06}, duplicate detection~\cite{Proc:Henzinger_SIGIR06}, all pair similarity~\cite{Proc:Bayardo_WWW07}, etc.

\subsection{Sparse Binary Data, Set Resemblance, and Set Containment }

Binary representations for web documents are common, largely due to the wide adoption of the ``Bag of Words'' (BoW) representations for  documents and images. In BoW  representations, the word frequencies within a document follow power law.   A significant number of words (or combinations of words) occur rarely in a document  and most of the higher order shingles in the document occur only once.  It is often the case that just the presence or absence information suffices in  practice~\cite{Article:Chapelle_99,Proc:Hein_AISTATS05,Proc:Jiang_CIVR07,Proc:HashLearning_NIPS11}.  Leading search companies routinely use  sparse binary representations in their large data systems~\cite{Report:chandra_10}.

The underlying similarity measure of interest with minhash is the resemblance, also known as the Jaccard similarity. The resemblance similarity between two sets $x$, $y
\subseteq \Omega = \{1,2,...,D\}$ is
\begin{align}
&\mathcal{R} = \frac{|x \cap y|}{| x \cup y|} = \frac{a}{f_x+f_y -a},\\\notag
\text{where } \ & f_x = |x|, \ \ f_y = |y|,\ \ a = |x \cap y|.
\end{align}
 Sets can  be equivalently viewed as binary vectors with each component indicating the presence or absence of an attribute. The cardinality (e.g., $f_x$, $f_y$) is the number of nonzeros in the binary vector.\\

While the resemblance similarity is convenient and useful in numerous applications, there are also many scenarios where the resemblance is not the desirable similarity measure~\cite{Proc:Agrawal_SIGMOD2010,Proc:chaudhuri_ICDE06}.  For instance, consider  text descriptions of two restaurants:
\begin{enumerate}[(i)]
\item ``Five Guys Burgers and Fries Brooklyn New York"
\item ``Five Kitchen Berkley"
 \end{enumerate}
Shingle based representations for strings are common in practice. Typical (first-order) shingle based representations of these names will be (i) \{five, guys, burgers, and, fries, brooklyn, new, york \} and (ii) \{five, kitchen, berkley\}. Now suppose  the query is ``Five Guys" which in shingle representation is \{Five, Guys\}. We would like to match and search the records, for this query ``Five Guys", based on resemblance. Observe that the resemblance between query and record (i) is $\frac{2}{8}$ = 0.25, while that with record (ii) is $\frac{1}{3}$ = 0.33. Thus, simply based on resemblance, record (ii) is a better match for query ``Five Guys" than record (i), which should  not be correct in this content.

Clearly the issue here is that the resemblance penalizes the sizes of the sets involved. Shorter sets are unnecessarily favored over longer ones, which hurts the performance in record matching~\cite{Proc:Agrawal_SIGMOD2010} and other applications. There are many other scenarios where such penalization is undesirable. For instance, in plagiarism detection, it is typically immaterial whether the text is plagiarized from a big or a small document.

To counter the often unnecessary penalization of the sizes of the sets with resemblance, a modified measure, the {\em set containment} (or Jaccard containment) was adopted~\cite{Proc:Broder,Proc:Agrawal_SIGMOD2010,Proc:chaudhuri_ICDE06}. Jaccard containment of set $x$ and $y$ with respect to $x$ is defined as
\begin{equation}
\mathcal{J_C} = \frac{|x \cap y|}{| x|} = \frac{a}{f_x}.
\end{equation}
In the above example with query ``Five Guys" the Jaccard containment with respect to query for record (i) will be $\frac{2}{2} = 1$ and
with respect to record (ii) it will be $\frac{1}{2}$, leading to the desired ordering.  It should be noted that for any fixed query $x$, the ordering under Jaccard containment with respect to the query, is the same as the ordering with respect to the intersection $a$ (or binary inner product). Thus, near neighbor search problem with respect to  $\mathcal{J_C}$ is equivalent to the near neighbor search problem with respect to $a$.

\subsection{Maximum Inner Product Search (MIPS) and  Maximum Containment Search (MCS) }

Formally, we state our problem of interest. We are given a collection $\mathcal{C}$ containing  $n$ sets (or binary vectors) over universe $\Omega$ with $|\Omega| = D$ (or binary vectors in $\{0,1\}^D$).  Given a query $q \subset \Omega$, we are interested in the problem of finding $x \in \mathcal{C}$  such that
\begin{align}
 x =  \arg \max_{x \in \mathcal{C}} |x \cap q| = \arg \max_{x \in \mathcal{C}} q^Tx;
\end{align}
where $|\ \ |$ is the cardinality of the set. This is the so-called {\em maximum inner product search (MIPS)} problem.

For binary data, the MIPS  problem is equivalent to  searching with Jaccard containment with respect to the query, because the cardinality of the query does not affect the ordering and hence the $\arg\max$.
\begin{align}
\label{eq:prob}
 x =   \arg \max_{x \in \mathcal{C}} |x \cap q|  = \arg \max_{x \in \mathcal{C}} \frac{|x \cap q| }{ |q|};
\end{align}
which is also referred to as the {\em maximum containment search (MCS)} problem.

\subsection{ Shortcomings of Inverted Index Based Approaches for MIPS (and MCS)}

Owing to its practical  significance,    there have been many existing heuristics for solving the MIPS (or MCS) problem~\cite{Proc:Melnik_TODC03,ramasamy2000set,Report:Chaudhuri_VLDB09}. A notable  recent work among them made use
of the inverted index based approach~\cite{Proc:Agrawal_SIGMOD2010}. Inverted indexes might be
suitable for problems when the sizes of documents are small and each record
only contains few words. This situation, however, is
not commonly observed in practice. The documents over the web are large with
huge vocabulary. Moreover, the vocabulary blows up very quickly once
we start using higher-order shingles. In addition, there is an increasing
interest in enriching the text with extra synonyms to make the search
more effective and robust to semantic meanings~\cite{Proc:Agrawal_SIGMOD2010}, at the cost of a significant increase of   the sizes of the documents. Furthermore, if the query contains many words then the inverted index is not very useful. To mitigate this
issue several additional heuristics were proposed, for instance, the
heuristic based on minimal infrequent sets~\cite{Proc:Agrawal_SIGMOD2010}. Computing minimal
infrequent sets is similar to the set cover problem which is hard in
general and thus~\cite{Proc:Agrawal_SIGMOD2010} resorted to greedy heuristics. The number of minimal infrequent sets could be huge in
general and so these heuristics can be very costly. Also, such heuristics require the
knowledge of the entire dataset before hand which is usually not
practical in a dynamic environment like the web. In addition, inverted
index based approaches do not have  theoretical guarantees on the
query time and their performance is very much dataset dependent.

\subsection{Probabilistic Hashing}

Locality Sensitive Hashing (LSH)~\cite{Proc:Indyk_STOC98} based randomized techniques are common and successful in industrial practice for efficiently solving NNS ({\em near neighbor search}).  They are some of the few known techniques that do not suffer from the curse of dimensionality. Hashing based indexing schemes provide provably sub-linear algorithms for search which is a boon in this era of big data where even linear search algorithms are impractical due to latency.   Furthermore, hashing based indexing schemes are massively parallelizable and can be updated incrementally (on data streams), which  makes them ideal for modern distributed systems. The prime focus of this paper will be on efficient hashing based algorithms for binary inner products.

Despite the interest in Jaccard containment and binary inner products, there were no hashing algorithms for these measures for a long time and minwise hashing is  still a widely popular heuristic~\cite{Proc:Agrawal_SIGMOD2010}.  Very recently, it was shown that general inner products for real vectors can be efficiently solved by using asymmetric locality sensitive hashing schemes~\cite{Proc:Shrivastava_NIPS14,Article:Shrivastava_arXiv14}. The asymmetry is necessary for the general inner products and an impossibility of having a symmetric hash function can be easily shown using elementary arguments.  Thus, binary inner product (or set intersection) being a special case of general inner products also admits provable efficient search algorithms with these asymmetric hash functions which are based on random projections. However, it is known that random projections are suboptimal for retrieval in the sparse binary domain~\cite{Proc:Shrivastava_AISTATS14}.  Hence, it is expected that the existing asymmetric locality sensitive hashing schemes for general inner products are likely to be suboptimal for retrieving with sparse high dimensional binary-like datasets, which are common over the web.

\subsection{Our Contributions}

We investigate hashing based indexing schemes for the problem of near neighbor search with binary inner products and Jaccard containment. Binary inner products are special. The impossibility of existence of LSH for general inner products shown in~\cite{Proc:Shrivastava_NIPS14} does not hold for the binary case. On the contrary, we provide an explicit construction of a provable LSH based on sampling, although our immediate investigation reveals that such an existential result is only good in theory and unlikely to be a useful hash function in practice.

Recent results on hashing algorithms for maximum inner product search~\cite{Proc:Shrivastava_NIPS14} have shown the usefulness of asymmetric transformations in constructing provable hash functions for new similarity measures, which were otherwise impossible.  Going further along this line, we provide a novel (and still very simple) asymmetric transformation for binary data, that corrects minhash and removes the undesirable bias of minhash towards the sizes of the sets involved. Such an asymmetric correction eventually leads to a provable hashing scheme for binary inner products, which we call {\em asymmetric minwise hashing (MH-ALSH}). Our theoretical comparisons show that for binary data, which are common over the web, the new hashing scheme is provably more efficient that the recently proposed asymmetric hash functions for general inner products~\cite{Proc:Shrivastava_NIPS14,Article:Shrivastava_arXiv14}. Thus, we obtain a provable algorithmic improvement over the state-of-the-art hashing technique for binary inner products. The construction of our asymmetric transformation for minhash could be of independent interest in itself.

The proposed asymmetric minhash significantly outperforms existing hashing schemes, in the tasks of ranking and near neighbor search with Jaccard containment as the similarity measure, on four real-world  high-dimensional datasets. Our final proposed algorithm is simple and only requires very small modifications of the traditional minhash and hence it can be easily adopted in practice. 

%
%

\section{Background}

\subsection{$c$-Approximate Near Neighbor Search and Classical LSH}

Past attempts of finding efficient algorithms, for exact near
neighbor search based on space partitioning, often turned out to be a
disappointment with the massive dimensionality of modern datasets~\cite{Proc:Weber_VLDB98}. Due to the curse of dimensionality, theoretically it is hopeless to
obtain an efficient algorithm for exact near neighbor search. Approximate
versions of near neighbor search problem were proposed~\cite{Proc:Indyk_STOC98}
to overcome the linear query time bottleneck. One  commonly
adopted such formulation is the $c$-approximate Near Neighbor ($c$-NN).

\begin{mydef}
($c$-Approximate Near Neighbor or $c$-NN).~\cite{Proc:Indyk_STOC98} Given a
set of points in a $d$-dimensional space $\mathbb{R}^d$, and
parameters $S_0 > 0$, $\delta > 0$, construct a data structure which,
given any query point q, does the following with probability $1-
\delta$: if there exist an $S_0$-near neighbor of q in P, it reports
some $cS_0$-near neighbor.
\end{mydef}

The usual notion of $S_0$-near neighbor is in terms of distance. Since we are dealing with similarities, we define $S_0$-near
neighbor of point $q$ as a point $p$ with $Sim(q,p) \ge S_0$, where
$Sim$ is the similarity function of interest.

The  popular technique, with near optimal guarantees for $c$-NN in
many interesting cases, uses the underlying theory of \emph{Locality
Sensitive Hashing} (LSH)~\cite{Proc:Indyk_STOC98}. LSH are family of functions,
with the property that similar input objects in the domain of these
functions have a higher probability of colliding in the range space
than non-similar ones. More specifically, consider $\mathcal{H}$ a
family of hash functions mapping $\mathbb{R}^D$ to some set
$\mathcal{S}$.

\begin{mydef} \label{def:LSH}(Locality Sensitive Hashing)\ A family $\mathcal{H}$ is
called $(S_0,cS_0,p_1,p_2)$ sensitive if for any two point $x,y \in
\mathbb{R}^D$ and $h$ chosen uniformly from $\mathcal{H}$ satisfies
the following:
\begin{itemize}
\item if $Sim(x,y)\ge S_0$ then $Pr_\mathcal{H}(h(x) = h(y)) \ge p_1$
\item if $ Sim(x,y)\le cS_0$ then $Pr_\mathcal{H}(h(x) = h(y)) \le p_2$
\end{itemize}
\end{mydef}

For approximate nearest neighbor search typically, $p_1 > p_2$ and $c
< 1$ is needed. Note, $c<1$ as we are defining neighbors in terms of
similarity. To obtain distance analogy we can resort to $D(x,y) = 1-
Sim(x,y)$
\begin{fact}
\label{fct}
~\cite{Proc:Indyk_STOC98} Given a family of $(S_0,cS_0,p_1,p_2)$ -sensitive
hash functions, one can construct a data structure for $c$-NN with
$O(n^\rho \log_{1/p_2}{n})$ query time and space $O(n^{1+\rho})$,
$\rho = \frac{\log{1/p_1}}{\log{1/p_2}} < 1$
\end{fact}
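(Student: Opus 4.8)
The plan is to reproduce the classical two-level amplification of the given $(S_0, cS_0, p_1, p_2)$-sensitive family, trading the possibly weak gap between $p_1$ and $p_2$ for a polynomial speedup. First I would form \emph{compound} hash functions by concatenating $K$ independent draws from $\mathcal{H}$, setting $g(x) = (h_1(x), \ldots, h_K(x))$; two points collide under $g$ only when they collide under every coordinate, so a point at similarity $\ge S_0$ survives with probability at least $p_1^K$ while a point at similarity $\le cS_0$ survives with probability at most $p_2^K$. This is the AND step: it sharpens the separation at the cost of shrinking both probabilities.

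Second, to recover the near neighbors that the AND step attenuates, I would build $L$ independent hash tables with independent compound functions $g_1, \ldots, g_L$ and, on query $q$, probe the buckets $g_1(q), \ldots, g_L(q)$ (the OR step). A target point $p^*$ with $Sim(q,p^*) \ge S_0$ is missed by all $L$ tables with probability at most $(1 - p_1^K)^L$. The crux is the coupled choice of parameters: take $K = \lceil \log_{1/p_2} n \rceil$, which forces $p_2^K \le 1/n$, and $L = n^{\rho}$. The key algebraic identity, which I would verify directly by taking logarithms, is
\[
p_1^{K} = p_1^{\log_{1/p_2} n} = n^{-\frac{\log(1/p_1)}{\log(1/p_2)}} = n^{-\rho},
\]
so that $p_1^K = 1/L$ and the miss probability is at most $(1 - 1/L)^L \le 1/e$.

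Third I would bound the work. The expected number of \emph{far} points (those at similarity below $cS_0$) colliding with $q$ across all tables is at most $L \cdot n \cdot p_2^K \le L$; by Markov's inequality this count exceeds $3L$ with probability at most $1/3$. I would therefore interrupt the scan after examining $3L$ candidates. With probability at least $1 - 1/e - 1/3$, a positive constant, both good events hold simultaneously --- the target lies in some probed bucket and fewer than $3L$ far points are present --- so the scan reports a $cS_0$-near neighbor. Boosting this constant success probability to $1 - \delta$ costs an extra $O(\log(1/\delta))$ independent repetitions. The query cost is then the $O(LK)$ hash evaluations plus the $O(L)$ examined candidates, giving $O(n^\rho \log_{1/p_2} n)$, while storing $L$ tables over $n$ points uses $O(n^{1+\rho})$ space. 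Finally, $\rho < 1$ follows because $p_1 > p_2$ forces $\log(1/p_1) < \log(1/p_2)$.

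The main obstacle is the delicate balancing in the second step: $K$ must be large enough that the expected number of spurious far-point collisions per table stays $O(1)$, yet small enough that a genuine near neighbor still collides with probability $n^{-\rho}$, which is far from negligible once $L = n^\rho$ tables are used. Getting the constants in the two competing failure events (missing the true neighbor versus drowning in false positives) to leave a positive slack, and folding the interrupt-after-$3L$ trick into a clean worst-case query-time bound rather than a mere expectation, is where the argument needs care.
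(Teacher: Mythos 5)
Your proof is correct: it is the classical Indyk--Motwani amplification argument, with the AND-construction ($K=\lceil\log_{1/p_2}n\rceil$ concatenated hashes), the OR-construction ($L=n^{\rho}$ tables so that $p_1^K=n^{-\rho}=1/L$), the Markov bound on far-point collisions with early termination after $O(L)$ candidates, and constant-probability success boosted by repetition. The paper itself offers no proof of this statement --- it is quoted as a known fact with a citation to Indyk and Motwani --- and your argument is precisely the standard proof from that cited work, so there is nothing to reconcile.
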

LSH trades off query time with extra preprocessing time and space that
can be accomplished off-line. It requires constructing a one time data
structure which costs $O(n^{1 + \rho})$ space and further any
$c$-approximate near neighbor queries can be answered in $O(n^\rho
\log_{1/p_2}{n})$ time in the worst case.\\

A particularly interesting sufficient condition for existence of LSH
is the monotonicity of the collision probability in $Sim(x,y)$. Thus,
if a hash function family $\mathcal{H}$ satisfies,
\begin{align}
Pr_{h\in\mathcal{H}}(h(x) = h(y)) = g(Sim(x,y)),
\end{align}
where $g$ is any strictly monotonically increasing function, then the
conditions of Definition~\ref{def:LSH} are automatically satisfied for all $c
< 1$.

The quantity $\rho < 1$ is a property of the LSH family, and it is of particular interest because it determines the worst case query complexity of the $c$-approximate near neighbor search. It should be further noted, that the complexity depends on $S_0$ which is the operating threshold and $c$, the approximation ratio
we are ready to tolerate. In case when we have two or more LSH families for a given similarity measure, then the LSH family with smaller value of $\rho$, for given $S_0$ and $c$, is preferred.

\subsection{Minwise Hashing (Minhash)}
\label{sec:minhash}
Minwise hashing~\cite{Proc:Broder} is the LSH for the {\em resemblance}, also known as the {\em Jaccard similarity}, between sets.  In this paper, we focus on binary data vectors which can be equivalent viewed as sets.

Given a set $x\in\Omega = \{1, 2, ..., D\}$, the minwise hashing family applies a random permutation $\pi:\Omega
\rightarrow \Omega$ on  $x$ and stores only the minimum
value after the permutation mapping. Formally minwise hashing (or minhash) is defined as:
\begin{equation}\label{eq:min}
h_{\pi}(x) = \min(\pi(x)).
\end{equation}

Given sets $x$ and $y$, it can be shown that  the probability  of collision is the resemblance $\mathcal{R} =\frac{|x \cap y|}{| x
\cup y|}$:
\begin{equation}
\label{eq:minhash}
Pr_{\pi}({h_{\pi}(x) = h_{\pi}(y)) = \frac{|x \cap y|}{| x
\cup y|}} = \frac{a}{f_x +f_y -a} = \mathcal{R}.
\end{equation}
where $f_x = |x|$, $f_y = |y|$, and $a = |x\cap y|$.  It follows from Eq. (~\ref{eq:minhash}) that minwise hashing is
$(S_0,cS_0,S_0,cS_0)$-sensitive family of hash function when the
similarity function of interest is resemblance.\\

Even though minhash was really meant for retrieval with resemblance similarity, it is nevertheless  a popular hashing scheme used for retrieving set containment or intersection for binary data~\cite{Proc:Agrawal_SIGMOD2010}. In practice, the ordering of inner product $a$ and the ordering or resemblance $\mathcal{R}$ can be  different because of the variation in the values of $f_x$ and $f_y$, and as argued in Section~\ref{sec:intro}, which may be undesirable and lead to suboptimal results.  We show later that by exploiting asymmetric transformations we can get away with the undesirable dependency on the number of nonzeros leading to a better hashing scheme for indexing set intersection (or binary inner products).

\subsection{LSH for L2 Distance (L2LSH)}
\label{sec:L2Hash}
\cite{Proc:Datar_SCG04} presented a novel LSH family for all $L_p$ ($p
\in (0,2]$) distances. In particular, when $p =2$, this scheme
provides an LSH family for $L_2$ distance. Formally, given a fixed
number $r$, we choose a random vector $w$ with each component
generated from i.i.d. normal, i.e., $w_i \sim N(0,1)$, and a scalar
$b$ generated uniformly at random from $[0,r]$. The hash function is
defined as:

\begin{equation}\label{eq:L2Hash} h_{w,b}^{L2}(x) = \left\lfloor
\frac{w^Tx + b}{r} \right\rfloor, \end{equation}
where $\lfloor \rfloor$ is the floor operation. The collision
probability under this scheme can be shown to be

\begin{equation}\label{eq:L2collprob}Pr(h^{L2}_{w,b}(x) =
h^{L2}_{w,b}(y)) = F_r(d),\end{equation}
\begin{equation}\label{eq:F}
F_r(d) = 1 - 2\Phi(-r/d) - \frac{2}{\sqrt{2\pi}r/d}\left(1 -
e^{-r^2/(2d^2)}\right)
\end{equation}
where $\Phi(x) =
\int_{-\infty}^{x}\frac{1}{\sqrt{2\pi}}e^{-\frac{x^2}{2}}dx$ is the
cumulative density function (cdf) of standard normal distribution and
$d = ||x -y||_2$ is the Euclidean distance between the vectors $x$ and
$y$. This collision probability $F(d)$ is a monotonically decreasing
function of the distance $d$ and hence $h^{L2}_{w,b}$ is an LSH for
$L2$ distances. This scheme is also the part of LSH
package~\cite{Report:E2LSH}. Here $r$ is a parameter.

\subsection{LSH for Cosine Similarity (SRP)}

Signed Random Projections (SRP) or {\em simhash} is another popular LSH for the cosine similarity measure, which originates from the concept of \emph{\bf Signed Random Projections (SRP)}~\cite{Article:Goemans_JACM95,Proc:Charikar}. Given a vector $x$, SRP utilizes a random $w$ vector with each component generated from i.i.d. normal, i.e., $w_i \sim N(0,1)$, and only stores the sign of the projection. Formally simhash is given by
 \begin{align}\label{eq:signhash}
h^{sign}(x) = sign(w^Tx).
\end{align}
It was shown in the seminal work~\cite{Article:Goemans_JACM95} that collision under SRP satisfies the following equation:
\begin{equation}
\label{eq:srp}
Pr_w(h^{sign}(x) = h^{sign}(y)) = 1 - \frac{\theta}{\pi},
\end{equation}
where  $\theta = cos^{-1}\left( \frac{x^Ty}{||x||_2 ||y||_2}\right)$. The term $\frac{x^Ty}{||x||_2 ||y||_2}$ is the popular {\bf cosine similarity}.

For sets (or equivalently binary vectors), the { cosine similarity} reduces to
\begin{equation}
\mathcal{S} =  \frac{a}{\sqrt{f_xf_y}}.
\end{equation}

\vspace{0.05in}

The recent work on {\em coding for random projections}~\cite{Proc:Li_ICML14,Report:RPCodeLSH2014} has shown the advantage of SRP (and 2-bit random projections) over L2LSH for both similarity estimation and near neighbor search. Interestingly, another recent work~\cite{Proc:Shrivastava_AISTATS14} has shown that for binary data (actually even sparse non-binary data), minhash can significantly outperform SRP for near neighbor search even as we evaluate both SRP and minhash in terms of the cosine similarity (although minhash is designed for resemblance). This motivates us to design asymmetric minhash for achieving better performance in retrieving set containments. But first, we provide an overview of asymmetric LSH for general inner products (not restricted to binary data).

\subsection{Asymmetric LSH (ALSH) for General Inner Products} \label{sec:ALSH}

The term ``ALSH'' stands for {\em asymmetric LSH}, as used in a recent work~\cite{Proc:Shrivastava_NIPS14}. Through  an elementary argument,  \cite{Proc:Shrivastava_NIPS14} showed that it is not possible to have a Locality Sensitive Hashing (LSH) family for general unnormalized inner products.

For inner products between vectors $x$ and $y$, it is possible to have $x^Ty >> x^Tx$. Thus for any hashing scheme $h$ to be a valid LSH, we must have $Pr(h(x) = h(y)) > Pr(h(x) = h(x)) = 1$, which is an impossibility. It turns out that there is a simple fix, if we allow asymmetry in the hashing scheme. Allowing asymmetry leads to an extended framework of asymmetric locality sensitive hashing (ALSH). The idea to is have a different hashing scheme for assigning buckets to the data point in the collection $\mathcal{C}$, and an altogether different hashing scheme while querying.\\

\noindent\textbf{Definition:} ({\bf\it Asymmetric} Locality Sensitive
Hashing (ALSH))\ A family $\mathcal{H}$, along with the two vector
functions $Q:\mathbb{R}^D \mapsto \mathbb{R}^{D'}$ ({\bf Query
Transformation}) and $P:\mathbb{R}^D \mapsto \mathbb{R}^{D'}$ ({\bf
Preprocessing Transformation}), is called
$(S_0,cS_0,p_1,p_2)$-sensitive if for a given $c$-NN instance with
query $q$, and the hash function $h$ chosen uniformly from
$\mathcal{H}$ satisfies the following:
\begin{itemize}
\item if $Sim(q,x)\ge S_0$ then $Pr_\mathcal{H}(h(Q(q))) = h(P(x))) \ge p_1$
\item if $ Sim(q,x)\le cS_0$ then $Pr_\mathcal{H}(h(Q(q)) = h(P(x))) \le p_2$
\end{itemize}
Here $x$ is any point in the collection $\mathcal{C}$. Asymmetric LSH borrows all theoretical guarantees of the LSH.

\begin{fact}\label{theo:extendedLSH}
Given a family of hash function $\mathcal{H}$ and the associated query
and preprocessing transformations $Q$ and $P$ respectively, which is
$(S_0,cS_0,p_1,p_2)$ -sensitive, one can construct a data structure
for $c$-NN with $O(n^\rho \log{n})$ query time and space $O(n^{1 +
\rho})$, where $\rho = \frac{\log{p_1}}{\log{p_2}}$.
\end{fact}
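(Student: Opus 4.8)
The plan is to show that the asymmetric setting inherits the classical LSH bucketing scheme of Fact~\ref{fct} essentially unchanged: the only modification is that a database point $x$ is inserted into the hash tables through its preprocessing image $P(x)$, while the query $q$ probes the tables through $Q(q)$. Because the two sensitivity bounds in the ALSH definition are phrased directly in terms of $h(P(x))$ and $h(Q(q))$, the standard AND-OR amplification argument applies verbatim, and the asymmetry never forces us to compare an object with itself.

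Concretely, I would fix two integers $K = \lceil \log_{1/p_2} n\rceil$ and $L = n^\rho$ with $\rho = \frac{\log p_1}{\log p_2}$, and build $L$ independent tables, the $j$-th indexed by a concatenated hash $g_j(\cdot) = (h_{j,1}(\cdot),\dots,h_{j,K}(\cdot))$ with each $h_{j,i}$ drawn independently from $\mathcal{H}$. At preprocessing time I store every $x\in\mathcal{C}$ in bucket $g_j(P(x))$ of table $j$; at query time I retrieve the points in bucket $g_j(Q(q))$ and report the first one whose similarity to $q$ exceeds $cS_0$. By independence across the $K$ coordinates, the ALSH definition gives a compound collision probability at least $p_1^K$ whenever $Sim(q,x)\ge S_0$ and at most $p_2^K$ whenever $Sim(q,x)\le cS_0$.

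The analysis then rests on two events. For correctness, if a genuine $S_0$-near neighbor $x^\ast$ exists, the probability it collides with $q$ in no table is at most $(1-p_1^K)^L$; using the identity $p_1^{\log_{1/p_2} n} = n^{-\rho}$ one checks $p_1^K = \Omega(n^{-\rho})$, so this failure probability is bounded by a constant strictly below $1$. For efficiency, the expected number of far points (those with $Sim(q,x)\le cS_0$) landing in the probed buckets is at most $n\,p_2^K \le 1$ per table, hence $O(L)$ in total; capping the scan at $3L$ retrieved points and invoking Markov's inequality makes the probability of exceeding the cap small. A union bound over the two events yields constant success probability, which is amplified to $1-\delta$ by $O(\log(1/\delta))$ independent repetitions. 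The query cost is $O(L)$ bucket probes plus $O(L)$ candidate checks, each of cost $O(K)=O(\log n)$, giving $O(n^\rho \log n)$; the space is dominated by storing $n$ entries in $L$ tables, i.e.\ $O(n^{1+\rho})$.

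The step that needs the most care is verifying that the substitution $x\mapsto P(x)$, $q\mapsto Q(q)$ really leaves the argument intact, i.e.\ that nowhere does the $c$-NN construction require the symmetric collision identity $Pr(h(x)=h(x))=1$ whose failure motivated ALSH in the first place. Since buckets on the database side are always built from $P$ and probed from $Q$, and since the $c$-NN guarantee only ever compares a query against database points, this separation of roles is exactly what makes the asymmetry harmless; the remaining bookkeeping (tracking the ceiling in $K$ so that $p_1^K$ does not drop below $\Omega(n^{-\rho})$, and fixing the constants in the two tail bounds) is routine.
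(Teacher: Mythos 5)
Your proposal is correct and follows essentially the same route the paper relies on: the paper states this result as a fact inherited from the classical LSH guarantee (Fact~\ref{fct}), remarking only that asymmetric LSH ``borrows all theoretical guarantees of the LSH,'' and your argument is exactly the standard $(K,L)$-bucketing analysis of Indyk--Motwani with the substitution $x\mapsto P(x)$, $q\mapsto Q(q)$. Your key observation---that the $c$-NN construction only ever compares a query against database points, so the separation of roles between $Q$ and $P$ leaves the amplification argument intact---is precisely the justification the paper takes for granted, and the constant-factor bookkeeping you defer is indeed routine.
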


~\cite{Proc:Shrivastava_NIPS14} showed that using asymmetric transformations, the problem of {\bf maximum inner product search (MIPS)} can be reduced to the problem of approximate near neighbor search in $L_2$.  The algorithm first starts by scaling all $x \in \mathcal{C}$ by a constant large enough, such that $||x||_2 \le U < 1$.  The proposed  ALSH family ({\bf L2-ALSH}) is the  LSH family for $L_2$ distance with the Preprocessing transformation $P:\mathbb{R}^D \mapsto \mathbb{R}^{D+m}$ and the Query transformation
$Q:\mathbb{R}^D \mapsto \mathbb{R}^{D+2m}$ defined as follows:
\begin{align}
\label{eq:L2P}P^{L2}(x) &= [x; ||x||^2_2; ....;||x||^{2^m}_2;1/2;...;1/2]\\
\label{eq:L2Q}Q^{L2}(x) &= [x; 1/2;...;1/2;||x||^2_2; ....;||x||^{2^m}_2],
\end{align}
where [;] is the concatenation. $P^{L2}(x)$ appends $m$ scalers of the form
$||x||_2^{2^i}$ followed by $m$ ``1/2s" at the end of the vector $x$, while $Q^{L2}(x)$ first
appends $m$ ``1/2s'' to the end of the vector $x$ and then $m$ scalers of the form
$||x||_2^{2^i}$.  It was shown that this leads to provably efficient algorithm for MIPS.

\begin{fact}
\label{fact:L2-ALSH}~\cite{Proc:Shrivastava_NIPS14}  For the problem of $c$-approximate MIPS in a bounded space, one can construct a data structure having \\ $O(n^{\rho_{L2-ALSH}^*} \log{n})$ query time and space $O(n^{1+\rho_{L2-ALSH}^*})$, where $\rho_{L2-ALSH}^* < 1$  is the solution to constrained optimization (\ref{eq:optrho}).
\begin{align}
\label{eq:optrho}
\rho_{L2-ALSH}^* &= \min_{U <1,m \in N, r} \frac{\log{F_r\big(\sqrt{m/2- 2S_0\left(\frac{U^2}{V^2}\right) + 2U^{2^{m+1}}}\big)}}{\log{ F_r\big(\sqrt{m/2 - 2cS_0\left(\frac{U^2}{V^2}\right)}\big)}} \hspace{0.1in} \\ \notag
 s.t. &\hspace{0.1in} \frac{U^{(2^{m+1}-2)}V^2}{S_0} <  1 -  c, \hspace{0.05in}
\end{align}
\end{fact}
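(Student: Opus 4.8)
The plan is to reduce $c$-approximate MIPS to approximate near neighbor search under the ordinary $L_2$ metric and then invoke the ALSH guarantee of Fact~\ref{theo:extendedLSH} with the base family $h^{L2}_{w,b}$ of Section~\ref{sec:L2Hash} together with the transformations $P^{L2},Q^{L2}$ of Eqs.~(\ref{eq:L2P})--(\ref{eq:L2Q}). First I would normalize the (fixed) query to $\|q\|_2=1$, which leaves the $\arg\max$ of $q^Tx$ unchanged, and pre-scale every $x\in\mathcal{C}$ by $U/V$ with $V=\max_{x\in\mathcal{C}}\|x\|_2$, so that after scaling every point obeys $\|x\|_2\le U<1$. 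The strict inequality $U<1$ is the lever that forces the tail terms $\|x\|_2^{2^{m+1}}$ to vanish as $m$ grows, which is the whole reason for appending the powers of the norm.

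The core of the proof is a single identity. Splitting $\|Q^{L2}(q)-P^{L2}(x)\|_2^2$ into the three blocks created by the concatenation, the first $D$ coordinates give $\|q\|_2^2-2q^Tx+\|x\|_2^2$; the block of appended powers in $P^{L2}$ meets the halves in $Q^{L2}$ and telescopes, $\sum_{i=1}^m(\tfrac12-\|x\|_2^{2^i})^2=\tfrac{m}{4}-\|x\|_2^2+\|x\|_2^{2^{m+1}}$; and the symmetric block contributes $\tfrac{m}{4}$ once $\|q\|_2=1$. Adding the three yields
\begin{align}
\label{eq:mhplan-dist}
\big\|Q^{L2}(q)-P^{L2}(x)\big\|_2^2 = \frac{m}{2} + \|q\|_2^2 - 2\,q^Tx + \|x\|_2^{2^{m+1}}.
\end{align}
The structural consequence I want from (\ref{eq:mhplan-dist}) is that, up to the perturbation $\|x\|_2^{2^{m+1}}\in[0,U^{2^{m+1}}]$, the distance is a strictly decreasing affine function of $q^Tx$.

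Because $F_r$ of (\ref{eq:F}) is monotonically decreasing in the distance, monotonicity in $q^Tx$ passes to the collision probability, so $(P^{L2},Q^{L2})$ is $(S_0,cS_0,p_1,p_2)$-sensitive for $\mathrm{Sim}(q,x)=q^Tx$. After substituting the pre-scaling and bounding the perturbation---using its upper bound on the near side ($q^Tx\ge S_0$) and discarding the nonnegative term on the far side ($q^Tx\le cS_0$)---the two radii become exactly the arguments of $F_r$ in the numerator and denominator of (\ref{eq:optrho}), with $p_1=F_r(\cdot)$ and $p_2=F_r(\cdot)$. Plugging these into $\rho=\log p_1/\log p_2$ from Fact~\ref{theo:extendedLSH} reproduces the objective, and minimizing over the free parameters $U<1$, $m\in\mathbb{N}$ and $r$ produces $\rho^*_{L2\text{-}ALSH}$; the claimed $O(n^{\rho^*}\log n)$ query time and $O(n^{1+\rho^*})$ space are then immediate from Fact~\ref{theo:extendedLSH}.

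The step I expect to be the main obstacle is certifying $\rho^*<1$, equivalently $p_1>p_2$. Since $\rho=\log p_1/\log p_2$ with $p_1,p_2\in(0,1)$, this requires the near-side radius in (\ref{eq:optrho}) to be strictly smaller than the far-side radius; but the perturbation $2U^{2^{m+1}}$ enters the near-side distance in the \emph{wrong} direction and competes against the genuine separation $2(1-c)S_0(U^2/V^2)$ between the two radii. This is precisely what the side constraint $\frac{U^{(2^{m+1}-2)}V^2}{S_0}<1-c$ controls: it forces the perturbation to be dominated by the separation, keeping the near-side argument of $F_r$ below the far-side one and hence $\rho<1$. The delicate point is the trade-off hidden in the optimization: increasing $m$ suppresses the harmful $U^{2^{m+1}}$ tail but inflates the shared offset $m/2$, driving both radii into the flat tail of $F_r$ and eroding the gap between $p_1$ and $p_2$; establishing that the feasible set is nonempty and that the minimizing $(U,m,r)$ yields $\rho^*$ strictly below $1$ is the crux.
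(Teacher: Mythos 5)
Your overall strategy is indeed the intended one (note that this paper never proves Fact~\ref{fact:L2-ALSH} at all --- it is imported verbatim from \cite{Proc:Shrivastava_NIPS14}): reduce MIPS to $L_2$ near-neighbor search via $P^{L2},Q^{L2}$, compute the squared distance by telescoping, use monotonicity of $F_r$, and invoke Fact~\ref{theo:extendedLSH}. But there is a concrete gap: your normalization convention cannot produce the stated bound. You normalize the query to $\|q\|_2=1$, under which your own identity yields near/far radii $\sqrt{m/2+1-2\tilde{S}_0+U^{2^{m+1}}}$ and $\sqrt{m/2+1-2c\tilde{S}_0}$ (with $\tilde{S}_0$ the threshold in your scaled units). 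These are \emph{not} the arguments of $F_r$ in (\ref{eq:optrho}): the stated radii are $\sqrt{m/2-2S_0(U^2/V^2)+2U^{2^{m+1}}}$ and $\sqrt{m/2-2cS_0(U^2/V^2)}$ --- no additive $1$, a coefficient $2$ on $U^{2^{m+1}}$, and $S_0$ rescaled by $U^2/V^2$. All three features encode a different convention: the \emph{query is also scaled by $U/V$}, not normalized to unit norm, so that its block of appended norm powers contributes $m/4-\|q'\|_2^2+\|q'\|_2^{2^{m+1}}$ with $\|q'\|_2^{2^{m+1}}\le U^{2^{m+1}}$, rather than exactly $m/4$. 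The correct general identity is
\begin{align}
\big\|Q^{L2}(q')-P^{L2}(x')\big\|_2^2=\frac{m}{2}-2\,q'^Tx'+\|x'\|_2^{2^{m+1}}+\|q'\|_2^{2^{m+1}},
\end{align}
and only after scaling \emph{both} $q$ and $x$ by $U/V$, then bounding both tail terms above by $U^{2^{m+1}}$ on the near side and below by $0$ on the far side, do the radii become exactly those of (\ref{eq:optrho}); the side constraint $U^{2^{m+1}-2}V^2/S_0<1-c$ is then precisely equivalent to the near radius being smaller than the far radius. Your claim that your radii ``become exactly'' the stated ones is false as written: with $\|q\|_2=1$ the near radius carries $1+U^{2^{m+1}}$, which exceeds $2U^{2^{m+1}}$ for every $U<1$, so you would be proving a quantitatively different statement, not Fact~\ref{fact:L2-ALSH}.

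A second, smaller point: what you flag as the crux --- certifying $\rho^*<1$ --- is actually immediate and needs none of the trade-off analysis you sketch. The feasible set is nonempty because $U^{2^{m+1}-2}\to 0$ as $m\to\infty$ for any fixed $U<1$ while $1-c>0$ is fixed; and at any feasible point the constraint forces the near radius below the far radius, hence $p_1>p_2$ with $p_1,p_2\in(0,1)$, whence $\rho=\log p_1/\log p_2<1$ because both logarithms are negative. The minimum over a nonempty set on which $\rho<1$ pointwise is itself $<1$. The tension you describe (large $m$ suppressing the tail but flattening $F_r$) affects how \emph{small} $\rho^*$ can be made, i.e., the quality of the exponent, but plays no role in establishing the strict inequality $\rho^*<1$.
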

Here the guarantees depends on the maximum norm of the space $V = \max_{x \in \mathcal{C}} ||x||_2$.\\

Quickly, it was realized that a very similar idea can convert the MIPS problem in the problem of maximum cosine similarity search which can be efficiently solve by SRP leading to a new and better ALSH for MIPS {\bf Sign-ALSH}~\cite{Article:Shrivastava_arXiv14} which works as follows:
The algorithm again first starts by scaling all $x \in \mathcal{C}$ by a constant large enough, such that $||x||_2 \le U < 1$.  The proposed  ALSH family ({\bf Sign-ALSH}) is the SRP  family for cosine similarity  with the Preprocessing transformation $P^{sign}:\mathbb{R}^D \mapsto \mathbb{R}^{D+m}$ and the Query transformation
$Q^{sign}:\mathbb{R}^D \mapsto \mathbb{R}^{D+2m}$ defined as follows:
\begin{align}
\label{eq:signP} P^{sign}(x) &= [x; 1/2  - ||x||^2_2; ...;1/2 - ||x||^{2^m}_2;0;...;0]\\
\label{eq:signQ}Q^{sign}(x) &= [x; 0; ...; 0; 1/2  - ||x||^2_2; ...;1/2 - ||x||^{2^m}_2],
\end{align}
where [;] is the concatenation. $P^{sign}(x)$ appends $m$ scalers of the form
$1/2 - ||x||_2^{2^i}$ followed by $m$ ``0s" at the end of the vector $x$, while $Q^{sign}(x)$
appends $m$ ``0'' followed by $m$ scalers of the form
$1/2 - ||x||_2^{2^i}$ to the end of the vector $x$.  It was shown that this leads to provably efficient algorithm for MIPS. \\

As demonstrated by the recent work~\cite{Proc:Li_ICML14} on {\em coding for random projections}, there is a significant advantage of SRP over L2LSH for near neighbor search. Thus, it is not surprising that Sign-ALSH outperforms L2-ALSH for the MIPS problem.\\

Similar to L2LSH, the runtime guarantees for Sign-ALSH can be shown as:
\begin{fact}
\label{theo:main} For the problem of
$c$-approximate MIPS, one can construct a data structure having
$O(n^{\rho_{Sign-ALSH}^*} \log{n})$ query time and space $O(n^{1+\rho_{Sign-ALSH}^*})$, where
$\rho_{Sign-ALSH}^* < 1$ is the solution to constraint optimization problem
\begin{align}
\label{eq:optrho_u}
 &\rho_{Sign-ALSH}^* = \min_{U,m,} \frac{\log{\bigg(1-\frac{1}{\pi}\cos^{-1}\bigg(\frac{ S_0 \times\left(\frac{U^2}{V^2}\right)}{\frac{m}{4} + U^{2^{m+1}}}\bigg)\bigg)}}{\log{\bigg(1-\frac{1}{\pi}\cos^{-1} \bigg(\min\{\frac{cS_0U^2}{V^2},z^*\}\bigg)\bigg)}}\\
\nonumber  &z^* = \left[\frac{(m - m2^{m-1}) + \sqrt{(m - m2^{m-1})^2 + m^2(2^m-1)}}{4(2^m-1)}\right]^{2^{-m}}
\end{align}
\end{fact}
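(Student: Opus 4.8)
The plan is to show that the SRP family together with the transformation pair $(Q^{sign},P^{sign})$ from (\ref{eq:signP})--(\ref{eq:signQ}) is $(S_0,cS_0,p_1,p_2)$-sensitive for the inner-product similarity, where $p_1$ and $p_2$ are precisely the arguments appearing in the numerator and denominator of (\ref{eq:optrho_u}); then invoke Fact~\ref{theo:extendedLSH} to obtain the $O(n^{\rho}\log n)$ query time and $O(n^{1+\rho})$ space with $\rho=\log p_1/\log p_2$, and minimize this $\rho$ over the free parameters $U<1$ and $m\in\mathbb{N}$ to get $\rho^*_{Sign-ALSH}$.

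First I would record the two elementary identities the transformations produce. Writing out the appended coordinates and telescoping the sum $\sum_{i=1}^{m}(1/2-||x||_2^{2^i})^2 = m/4 - ||x||_2^2 + ||x||_2^{2^{m+1}}$ gives $||P^{sign}(x)||_2^2 = m/4 + ||x||_2^{2^{m+1}}$, and symmetrically $||Q^{sign}(q)||_2^2 = m/4 + ||q||_2^{2^{m+1}}$; moreover the two appended blocks are disjointly supported, so the cross terms vanish and $Q^{sign}(q)^\top P^{sign}(x) = q^\top x$. Hence, by Eq.~(\ref{eq:srp}), the collision probability equals $1 - \frac{1}{\pi}\cos^{-1}\!\left(\frac{q^\top x}{\sqrt{(m/4+||q||_2^{2^{m+1}})(m/4+||x||_2^{2^{m+1}})}}\right)$, which is strictly increasing in the transformed cosine similarity. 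After the standard rescaling that bounds all norms by $U<1$ without changing the $\arg\max$, the inner-product threshold $S_0$ is replaced by $S_0 U^2/V^2$.

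Next I would extract $p_1$ and $p_2$ by extremizing this cosine similarity over the feasible set. For the near case $q^\top x \ge S_0 U^2/V^2$, bounding both appended norms by $||\cdot||_2^{2^{m+1}}\le U^{2^{m+1}}$ makes the denominator at most $m/4 + U^{2^{m+1}}$, so the cosine similarity is at least $\frac{S_0 U^2/V^2}{m/4+U^{2^{m+1}}}$; by monotonicity this is the minimum collision probability over near points, giving $p_1$ and matching the numerator. The far case $q^\top x \le cS_0U^2/V^2$ is the main obstacle: here the normalizing denominator itself depends on $||q||_2,||x||_2$, so the collision probability is no longer monotone in $q^\top x$ alone, and I must genuinely \emph{maximize} the cosine similarity over the feasible norms subject to both $q^\top x \le cS_0U^2/V^2$ and the Cauchy--Schwarz constraint $q^\top x \le ||q||_2\,||x||_2$. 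Setting the derivative with respect to the free norm to zero reduces to a quadratic whose relevant positive root is exactly $z^*$; the maximum is attained either where the inner-product constraint is active (yielding $cS_0U^2/V^2$) or at this interior critical point (yielding $z^*$), so the valid upper bound is $\min\{cS_0U^2/V^2,\,z^*\}$ and $p_2 = 1-\frac{1}{\pi}\cos^{-1}(\min\{cS_0U^2/V^2,z^*\})$, matching the denominator.

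Finally, with $p_1>p_2$ established, Fact~\ref{theo:extendedLSH} yields the claimed data structure for $c$-approximate MIPS with query time $O(n^{\rho}\log n)$ and space $O(n^{1+\rho})$ for $\rho=\log p_1/\log p_2$; taking the infimum over the admissible $U<1$ and $m\in\mathbb{N}$ gives the optimized exponent $\rho^*_{Sign-ALSH}$. That $\rho^*_{Sign-ALSH}<1$ follows because for admissible parameters the near-case cosine lower bound strictly exceeds the far-case upper bound, forcing $p_1>p_2$. I expect the derivation and justification of $z^*$ --- in particular verifying that the stationary point found is the constrained \emph{maximum} over the whole feasible region, and correctly identifying when the inner-product constraint rather than $z^*$ governs $p_2$ --- to be the step demanding the most care.
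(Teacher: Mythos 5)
There is nothing to compare against inside the paper itself: Fact~\ref{theo:main} is imported without proof from the cited Sign-ALSH reference, so your attempt stands or falls on its own. The first half of your outline is sound. The telescoping identity giving $\|P^{sign}(x)\|_2^2 = \frac{m}{4} + \|x\|_2^{2^{m+1}}$, the preservation $Q^{sign}(q)^\top P^{sign}(x) = q^\top x$, the resulting collision probability via Eq.~(\ref{eq:srp}), and the near-case bound (each norm factor is at most $\sqrt{m/4+U^{2^{m+1}}}$, hence their product is at most $m/4+U^{2^{m+1}}$, giving $p_1$) are all correct, and invoking Fact~\ref{theo:extendedLSH} at the end is the right closing move.

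The genuine gap is exactly at the step you flagged, and it is not a matter of polish: you conflate the \emph{location} of the constrained maximizer with the \emph{value} of the maximum. Reducing by symmetry/AM--GM to equal norms $\|q'\|_2=\|x'\|_2=\sqrt{g}$ with a tight Cauchy--Schwarz numerator, the far-case cosine is $\psi(g)=\frac{g}{m/4+g^{2^m}}$, and its maximizer over the feasible set is indeed $g^\star=\min\{cS_0U^2/V^2,\,z^*\}$; your identification of the critical point is consistent, since the quadratic in the statement factors as $\left(4(2^m-1)t-m\right)\left(4t+m\right)$ with $t=g^{2^m}$, so $z^*=\left[\frac{m}{4(2^m-1)}\right]^{2^{-m}}$ is precisely where $\psi'$ vanishes. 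But the supremum of the cosine is then $\psi(g^\star)=\frac{g^\star}{m/4+(g^\star)^{2^m}}$, \emph{not} $g^\star$ itself, and $\psi(g^\star)\le g^\star$ holds if and only if $\frac{m}{4}+(g^\star)^{2^m}\ge 1$. This fails for small $m$: take $m=2$, so $z^*=(1/6)^{1/4}\approx 0.64$; a feasible far pair with $\|q'\|_2=\|x'\|_2=\sqrt{g^\star}\le U$ and $q'^\top x'=g^\star$ has transformed cosine $\frac{g^\star}{1/2+(g^\star)^4}$, which is roughly $2g^\star$ and always exceeds $g^\star$, so its collision probability strictly exceeds your claimed $p_2$. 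Hence your argument, carried out correctly, proves sensitivity with $p_2=1-\frac{1}{\pi}\cos^{-1}\!\left(\psi(g^\star)\right)$, and the stated $p_2=1-\frac{1}{\pi}\cos^{-1}\!\left(\min\{cS_0U^2/V^2,z^*\}\right)$ only follows under an additional constraint such as $\frac{m}{4}+(g^\star)^{2^m}\ge 1$ (e.g., $m\ge 4$), which neither your proof nor the unconstrained $\min_{U,m}$ in the statement imposes. To close the proof you must either surface that constraint in the optimization (and argue the minimum is attained there) or replace the argument of $\cos^{-1}$ in the denominator by $\psi\!\left(\min\{cS_0U^2/V^2,z^*\}\right)$; as written, the key inequality you assert is false in part of the parameter range over which you then minimize.
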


There is a similar asymmetric transformation~\cite{Proc:Bachrach_recsys14,Report:Neyshabur_arXiv14} which followed by signed random projection leads to another ALSH having very similar performance to Sign-ALSH. The $\rho$ values, which were also very similar to the $\rho_{Sign-ALSH}$ can be shown as
\begin{align}
\label{eq:sqrt-rho}
\rho_{Sign} &=  \frac{\log{\bigg(1-\frac{1}{\pi}\cos^{-1}\bigg(\frac{S_0}{V^2}\bigg)\bigg)}}{\log{\bigg(1-\frac{1}{\pi}\cos^{-1} \bigg(\frac{cS_0}{V^2}\bigg)\bigg)}}
\end{align}

Both L2-ALSH and Sign-ALSH work for any general inner products over $\mathbb{R}^D$. For sparse and high-dimensional binary dataset which are common over the web, it is known that minhash is typically the preferred choice of hashing over random projection based hash functions~\cite{Proc:Shrivastava_AISTATS14}. We show later that the ALSH derived from minhash, which we call asymmetric minwise hashing ({\em MH-ALSH}), is more suitable for indexing set intersection for sparse binary vectors than the existing ALSHs for general inner products.

\section{A Construction of LSH for Indexing Binary Inner Products}
\label{sec:LSHIP}
In~\cite{Proc:Shrivastava_NIPS14}, it was shown that there cannot exist any LSH for general
unnormalized inner product. The key argument used in the proof was the
fact that it is possible to have $x$ and $y$ with $x^Ty \gg x^Tx$.  However, binary inner product (or set intersection) is  special. For any
two binary vectors $x$ and $y$ we always have $x^Ty \le x^Tx$.
Therefore, the argument used to show non-existence of LSH for general inner products
does not hold true any more for this special case. In fact, there does exist an LSH for
binary inner products (although it is mainly for theoretical interest). We provide an explicit construction in this
section.\\

Our proposed LSH construction is based on sampling. Simply sampling a
random component leads to the popular LSH for hamming distance~\cite{Book:Raj_Ullman}. The ordering of inner product is different from
that of hamming distance. The hamming distance between $x$ and query $q$ is given by $f_x + f_q
- 2a$, while we want the collision probability to be monotonic in the
inner product $a$. $f_x$ makes it non-monotonic in $a$. Note that $f_q$ has no effect on ordering of $x \in \mathcal{C}$ because it is constant for every query. To construct an LSH monotonic in binary inner product, we need an extra trick.

Given a binary data vector $x$, we sample a random co-ordinate (or
attribute). If the value of this co-ordinate is $1$ (in other words if
this attribute is present in the set), our hash value is a fixed
number $0$. If this randomly sampled co-ordinate has value $0$ (or the
attribute is absent) then we independently generate a random integer uniformly from $\{1,
2, ..., N\}$. Formally,
\begin{equation}
\mathcal{H}_S(x) = \begin{cases} 0 &\text{ if $x_i = 1$, $i$ drawn uniformly}\\
\text{rand(1,N)} &\text{otherwise}
\end{cases}
\end{equation}
\begin{theorem}
Given two binary vectors $x$ and $y$, we have
\begin{equation}
Pr( \mathcal{H}_S(x)= \mathcal{H}_S(y)) =
\left[\frac{N-1}{N}\right]\frac{a}{D} + \frac{1}{N}
\end{equation}
\end{theorem}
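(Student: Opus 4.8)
The plan is to compute the collision probability by conditioning on the single random coordinate $i$ that defines the hash function. Since one draws $h$ from the family $\mathcal{H}_S$ and applies the \emph{same} $h$ to both $x$ and $y$, the index $i$ is shared between the two evaluations, and only the ``absent'' branch contributes fresh, independent randomness. As $i$ is uniform on $\{1,\dots,D\}$, I would partition the coordinates according to the pair $(x_i,y_i)$: there are exactly $a=|x\cap y|$ coordinates where both vectors are $1$, and the remaining $D-a$ coordinates fall into the three other configurations (present/absent, absent/present, absent/absent). By the law of total probability,
\begin{equation}
Pr(\mathcal{H}_S(x)=\mathcal{H}_S(y)) = \sum_{\text{config}} Pr(i \text{ in config}) \cdot Pr(\text{collision} \mid \text{config}),
\end{equation}
so it remains to evaluate the conditional collision probability in each configuration.

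First I would dispatch the easy case: if $x_i=y_i=1$ then both hashes equal the fixed ``present'' value, so they collide with probability $1$, and this configuration occurs with probability $a/D$. The crux of the argument is to show that \emph{every other} configuration yields a conditional collision probability of exactly $1/N$. When both coordinates are absent, $\mathcal{H}_S(x)$ and $\mathcal{H}_S(y)$ are two independent uniform draws over the $N$-element output alphabet, which agree with probability $1/N$. The subtler sub-case is the mixed one, where exactly one vector is present: here one hash is the deterministic ``present'' value while the other is an independent uniform draw, so they agree precisely when that draw hits the present value — again probability $1/N$, provided the fixed present value is itself one of the $N$ equally likely outputs. I expect this mixed sub-case to be the main obstacle, since it is the only place where the alphabet of the construction must be arranged so that ``present'' and ``absent'' outputs are directly comparable; it is exactly this uniform $1/N$ across all non-$(1,1)$ configurations that makes the answer depend on $a$ alone and not on $f_x$ or $f_y$.

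With the conditional probabilities in hand, the configurations other than $(1,1)$ together have probability $(D-a)/D$, so the total collision probability is $\frac{a}{D}\cdot 1 + \frac{D-a}{D}\cdot\frac{1}{N}$. A one-line rearrangement then gives
\begin{equation}
\frac{a}{D} + \frac{1}{N} - \frac{a}{ND} = \left[\frac{N-1}{N}\right]\frac{a}{D} + \frac{1}{N},
\end{equation}
which is the claimed identity. The only genuinely non-routine step is the uniform treatment of the non-matching coordinates; once that is secured, the remainder is bookkeeping of the three configuration counts and elementary algebra.
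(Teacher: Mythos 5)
Your proposal follows the same route as the paper's proof---condition on the shared random coordinate $i$, split the coordinates by the pair $(x_i,y_i)$, and sum the conditional collision probabilities---and your configuration counts and closing algebra are correct. The substantive difference is that you are careful exactly where the paper is not. The paper disposes of everything outside the $(1,1)$ case in one sentence (``the only other way both can be equal is when the two independently generated random numbers become equal''), silently assigning conditional collision probability $1/N$ to all $D-a$ remaining coordinates. You instead isolate the mixed (present/absent) configuration and note that it yields $1/N$ only \emph{provided the fixed ``present'' value is itself one of the $N$ equally likely outputs}. That proviso is not pedantry: under the construction as literally written, the present value is $0$ while the random draws are uniform over $\{1,2,\dots,N\}$, so in the mixed case the two hashes can never be equal, the conditional probability there is $0$, and the true collision probability is
\begin{equation}
\frac{a}{D}\cdot 1 \;+\; \frac{D - f_x - f_y + a}{D}\cdot\frac{1}{N},
\end{equation}
which depends on $f_x$ and $f_y$ and agrees with the theorem's formula only when $f_x=f_y=a$, i.e., $x=y$. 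The stated formula is recovered precisely under the arrangement you stipulate, e.g., fixing the present value to $1\in\{1,\dots,N\}$ or drawing the random values from $\{0,1,\dots,N-1\}$. So your proof is correct given its explicit hypothesis, matches the paper's decomposition, and its extra care exposes a genuine mismatch between the paper's construction and its claimed collision probability that the paper's own two-line proof glosses over; the paper's subsequent use of the result (monotonicity in $a$ alone, and the LSH corollary) actually requires your corrected version of the construction.
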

\begin{proof}
The probability that both ${H}_S(x)$ and ${H}_S(y)$ have value 0 is $\frac{a}{D}$. The only other way both can be equal is when the two independently generated random numbers become equal, which happens with probability $\frac{1}{N}$. The total probability is $(\frac{a}{D} + (1 -\frac{a}{D})\frac{1}{N})$ which simplifies to the desired expression.
\end{proof}
\begin{corollary}
$\mathcal{H}_S$ is
$(S_0,cS_0,\left[\frac{N-1}{N}\right]\frac{S_0}{D} +
\frac{1}{N}, \left[\frac{N-1}{N}\right]\frac{cS_0}{D} +
\frac{1}{N})$-sensitive locality sensitive hashing for binary
inner product with $\rho_{\mathcal{H}_S} =
\frac{\log{\left(\left[\frac{N-1}{N}\right]\frac{S_0}{D} +
\frac{1}{N}\right)}}{\log{\left(\left[\frac{N-1}{N}\right]\frac{cS_0}{D}
+ \frac{1}{N}\right)}} < 1$
\end{corollary}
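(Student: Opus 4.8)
The plan is to read the entire corollary off the exact collision-probability formula supplied by the preceding Theorem, together with the sufficient condition for LSH noted earlier in the excerpt (monotonicity of the collision probability in $Sim$). First I would recall that here the similarity of interest is the binary inner product, $Sim(x,y) = a = |x\cap y|$, and that the Theorem gives
\begin{equation}
Pr(\mathcal{H}_S(x) = \mathcal{H}_S(y)) = \left[\frac{N-1}{N}\right]\frac{a}{D} + \frac{1}{N} =: g(a).
\end{equation}
The key observation is that $g$ is an affine function of $a$ with \emph{strictly positive} slope $\frac{N-1}{N}\cdot\frac{1}{D}$ (for $N \ge 2$), hence $g$ is strictly monotonically increasing in the similarity. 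By the discussion following Definition~\ref{def:LSH}, this monotonicity already guarantees that $\mathcal{H}_S$ is a valid LSH family; the corollary merely pins down the four parameters.

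Next I would instantiate the two sensitivity conditions by evaluating $g$ at the thresholds. If $Sim(x,y) = a \ge S_0$, monotonicity gives $Pr(\mathcal{H}_S(x)=\mathcal{H}_S(y)) \ge g(S_0) = \left[\frac{N-1}{N}\right]\frac{S_0}{D} + \frac{1}{N} = p_1$; and if $a \le cS_0$, then $Pr(\mathcal{H}_S(x)=\mathcal{H}_S(y)) \le g(cS_0) = \left[\frac{N-1}{N}\right]\frac{cS_0}{D} + \frac{1}{N} = p_2$. This is exactly the $(S_0, cS_0, p_1, p_2)$-sensitivity claimed.

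It then remains to verify $\rho = \frac{\log p_1}{\log p_2} < 1$. I would argue this from $0 < p_2 < p_1 < 1$. Positivity and the strict inequality $p_2 < p_1$ follow immediately: the additive $\frac{1}{N}$ term keeps both quantities positive, and since $c < 1$ and $S_0 > 0$ we have $cS_0 < S_0$, so strict monotonicity of $g$ forces $p_2 < p_1$. The only point requiring a moment's care—and the nearest thing to an obstacle—is the upper bound $p_1 < 1$, which holds precisely because $S_0 < D$: a meaningful inner-product threshold is strictly below the ambient dimension, whence $\frac{N-1}{N}\cdot\frac{S_0}{D} < \frac{N-1}{N}$ and $p_1 < 1$. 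Given $0 < p_2 < p_1 < 1$, both $\log p_1$ and $\log p_2$ are negative with $|\log p_1| < |\log p_2|$, so $\rho = \frac{\log p_1}{\log p_2} = \frac{|\log p_1|}{|\log p_2|} < 1$, completing the argument.
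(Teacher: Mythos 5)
Your proposal is correct and follows essentially the same route as the paper, which treats the corollary as immediate from the theorem's collision-probability formula via the monotonicity criterion stated after Definition~\ref{def:LSH}: since $g(a)=\left[\frac{N-1}{N}\right]\frac{a}{D}+\frac{1}{N}$ is strictly increasing, evaluating it at $S_0$ and $cS_0$ yields $p_1$ and $p_2$, and $p_2<p_1$ gives $\rho<1$. Your one extra worry --- needing $S_0<D$ to ensure $p_1<1$ --- is actually unnecessary, since even in the degenerate case $p_1=1$ one gets $\rho=0<1$.
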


\subsection{Shortcomings}

The above LSH for binary inner product is likely to be very
inefficient for sparse and high dimensional datasets. For those
 datasets, typically the value of $D$ is very high and the sparsity
ensures that $a$ is very small. For modern web datasets, we can have
$D$ running into billions (or $2^{64}$) while the sparsity is only in few
hundreds or perhaps thousands~\cite{Report:chandra_10}. Therefore, we have $\frac{a}{D} \simeq 0$ which
essentially boils down to $\rho_{\mathcal{H}_S} \simeq 1$. In other words, the
hashing scheme becomes worthless in sparse high dimensional domain. On
the other hand, if we observe the collision probability of minhash
Eq.(~\ref{eq:minhash}), the denominator is $f_x +f_y - a$, which is usually of the
order of $a$ and much less than the dimensionality for sparse
datasets.

Another way of realizing the problem with the above LSH is to note that it is informative only if a randomly sampled co-ordinate has value equal to 1. For very sparse dataset with $a \ll D$, sampling a non zero coordinate has probability $\frac{a}{D} \simeq 0$. Thus, almost all of the hashes will be independent random numbers.

\subsection{Why Minhash Can Be a Reasonable Approach?}

In this section, we argue why retrieving inner product based on plain  minhash is a reasonable thing to do. Later, we will show a provable way to improve it using asymmetric transformations.

 The number of nonzeros in the query, i.e., $|q| = f_q$ does not change the identity of  $\arg\max$ in Eq.(\ref{eq:prob}).  Let us assume that we have data of bounded sparsity and define constant $M$ as
\begin{align}
M = \max_{x \in \mathcal{C}} |x|
\end{align}
where $M$ is simply the maximum number of nonzeros (or maximum cardinality of sets) seen in the database.  For  sparse data seen in practice $M$ is likely to be  small compared to $D$.
Outliers, if any, can be handled separately.  By observing that  $a \le f_x \le M$,  we also have
\begin{align}
 \frac{a}{f_q + M -a}\le \frac{a}{f_x + f_q -a} = \mathcal{R} \le \frac{a}{f_q}
\end{align}
Thus, given the bounded sparsity, if we assume that the number of nonzeros in the query is given, then we can show that minhash is an LSH for inner products $a$ because the collision probability can be upper and lower bounded by purely functions of $a, M$ and $f_q$.

\begin{theorem}
\label{theo:minhash}
Given bounded sparsity and query $q$ with $|q| = f_q$, minhash is a $(S_0,cS_0, \frac{S_0}{f_q + M -S_0},\frac{cS_0}{f_q})$ sensitive for inner products $a$ with $\rho_{min}^q = \frac{\log{\frac{S_0}{f_q + M -S_0}}}{\log{\frac{cS_0}{f_q}}}$
\end{theorem}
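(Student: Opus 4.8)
The plan is to read off everything from the sandwich inequality $\frac{a}{f_q + M - a} \le \mathcal{R} \le \frac{a}{f_q}$ displayed just before the statement, where $\mathcal{R}$ is the resemblance and hence, by Eq.~(\ref{eq:minhash}), exactly the minhash collision probability $Pr_\pi(h_\pi(q) = h_\pi(x))$. Since the similarity of interest is now the inner product $a$, the two conditions of Definition~\ref{def:LSH} read $a \ge S_0 \Rightarrow Pr \ge p_1$ and $a \le cS_0 \Rightarrow Pr \le p_2$. So I only need to convert each threshold condition on $a$ into the corresponding bound on $\mathcal{R}$, by selecting the appropriate one of the two sandwiching functions.

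For the $p_1$ side I would use the lower bound. Writing $g(a) = \frac{a}{f_q + M - a}$, a one-line derivative computation gives $g'(a) = \frac{f_q + M}{(f_q + M - a)^2} > 0$, so $g$ is strictly increasing on $a < f_q + M$, which holds since $a \le f_x \le M < f_q + M$ for any nonempty query. Hence $a \ge S_0$ forces $\mathcal{R} \ge g(a) \ge g(S_0) = \frac{S_0}{f_q + M - S_0} = p_1$.

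For the $p_2$ side the upper bound suffices and the argument is immediate: $\frac{a}{f_q}$ is increasing in $a$ for fixed $f_q$, so $a \le cS_0$ yields $\mathcal{R} \le \frac{a}{f_q} \le \frac{cS_0}{f_q} = p_2$. Combining the two bounds establishes $(S_0, cS_0, p_1, p_2)$-sensitivity with $p_1, p_2$ exactly as claimed, and substituting these into the formula $\rho = \frac{\log(1/p_1)}{\log(1/p_2)} = \frac{\log p_1}{\log p_2}$ from Fact~\ref{fct} recovers the stated $\rho_{min}^q$.

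The one point that needs care, rather than a genuine obstacle, is the bookkeeping of which sandwiching function pairs with which threshold: the lower bound must be matched to the high-similarity ($S_0$) condition and the upper bound to the low-similarity ($cS_0$) condition, and the monotonicity of $g$ is precisely what licenses substituting the threshold value of $a$ in the first case. The sandwich inequalities carry all the real content and are already supplied in the excerpt, so the rest is routine.
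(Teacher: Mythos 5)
Your proof is correct and follows exactly the paper's (implicit) argument: the paper itself merely states the sandwich inequality $\frac{a}{f_q+M-a} \le \mathcal{R} \le \frac{a}{f_q}$ and asserts the theorem follows, while you supply the routine monotonicity details that make this rigorous. Matching the lower bound to the $S_0$ threshold and the upper bound to the $cS_0$ threshold, as you do, is precisely the intended reading.
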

This explains why minhash might be a reasonable hashing approach for retrieving inner products or set intersection.

Here, if we remove the assumption that  $|q| = f_q$ then in the worst case  $\mathcal{R} \le \frac{a}{f_q} \le 1$ and we get $\log{1}$ in the denominator. Note that the above is the worst case analysis and the assumption  $|q| = f_q$ is needed to obtain any meaningful $\rho$ with minhash. We show the power of ALSH in the next section, by providing a  better hashing scheme and we do not even need the assumption of fixing $|q| = f_q$.

\section{Asymmetric Minwise Hashing (MH-ALSH)}

In this section, we provide a very simple asymmetric fix to minhash, named {\em asymmetric minwise hashing (MH-ALSH)}, which makes the
overall collision probability monotonic in the original inner product $a$. For sparse binary data, which is common in
practice, we later show that the proposed hashing scheme is superior (both
theoretically as well as empirically) compared to the existing ALSH
schemes for inner product~\cite{Proc:Shrivastava_NIPS14}.

\subsection{The New ALSH for Binary Data}
\label{sec:Amin}

We define the new preprocessing and query transformations $P': [0,1]^D
\rightarrow [0,1]^{D+M}$ and $Q': [0,1]^D \rightarrow [0,1]^{D+M}$ as:
\begin{align}
\label{eq:PAmin} P'(x)&= [x;1;1;1;...;1;0;0;...;0]\\
\label{eq:QAmin}  Q'(x)&= [x;0;0;0;...;0],
\end{align}
where [;] is the concatenation to vector $x$.  For $P'(x)$ we append $M - f_x$
1s and rest  $f_x$ zeros, while in $Q'(x)$  we simply append $M$ zeros.

At this point we can already see the power of asymmetric transformations. The original
inner product between $P'(x)$ and $Q'(x)$ is unchanged and its value is $a = x^Ty$. Given the query
$q$, the new resemblance $R'$ between $P'(x)$ and $Q'(q)$ is
\begin{align}
\label{eq:R'}
R' = \frac{|P'(x) \cap Q'(q)|}{|P'(x) \cup Q'(q)|} = \frac{a}{M + f_q - a}.
\end{align}
If we define our new similarity as $Sim(x,y) = \frac{a}{M + f_q - a}$, which is similar in nature to the containment $\frac{a}{f_q}$, then the near neighbors in this new similarity are the same as near neighbors with respect to either set intersection $a$ or set containment
$\frac{a}{f_q}$. Thus, we can instead compute near neighbors in $\frac{a}{M + f_q - a}$ which is also the resemblance between $P'(x)$ and $Q'(q)$. We can therefore use minhash on $P'(x)$ and $Q'(q)$.

Observe that now we have $M + f_q - a$ in the denominator, where $M$
is the maximum nonzeros seen in the dataset (the cardinality of
largest set), which for very sparse data is likely to be much smaller
than $D$. Thus, asymmetric minhash is a better scheme than
$\mathcal{H}_S$ with collision probability roughly $\frac{a}{D}$ for
very sparse datasets where we usually have $M \ll D$. This is an interesting example where we do have an LSH scheme but an altogether different asymmetric LSH (ALSH) improves over existing LSH. This is not surprising because asymmetric LSH families are more powerful~\cite{Proc:Shrivastava_NIPS14}.

From theoretical perspective, to obtain an upper bound on the query and
space complexity of $c$-approximate near neighbor with binary inner products, we want the collision probability to be independent
of the quantity $f_q$. This is not difficult to achieve. The
asymmetric transformation used to get rid of $f_x$ in the denominator
can be reapplied to get rid of $f_q$. \\

Formally, we can define $P'':
[0,1]^D \rightarrow [0,1]^{D+2M}$ and $Q'': [0,1]^D \rightarrow
[0,1]^{D+2M}$ as :
\begin{align}
\label{eq:P''}
P''(x) = Q'(P'(x)) ; \ \ \ \ \ Q''(x) = P'(Q'(x)) ;
\end{align}
where in $P''(x) $  we append $M - f_x$ 1s and rest $M
+ |f_x|$ zeros, while in $Q''(x)$ we append $M$ zeros, then $M
-f_q$ 1s and rest zeros

Again the inner product $a$ is unaltered, and the new resemblance then becomes
\begin{align}
\label{eq:collAmin}
R'' = \frac{|P''(x) \cap Q''(q)|}{|P''(x) \cup Q''(q)|} = \frac{a}{2M - a}.
\end{align}
which is independent of $f_q$ and is monotonic in $a$. This allows us to
achieve a formal upper bound on the complexity of $c$-approximate maximum
inner product search with the new asymmetric minhash.

From the collision probability expression, i.e., Eq. (\ref{eq:collAmin}), we have
\begin{theorem}
Minwise hashing along with Query transformation $Q''$ and Preprocessing transformation $P''$ defined by Equation~\ref{eq:P''} is a $(S_0,cS_0, \frac{S_0}{2M - S_0}, \frac{cS_0}{2M - cS_0})$ sensitive asymmetric hashing family for set intersection.
\end{theorem}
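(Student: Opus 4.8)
The plan is to reduce the claim to the monotonicity of a single-variable function, since the substantive work has already been carried out in establishing the collision-probability formula Eq.~(\ref{eq:collAmin}). First I would recall that the defining property of minhash is that its collision probability equals the resemblance of its two arguments, i.e. $Pr_\pi(h_\pi(u) = h_\pi(v)) = \mathcal{R}$ evaluated at $(u,v)$. Applying this with $u = P''(x)$ and $v = Q''(q)$ and invoking Eq.~(\ref{eq:collAmin}) gives
\begin{align}
Pr_\pi\big(h_\pi(Q''(q)) = h_\pi(P''(x))\big) = \frac{a}{2M - a},
\end{align}
where $a = q^Tx$ is the set intersection, which is precisely the similarity $Sim(q,x)$ of interest here.

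Next I would define $g(a) = \frac{a}{2M - a}$ and argue it is strictly increasing on the relevant range. Since any $x \in \mathcal{C}$ satisfies $a = |x \cap q| \le |x| \le M$, we have $2M - a \ge M > 0$, so $g$ is well defined; differentiating yields $g'(a) = \frac{2M}{(2M-a)^2} > 0$, establishing strict monotonicity. With this in hand the two ALSH sensitivity conditions follow immediately: if $Sim(q,x) = a \ge S_0$ then $g(a) \ge g(S_0) = \frac{S_0}{2M - S_0} = p_1$, and if $a \le cS_0$ then $g(a) \le g(cS_0) = \frac{cS_0}{2M - cS_0} = p_2$. These match the two bullet points of the ALSH definition verbatim, with $Sim$ taken to be set intersection.

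Honestly, there is no serious obstacle in this particular argument: the content lies entirely in deriving Eq.~(\ref{eq:collAmin}) from the asymmetric transformations $P''$ and $Q''$, which is assumed. The only points requiring a moment of care are bookkeeping, namely confirming that $2M - a$ never vanishes (guaranteed by $a \le M$) and noting that $c < 1$ forces $cS_0 < S_0$, hence $p_2 < p_1$, so the family is genuinely useful for $c$-approximate near neighbor search via Fact~\ref{theo:extendedLSH}. I would also remark that, unlike Theorem~\ref{theo:minhash}, this bound no longer depends on $f_q$, which is the entire purpose of the second round of padding built into $Q''$ and $P''$ through Eq.~(\ref{eq:P''}); that independence is what later yields a query-agnostic $\rho$ and a clean complexity guarantee.
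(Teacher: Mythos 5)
Your proposal is correct and follows exactly the same route as the paper, which states the theorem as an immediate consequence of the collision probability $R'' = \frac{a}{2M-a}$ in Eq.~(\ref{eq:collAmin}): minhash collision probability equals resemblance of the transformed vectors, and monotonicity of $\frac{a}{2M-a}$ in $a$ yields the two sensitivity bounds at $S_0$ and $cS_0$. You merely make explicit the bookkeeping (positivity of $2M-a$, the derivative check, $p_2 < p_1$) that the paper leaves implicit.
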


This leads to an important corollary.
\begin{corollary}
There exist an algorithm for $c$-approximate set intersection (or binary inner products), with bounded sparsity $M$, that requires $O(n^{1+ \rho_{MH-ALSH}})$ space and $O(n^\rho_{MH-ALSH} \log{n})$, where
\begin{align}
\rho_{MH-ALSH} = \frac{\log{\frac{S_0}{2 M -
S_0}}}{\log{\frac{cS_0}{2M - cS_0}}} < 1
\end{align}
\end{corollary}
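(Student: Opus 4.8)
The plan is to assemble the corollary by directly invoking the theorem that immediately precedes it together with Fact~\ref{theo:extendedLSH}, the ALSH machinery quoted earlier. The preceding theorem already establishes that minwise hashing under the transformations $P''$ and $Q''$ is a $(S_0, cS_0, \frac{S_0}{2M-S_0}, \frac{cS_0}{2M-cS_0})$-sensitive asymmetric family for set intersection. So the real content of the corollary is to plug these four parameters into the generic ALSH reduction and read off the exponent. First I would set $p_1 = \frac{S_0}{2M-S_0}$ and $p_2 = \frac{cS_0}{2M-cS_0}$, matching the sensitivity parameters from the theorem, and then apply Fact~\ref{theo:extendedLSH} verbatim: it guarantees a data structure for $c$-NN with $O(n^\rho \log n)$ query time and $O(n^{1+\rho})$ space, where $\rho = \frac{\log p_1}{\log p_2}$. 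Substituting the two probabilities yields exactly the claimed
\begin{align*}
\rho_{MH-ALSH} = \frac{\log \frac{S_0}{2M-S_0}}{\log \frac{cS_0}{2M-cS_0}}.
\end{align*}

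**Establishing $\rho < 1$** is the one substantive step, since the space and time bounds are automatic once Fact~\ref{theo:extendedLSH} applies. Because $c < 1$, we have $cS_0 < S_0$, and a short monotonicity check shows the map $t \mapsto \frac{t}{2M - t}$ is strictly increasing for $t \in (0, 2M)$; hence $p_2 < p_1$. Both $p_1, p_2$ lie strictly in $(0,1)$ whenever $0 < cS_0 < S_0 < M$ (which holds under the bounded-sparsity assumption $S_0 \le M$, since the threshold on inner products cannot exceed the maximum cardinality), so both logarithms are negative. Writing $\rho = \frac{\log p_1}{\log p_2}$ with two negative quantities, the inequality $\rho < 1$ is equivalent to $\log p_1 > \log p_2$, i.e.\ $p_1 > p_2$, which we have just verified. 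I would present this as a one-line justification rather than a full derivation.

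**The main obstacle** I anticipate is not analytic but definitional: one must confirm that the resemblance computation $R'' = \frac{a}{2M - a}$ underlying the theorem is genuinely independent of $f_q$, so that the same $\rho$ governs every query uniformly (unlike the query-dependent $\rho_{min}^q$ of Theorem~\ref{theo:minhash}). This is precisely what the double application of the asymmetric padding in Eq.~\eqref{eq:P''} achieves, and it is already encoded in the preceding theorem's statement; so for the corollary I would simply note that the sensitivity parameters are constants depending only on $S_0$, $c$, and $M$, and that the bounded-sparsity hypothesis $M = \max_{x\in\mathcal{C}}|x|$ is what makes these parameters well-defined. The remaining care is purely bookkeeping: verifying that $2M - a > 0$ always (immediate since $a \le M$) so the collision probability is a valid probability, and that the transformed vectors stay in $[0,1]^{D+2M}$ as required for minhash to be applied. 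With these checks in place the corollary follows immediately from combining the theorem with the quoted ALSH guarantee.
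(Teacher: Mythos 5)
Your proposal is correct and matches the paper's (implicit) argument exactly: the corollary is obtained by taking the sensitivity parameters $p_1 = \frac{S_0}{2M-S_0}$, $p_2 = \frac{cS_0}{2M-cS_0}$ from the preceding theorem and substituting them into the generic ALSH reduction of Fact~\ref{theo:extendedLSH}. Your added verifications (monotonicity of $t \mapsto \frac{t}{2M-t}$ giving $p_2 < p_1$, both logarithms negative, hence $\rho_{MH-ALSH} < 1$) simply fill in details the paper leaves unstated.
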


Given query $q$ and any point $x \in \mathcal{C}$, the collision probability under traditional minhash is $R = \frac{a}{f_x + f_q - a}$. This penalizes sets with high $f_x$, which in many scenarios  not desirable. To balance this negative effect, asymmetric transformation penalizes sets with smaller $f_x$. Note, that $M- f_x$ ones added in the transformations $P'(x)$ gives  additional chance in proportion to $M-f_x$ for minhash of $P'(x)$ not to match with the minhash of $Q'(x)$. This asymmetric probabilistic correction balances the penalization  inherent in minhash. This is a simple way of correcting the probability of collision which could be of independent interest in itself. We will show in our evaluation section, that despite this simplicity such correction leads to significant improvement over plain minhash.

\subsection{Faster Sampling}

Our transformations $P''$ and $Q''$ always create sets with $2M$ nonzeros. In case when $M$ is big, hashing might take a lot of time. We can use fast consistent weighted sampling~\cite{Report:Manasse_00,Proc:Ioffe_ICDM10} for efficient generation of hashes. We can instead use transformations  $P'''$ and $Q'''$ that makes the data non-binary as follows
\begin{align}
P'''(x) &= [x ; M- f_{x}; 0]\\\notag
Q'''(x) &=[x; 0 ; M - {f_x}]\notag
\end{align}
It is not difficult to see that the weighted Jaccard similarity (or weighted resemblance) between $P'''(x)$ and $Q'''(q)$ for given query $q$ and any $x \in \mathcal{C}$ is
\begin{align}
\mathcal{R}_W = \frac{\sum_i \min(P'''(x)_i, Q'''(q)_i)  }{\sum_i \max(P'''(x)_i, Q'''(q)_i) } =  \frac{a}{2M - a}.
\end{align}
Therefore, we can use fast consistent weighted sampling for weighted Jaccard similarity on $P'''(x)$ and $Q'''(x)$ to compute the hash values in time constant per nonzero weights, rather than maximum sparsity $M$. In practice we will need many hashes for which we can  utilize the recent line of work that make minhash and weighted minhash significantly much faster~\cite{Proc:OneHashLSH_ICML14,Report:Haeupler_arXiv14}.


\newpage

\section{Theoretical Comparisons}

For solving the MIPS problem in general  data types, we already know two asymmetric hashing schemes, {\em L2-ALSH} and {\em Sign-ALSH}, as described in Section~\ref{sec:ALSH}. In this section, we provide
theoretical comparisons of the two existing ALSH methods with the proposed asymmetric minwise hashing ({\em MH-ALSH}). As argued, the LSH scheme described in
Section~\ref{sec:LSHIP} is  unlikely to be useful in practice because of its dependence on $D$; and hence we safely ignore it for simplicity of the discussion.

Before we formally compare various asymmetric LSH schemes for  maximum inner product search, we argue why asymmetric minhash  should be advantageous over traditional  minhash for retrieving inner products. Let $q$ be the binary query vector, and $f_q$ denotes the number of
nonzeros in the query. The $\rho_{MH-ALSH}$ for asymmetric minhash in
terms of $f_q$ and $M$ is straightforward from the collision
probability Eq.(\ref{eq:R'}):
\begin{align}
\rho_{MH-ALSH}^q = \frac{\log{\frac{S_0}{f_q + M -
S_0}}}{\log{\frac{cS_0}{f_q + M - cS_0}}}
\end{align}
 For minhash, we have from theorem~\ref{theo:minhash} $\rho_{min}^q = \frac{\log{\frac{S_0}{f_q + M -S_0}}}{\log{\frac{cS_0}{f_q}}}$. Since $M$ is the upper bound on the sparsity and $cS_0$ is some value of inner product, we have $M - cS_0 \ge 0$. Using this fact, the following theorem immediately follows
\begin{theorem}
For any query q, we have  $\rho_{MH-ALSH}^q \le \rho_{min}^q$.
\end{theorem}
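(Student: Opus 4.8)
The plan is to exploit the fact that the two $\rho$ expressions share a common numerator and differ only in their denominators, so the whole inequality collapses to a single clean comparison between the two $p_2$ values.

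First I would observe that both $\rho_{MH-ALSH}^q$ and $\rho_{min}^q$ carry the identical numerator $\log\frac{S_0}{f_q + M - S_0}$, which is exactly $\log p_1$ for the common lower collision probability $p_1 = \frac{S_0}{f_q + M - S_0}$. To keep every quantity positive and sidestep sign errors, I would rewrite each ratio in the equivalent form $\rho = \frac{\log(1/p_1)}{\log(1/p_2)}$, using that all sensitivity parameters lie in $(0,1)$ so that both $\log(1/p_1)$ and $\log(1/p_2)$ are strictly positive. In this form the two schemes agree on $p_1$ and differ only in $p_2$: from Eq.~\eqref{eq:R'} the MH-ALSH scheme has $p_2^{MH-ALSH} = \frac{cS_0}{f_q + M - cS_0}$, while Theorem~\ref{theo:minhash} gives $p_2^{min} = \frac{cS_0}{f_q}$.

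Next I would establish the single inequality $p_2^{MH-ALSH} \le p_2^{min}$, i.e. $\frac{cS_0}{f_q + M - cS_0} \le \frac{cS_0}{f_q}$. Since $cS_0 > 0$, this is equivalent, by comparing the two positive denominators, to $f_q + M - cS_0 \ge f_q$, which is precisely the hypothesis $M - cS_0 \ge 0$ supplied just before the statement (it holds because $M$ upper-bounds the sparsity and $cS_0$ is a realizable value of $a \le M$). This is the only place where the structural assumption is actually used.

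Finally I would conclude by monotonicity. For a fixed $p_1 \in (0,1)$, the map $p_2 \mapsto \frac{\log(1/p_1)}{\log(1/p_2)}$ is increasing in $p_2$ on $(0,1)$, because $\log(1/p_2)$ is a positive, strictly decreasing function of $p_2$ there. Hence $p_2^{MH-ALSH} \le p_2^{min}$ immediately yields $\rho_{MH-ALSH}^q \le \rho_{min}^q$, as desired. The only genuine pitfall is the bookkeeping of signs: since every probability is in $(0,1)$, the raw numerator and denominator of each $\rho$ are both negative, so a direct cross-multiplication of the two fractions would be error-prone. Recasting $\rho$ through $\log(1/p)$ makes the monotonicity transparent and reduces the entire proof to the one-line implication $M - cS_0 \ge 0 \Rightarrow p_2^{MH-ALSH} \le p_2^{min}$.
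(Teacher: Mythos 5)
Your proposal is correct and follows essentially the same route as the paper: both rest on the single observation that $M - cS_0 \ge 0$ makes the two $\rho$'s share a numerator while $\frac{cS_0}{f_q + M - cS_0} \le \frac{cS_0}{f_q}$, which the paper states as the fact from which the theorem ``immediately follows.'' Your recasting via $\rho = \log(1/p_1)/\log(1/p_2)$ merely makes explicit the sign bookkeeping that the paper leaves implicit.
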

This result theoretically explains why asymmetric minhash is better for retrieval with binary inner products, compared to plain minhash.

\begin{figure}[!h]
\begin{center}
\mbox{
\hspace{-0.1in}\includegraphics[width=3.5in]{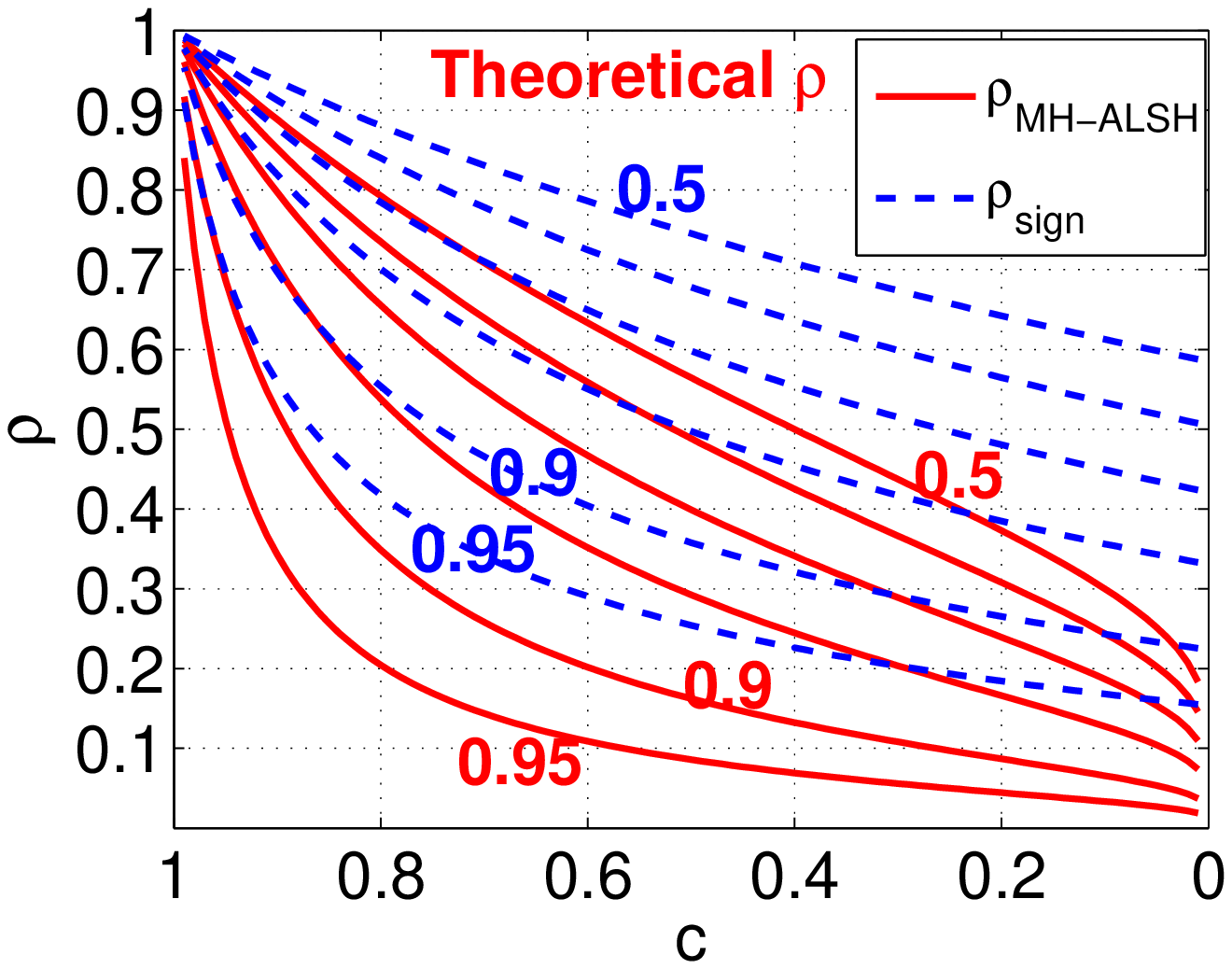}
\hspace{-0.1in}\includegraphics[width=3.5in]{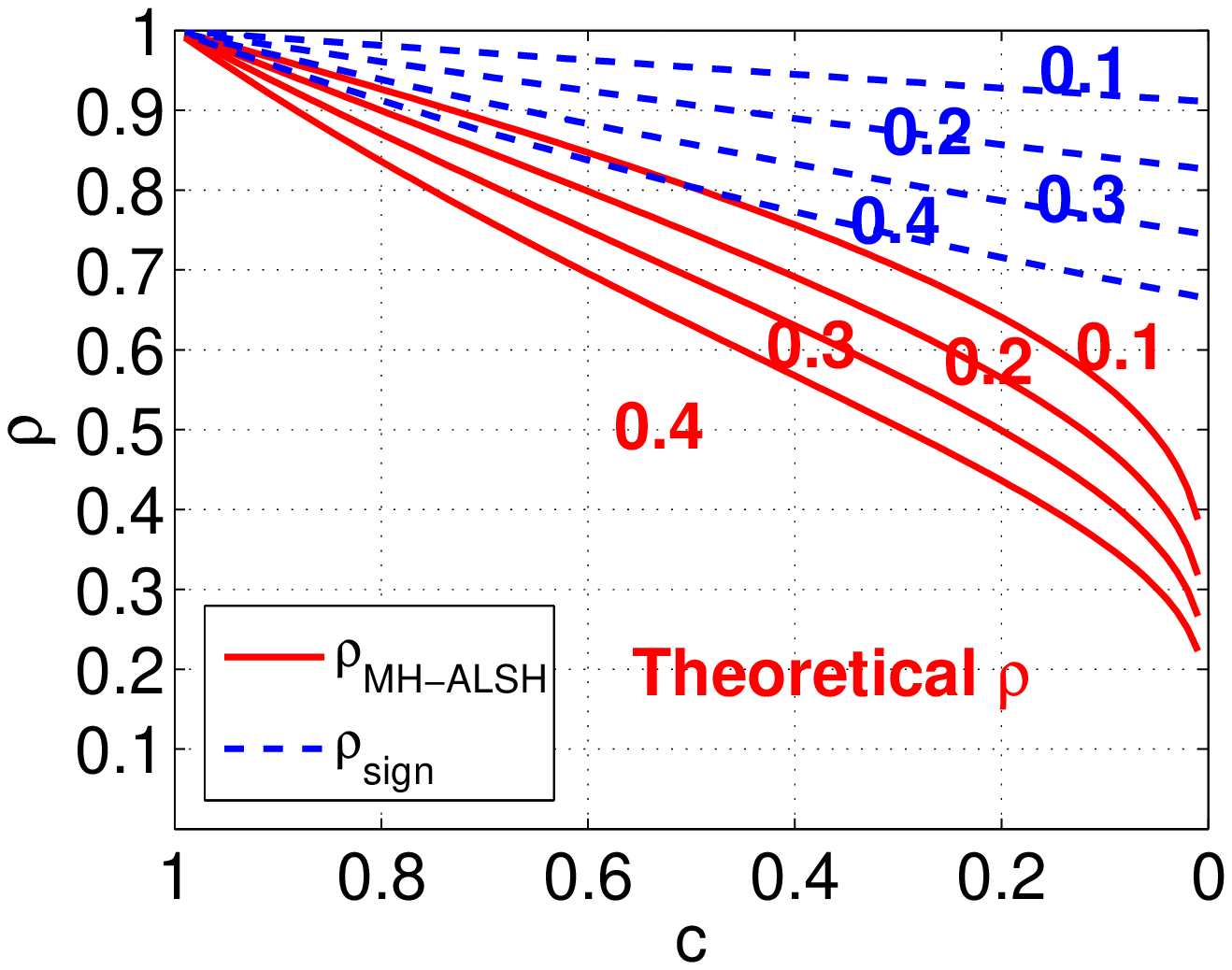}
}
\end{center}
\vspace{-0.2in}
\caption{Values of $\rho_{MH-ALSH}$ and $\rho_{sign}$ (lower is better) with respect to approximation ratio $c$ for different $\frac{S_0}{M}$. The curves show that asymmetric minhash (solid curves) is noticeably better than ALSH based on signed random projection (dashed curves) in terms of their $\rho$ values, irrespective of the choices of $\frac{S_0}{M}$ or $c$. For clarity, the results are  shown in two panels.}\label{fig:Rho_comp}
\end{figure}

For comparing asymmetric minhash with ALSH for general inner products, we compare $\rho_{MH-ALSH}$ with the ALSH for inner products based on signed random projections. Note that it was shown that $Sign-ALSH$ has better theoretical $\rho$ values as compared to L2-ALSH~\cite{Article:Shrivastava_arXiv14}. Therefore, it suffices to show that asymmetric minhash outperforms signed random projection based ALSH.  Both $\rho_{MH-ALSH}$ and $\rho_{sign}$ can be rewritten in terms of ratio $\frac{S_0}{M}$ as follows. Note that for binary data we have $M = \max_{x \in \mathcal{C}}||x||^2 = V^2$
\begin{align}
\rho_{MH-ALSH} = \frac{\log{\frac{S_0/M}{2 - S_0/M}}}{\log{\frac{cS_0/M}{2 - cS_0/M}}}; \hspace{0.1in}\rho_{Sign} =  \frac{\log{\bigg(1-\frac{1}{\pi}\cos^{-1}\bigg(\frac{S_0}{M}\bigg)\bigg)}}{\log{\bigg(1-\frac{1}{\pi}\cos^{-1} \bigg(\frac{cS_0}{M}\bigg)\bigg)}}
\end{align}
Observe that $M$ is also the upper bound on any inner product. Therefore, we have $0 \le \frac{S_0}{M} \le 1$. We plot the values of $\rho_{MH-ALSH} $ and $\rho_{sign}$ for $\frac{S_0}{M} = \{0.1,\ 0.2,..., 0.8, \ 0.9,\ 0.95 \}$ with $c$. The comparison is summarized in Figure~\ref{fig:Rho_comp}. Note that here we use $\rho_{Sign}$ derived from~\cite{Proc:Bachrach_recsys14,Report:Neyshabur_arXiv14} instead of $\rho_{Sign-ALSH}$ for convenience although the two schemes perform essentially identically.

We can clearly see that irrespective of the choice of threshold $\frac{S_0}{M}$ or the approximation ratio $c$, asymmetric minhash outperforms signed random projection based ALSH in terms of the theoretical $\rho$ values. This is not surprising, because it is known that minwise hashing based methods are often significantly powerful for binary data compared to SRP (or simhash)~\cite{Proc:Shrivastava_AISTATS14}. Therefore ALSH based on minwise hashing outperforms ALSH based on SRP as shown by our theoretical comparisons. Our proposal thus leads to an algorithmic improvement over state-of-the-art hashing techniques for retrieving binary inner products.

\section{Evaluations}

In this section, we compare the different hashing schemes on the actual task of retrieving top-ranked elements based on set Jaccard containment. The experiments are divided into two parts. In the first part, we show how the ranking based on various hash functions correlate with the ordering of  Jaccard containment. In the second part, we perform the actual LSH based bucketing experiment for retrieving top-ranked elements and compare the computational saving obtained by various hashing algorithms.

\subsection{ Datasets}

 We chose four publicly available high dimensional sparse datasets: {\em EP2006}\footnote{We downloaded EP2006 from  LIBSVM website. The original name is ``E2006LOG1P'' and we re-name it to ``EP2006''.},  {\em MNIST}, {\em NEWS20}, and {\em NYTIMES}. Except MNIST, the other three are  high dimensional binary ``BoW" representation of the corresponding text corpus. MNIST is an image dataset consisting of 784 pixel image of handwritten digits. Binarized versions of MNIST are commonly used in literature. The pixel values in MNIST were binarized to 0 or 1 values.  For each of the four datasets, we generate two partitions. The bigger partition was used to create hash tables and is referred as the {\bf training partition}. The small partition which we call the {\bf query partition} is used for querying.  The statistics of these datasets are summarized in Table~\ref{tab_data}. The datasets cover a wide spectrum of sparsity and dimensionality.

\begin{table}[h!]
\vspace{-0.1in}
\caption{Datasets}
\begin{center}{\small
{\begin{tabular}{l r r r c}
\hline \hline
Dataset        &\# Query    &\# Train  &\# Dim & nonzeros (mean $\pm$ std) \\
\hline
EP2006      &2,000 &17,395 & 4,272,227 & 6072 $\pm$ 3208\\
MNIST      &2,000 &68,000 &784  & 150 $\pm$ 41 \\
NEWS20     &2,000  &18,000 &1,355,191 &454 $\pm$ 654 \\
NYTIMES    &2,000 &100,000 &102,660 &232 $\pm$ 114
\\\hline\hline
\end{tabular}}
}
\end{center}\label{tab_data}\vspace{-0.2in}
\end{table}

\subsection{Competing Hash Functions}
We consider the following hash functions for evaluations:
\begin{enumerate}
\item {\bf Asymmetric minwise hashing (Proposed):} This is our proposal, the asymmetric minhash described in Section~\ref{sec:Amin}.

\item {\bf Traditional minwise hashing (MinHash):} This is the usual minwise hashing, the  popular heuristic described in Section~\ref{sec:minhash}. This is a symmetric hash function, we use $h_{\pi}$ as define in Eq.(\ref{eq:min}) for both query and the training set.

\item {\bf L2 based Asymmetric LSH for Inner products (L2-ALSH):} This is the asymmetric LSH of~\cite{Proc:Shrivastava_NIPS14} for general inner products based on LSH for L2 distance.

\item {\bf SRP based Asymmetric LSH for Inner Products (Sign-ALSH):} This is the asymmetric hash function of~\cite{Article:Shrivastava_arXiv14} for general inner products based on SRP.
\end{enumerate}

\subsection{Ranking Experiment: Hash Quality Evaluations}

We are interested in knowing, how the orderings under different competing hash functions correlate with the ordering of the underlying similarity measure which in this case is the Jaccard containment.  For this task, given a query $q$ vector, we compute the top-100 gold standard elements from the training set based on the Jaccard containment $\frac{a}{f_q}$. Note that this is the same as the top-100 elements based on binary inner products.  Give a query $q$, we compute $K$ different hash codes of the vector $q$ and all the vectors in the training set.  We then compute the number of times the hash values of a vector $x$ in the training set matches (or collides) with the hash values of query $q$ defined by
\begin{equation}
Matches_x = \sum_{t=1}^{K} {\bf 1}(h_t(q) = h_t(x)),
\end{equation}
where ${\bf 1}$ is the indicator function.  $t$ subscript is used to distinguish independent draws of the underlying hash function. Based on $Matches_x$ we rank all elements in the training set. This procedure generates a sorted list for every query for every hash function. For asymmetric hash functions, in computing total collisions, on the query vector we use the corresponding $Q$ function (query transformation)  followed by underlying hash function, while for elements in the training set we use the $P$ function (preprocessing transformation) followed by the corresponding hash function.

We compute the precision and the recall of the top-100 gold standard elements in the ranked list generated by different hash functions.  To compute precision and recall, we start at the top of the ranked item list and walk down in order, suppose we are at the $p^{th}$ ranked element, we check if this element belongs to the gold standard top-100 list. If it is one of the top 100 gold standard elements, then we increment the count of \emph{relevant seen} by 1, else we move to $p+1$. By $p^{th}$ step, we have already seen $p$ elements, so the \emph{total elements seen} is $p$. The precision and recall at that point is then computed as:
\begin{align}
 Precision = \frac{\text{relevant seen}}{p}, \hspace{0.2in}
Recall = \frac{\text{relevant seen}}{100}
\end{align}
It is important to balance both.  Methodology which obtains higher precision at a given recall is superior. Higher precision indicates higher ranking of the relevant items.
We finally average these values of precision and recall over all elements in the query set. The results for $K\in \{32, \ 64, \ 128\}$  are summarized in Figure~\ref{fig:hashquality}.

We can clearly see, that the proposed hashing scheme always achieves better, often significantly, precision at any given recall compared to other hash functions. The two ALSH schemes are usually always better than traditional minwise hashing. This confirms that fact that ranking based on collisions under minwise hashing can be different from the rankings under Jaccard containment or inner products. This is expected, because minwise hashing in addition penalizes the number of nonzeros leading to a ranking very different from the ranking of inner products. Sign-ALSH usually performs better than L2-LSH, this is in line with the results obtained in~\cite{Article:Shrivastava_arXiv14}.\\

\begin{figure*}[!ht]
\begin{center}

\mbox{
\includegraphics[width=2.2in]{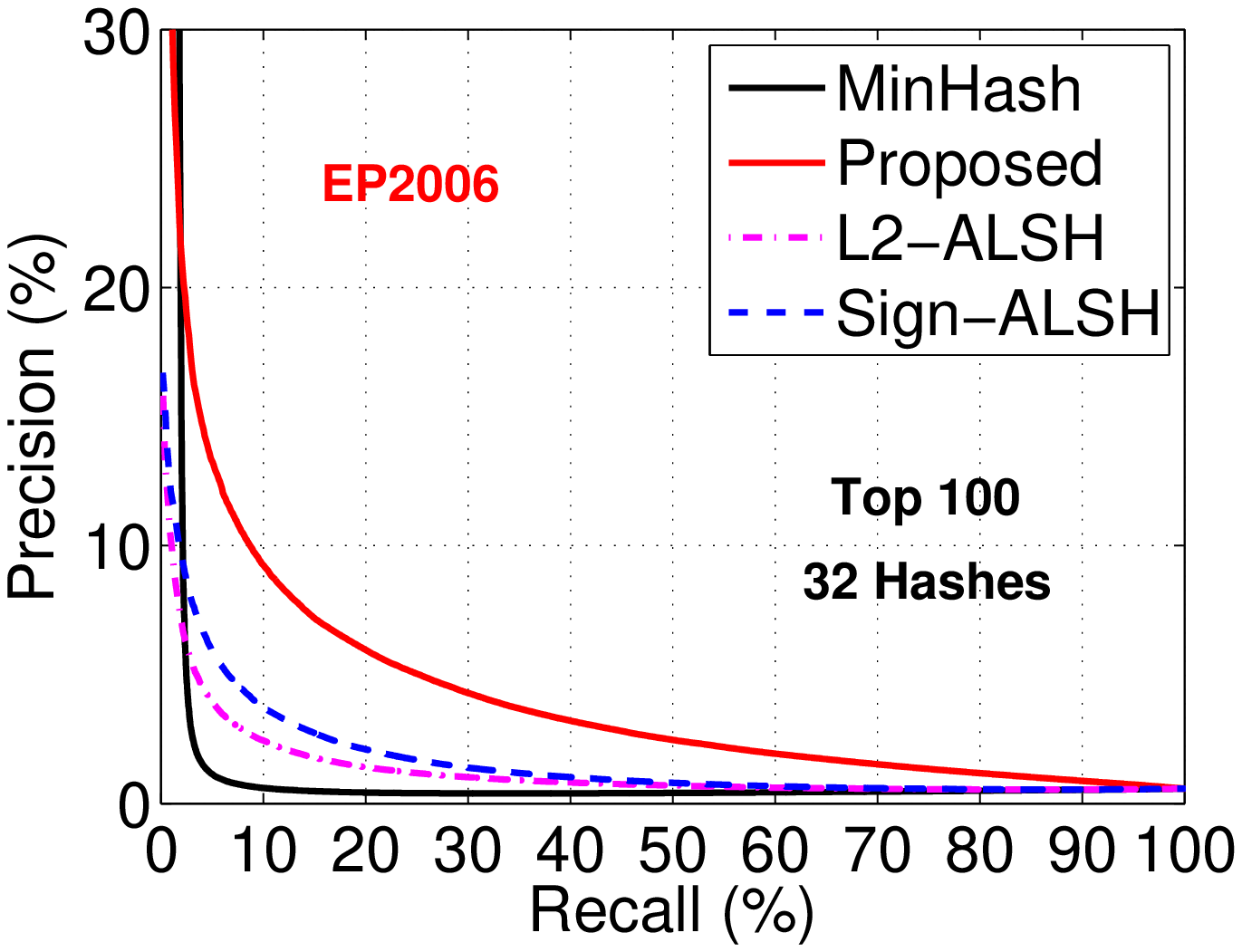}\hspace{-0.13in}
\includegraphics[width=2.2in]{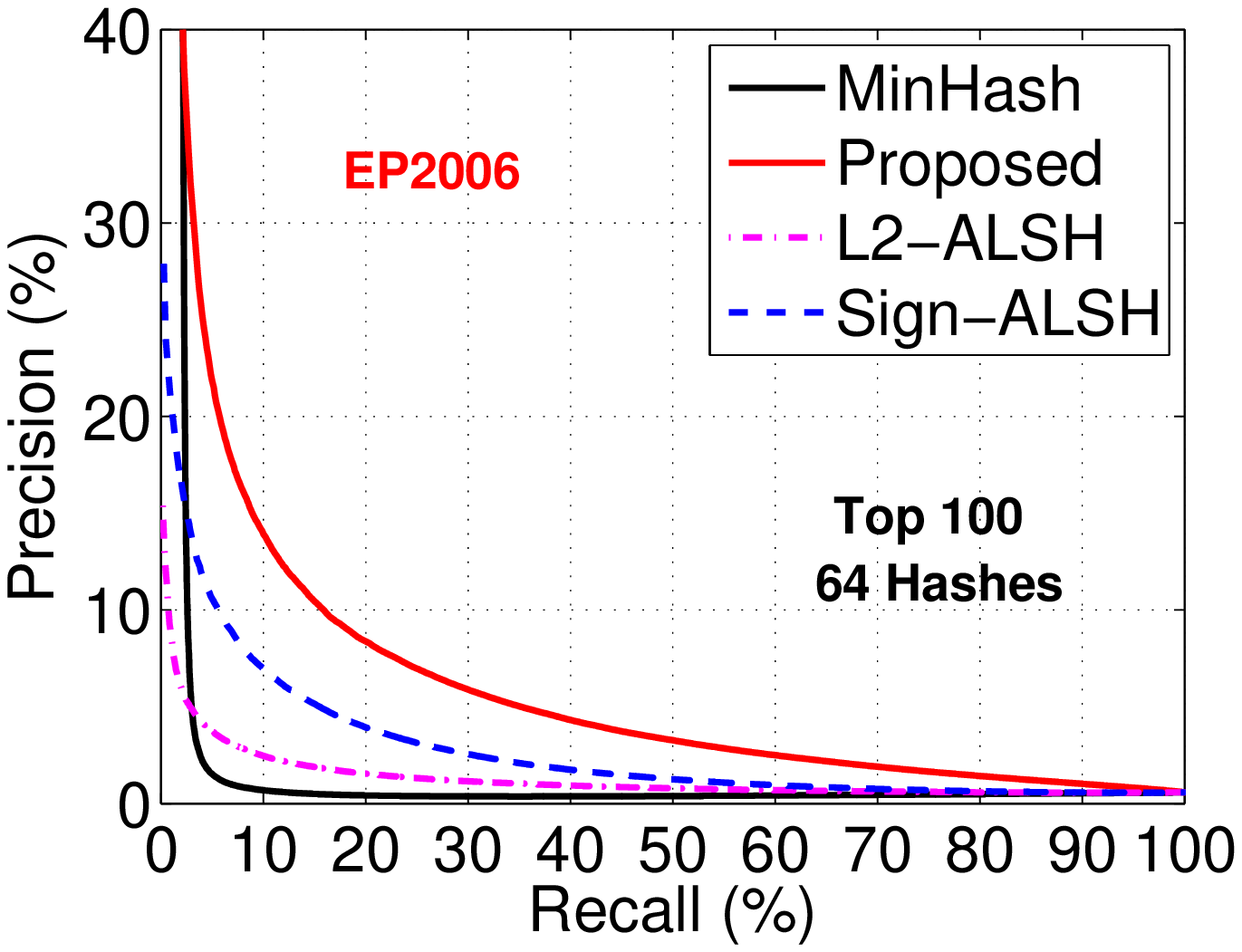}\hspace{-0.13in}
\includegraphics[width=2.2in]{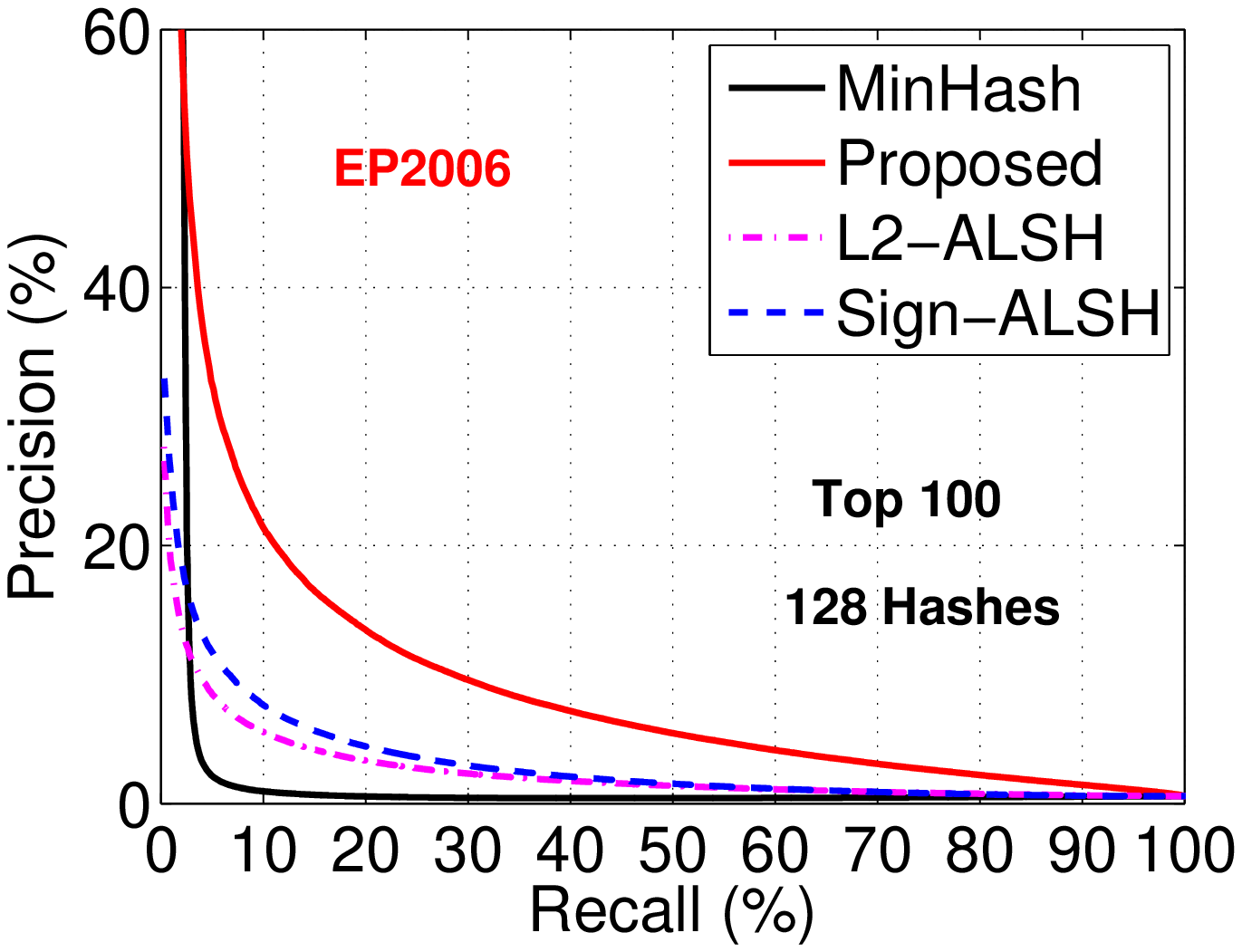}}
\mbox{
\includegraphics[width=2.2in]{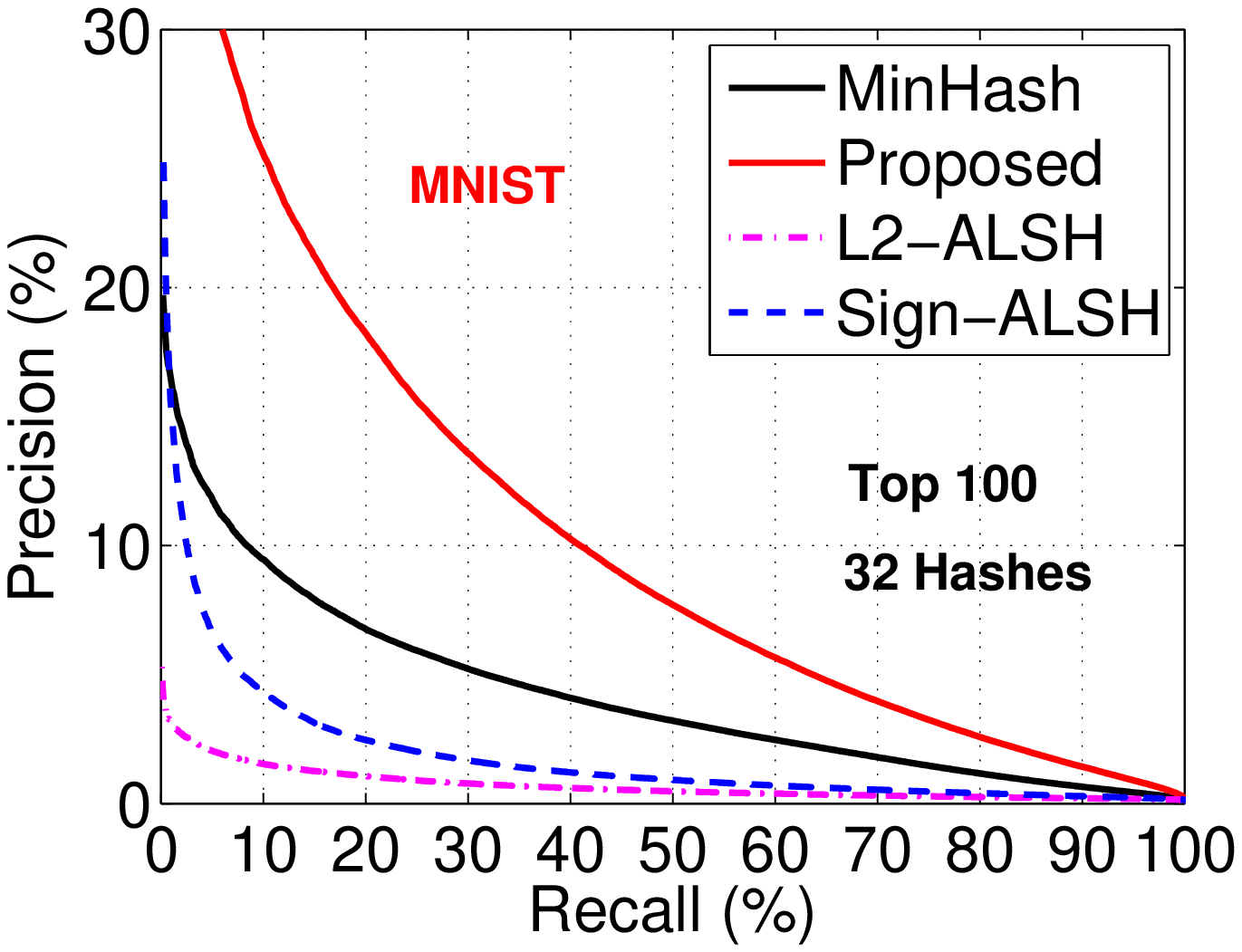}\hspace{-0.13in}
\includegraphics[width=2.2in]{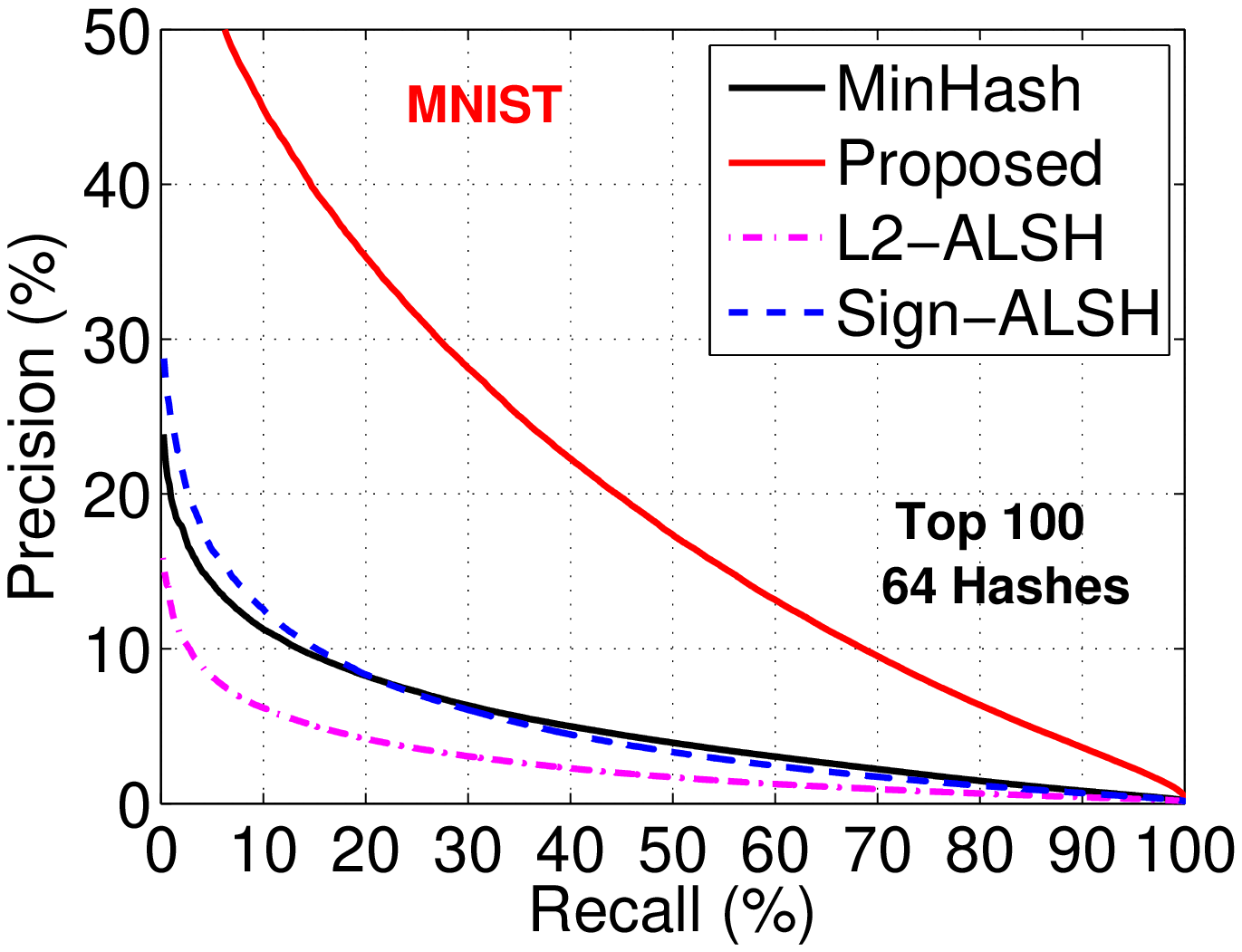}\hspace{-0.13in}
\includegraphics[width=2.2in]{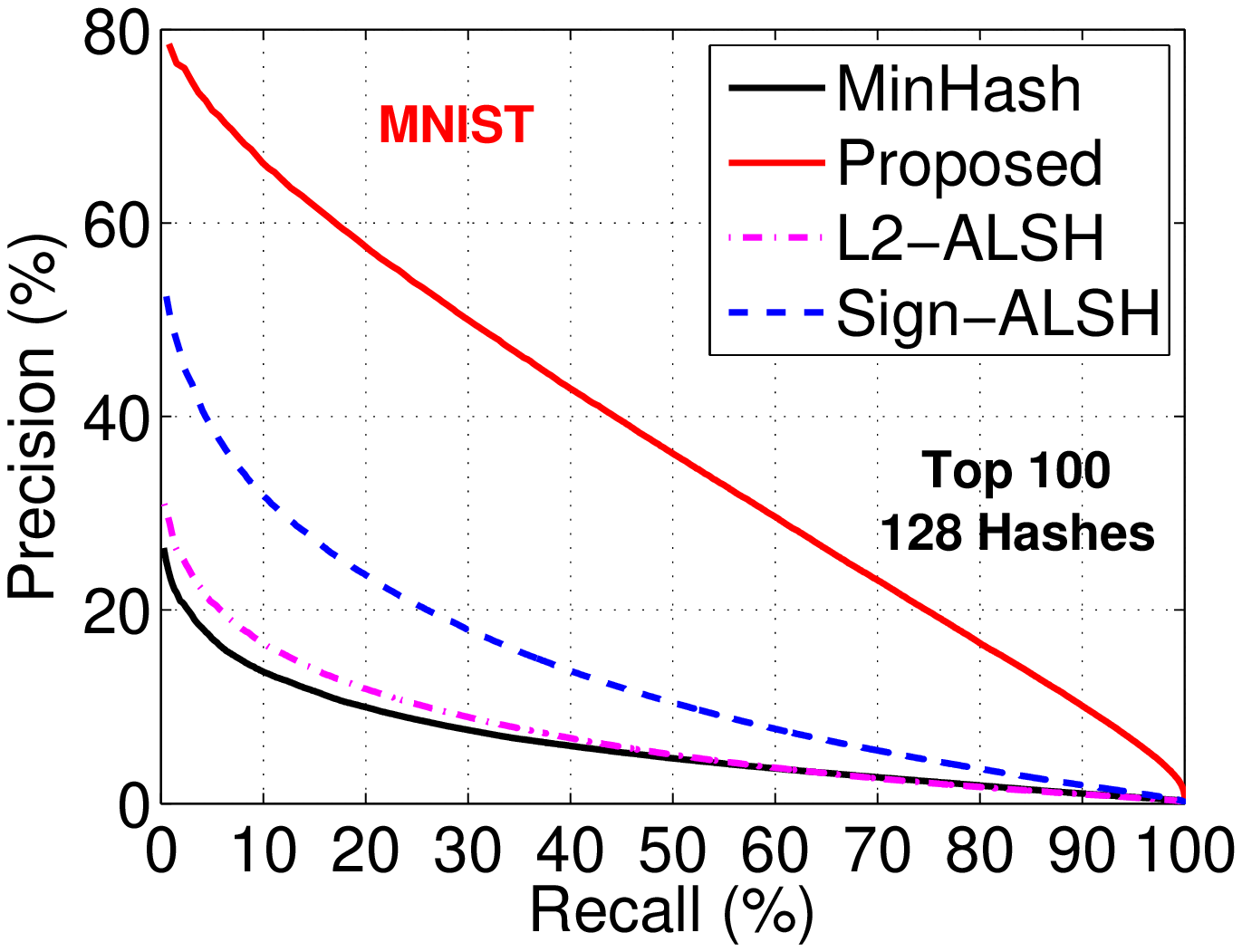}}
\mbox{
\includegraphics[width=2.2in]{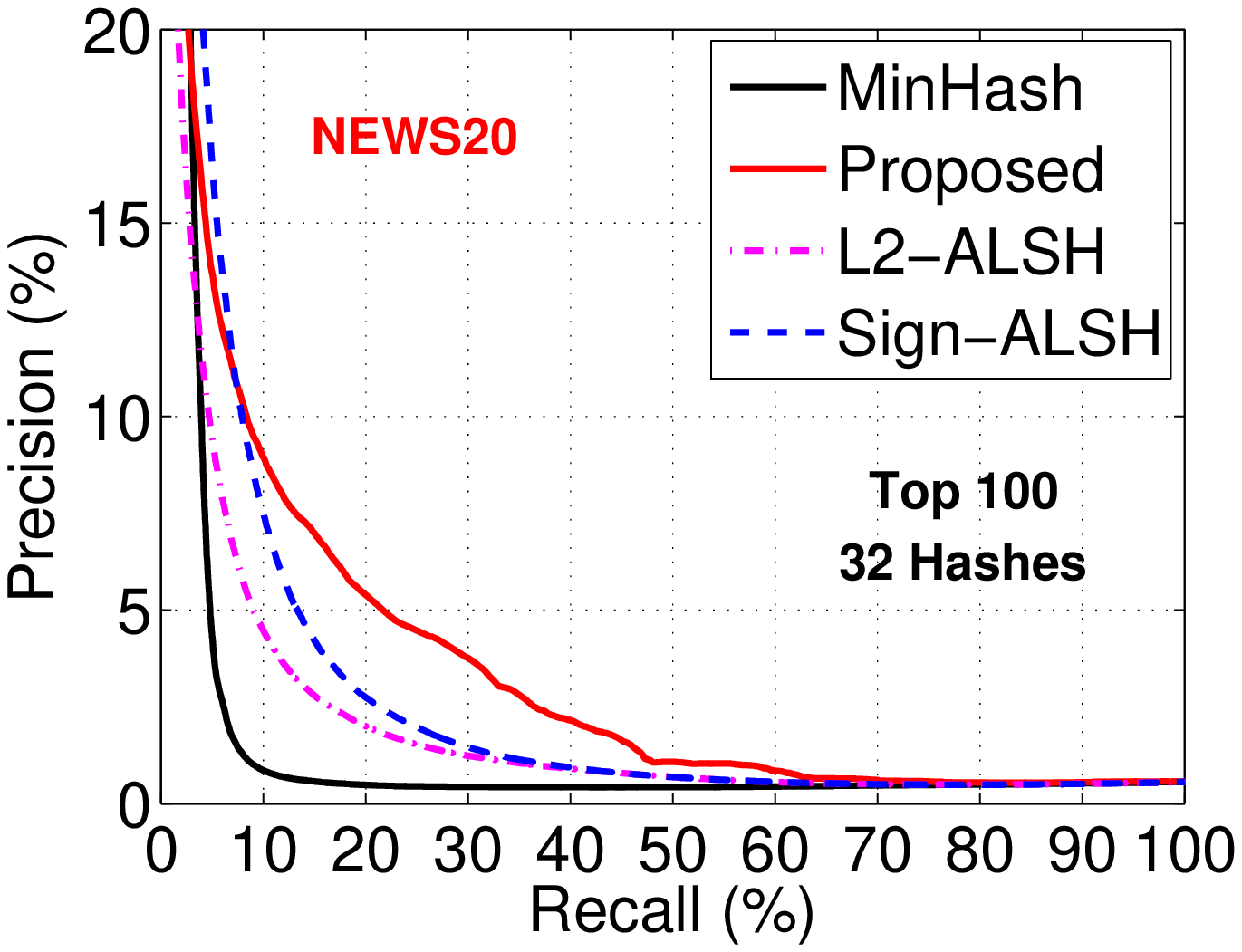}\hspace{-0.13in}
\includegraphics[width=2.2in]{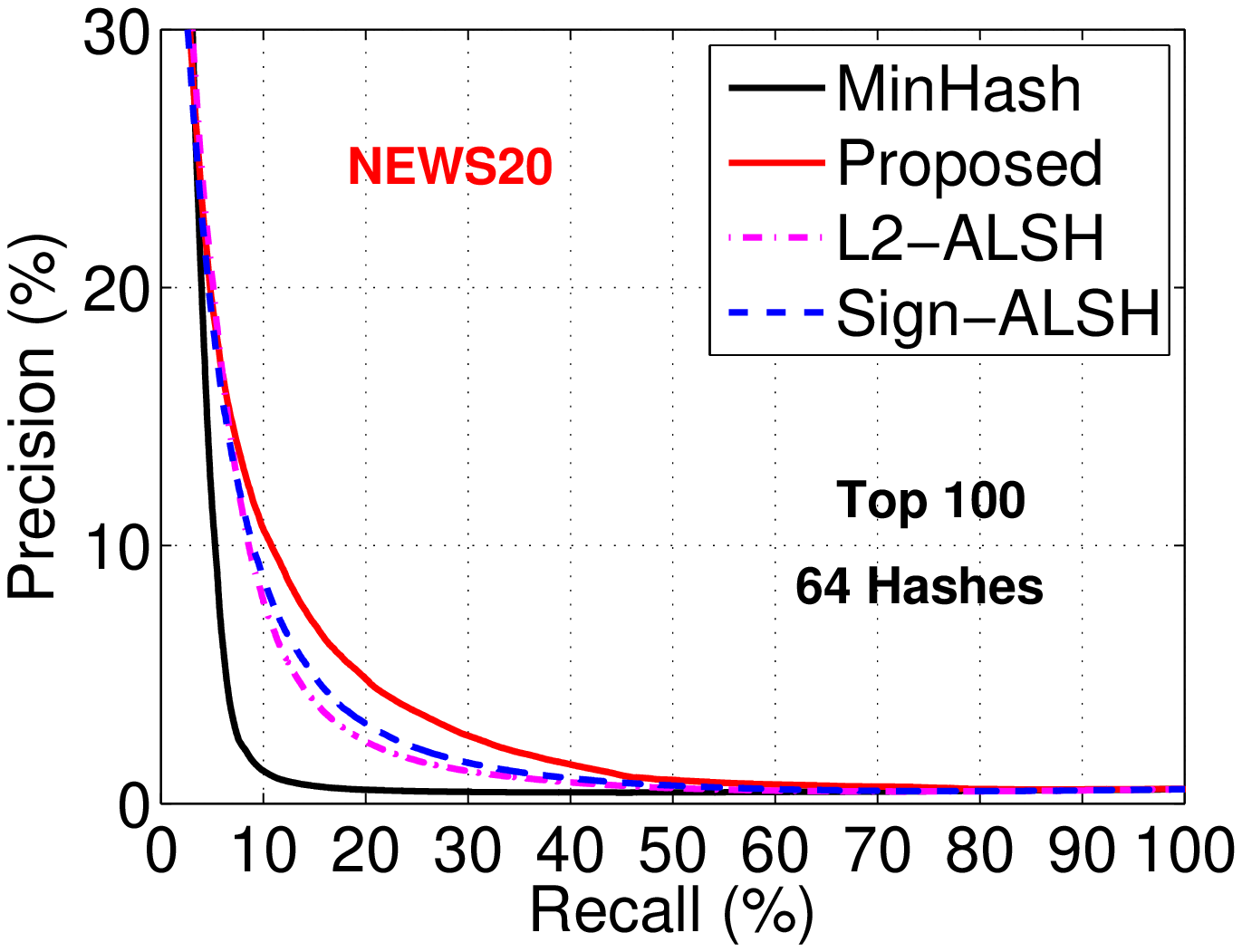}\hspace{-0.13in}
\includegraphics[width=2.2in]{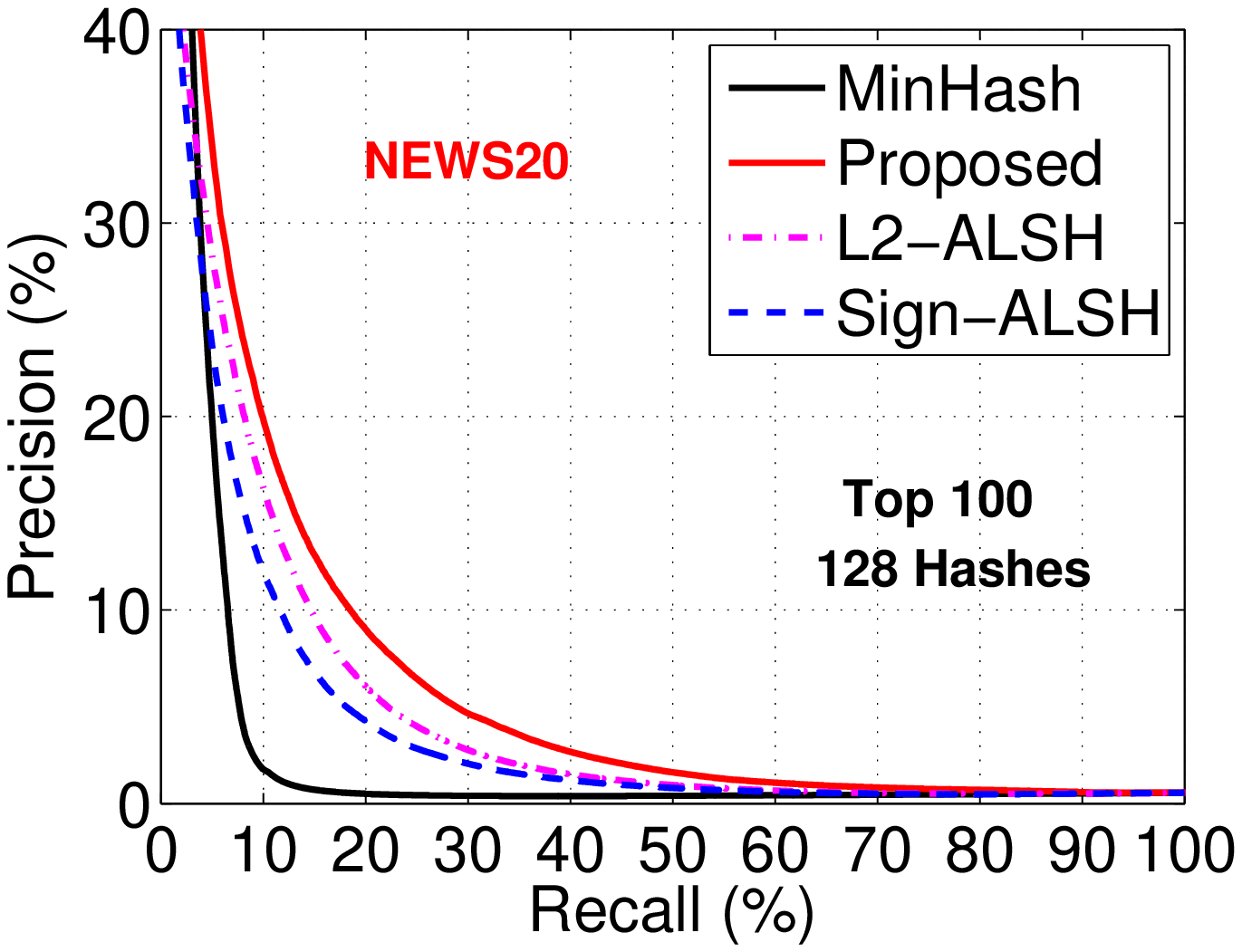}}

\mbox{
\includegraphics[width=2.2in]{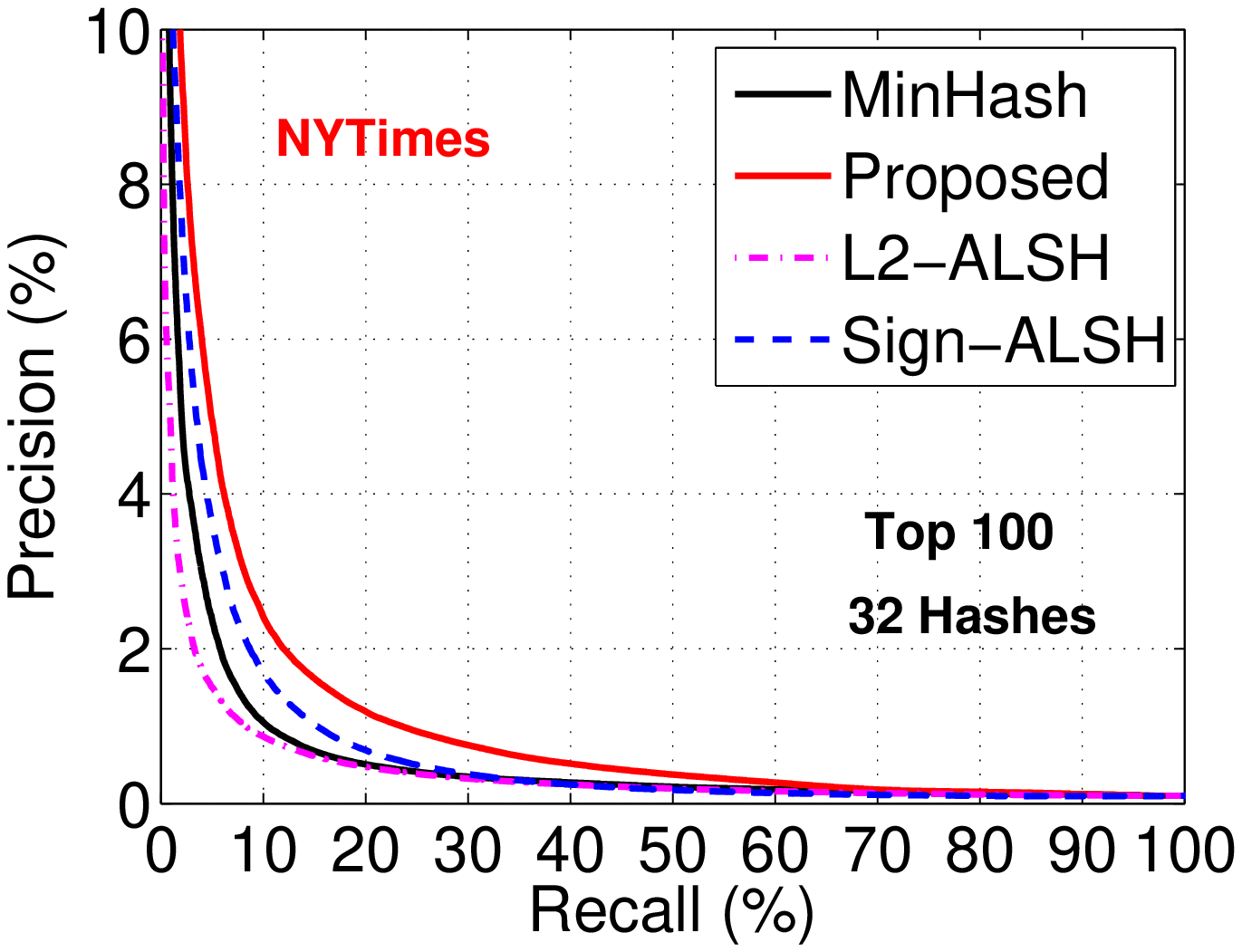}\hspace{-0.13in}
\includegraphics[width=2.2in]{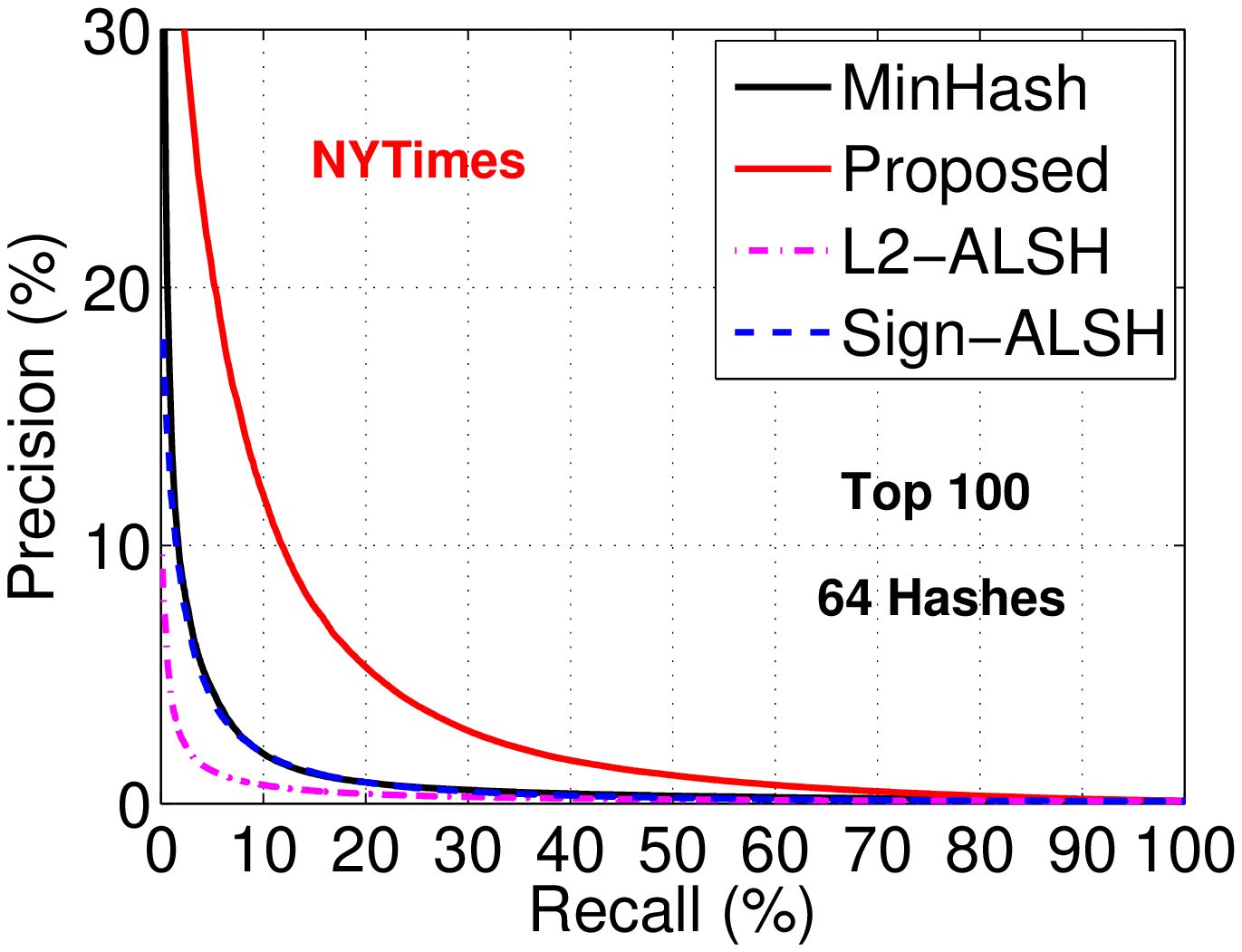}\hspace{-0.13in}
\includegraphics[width=2.2in]{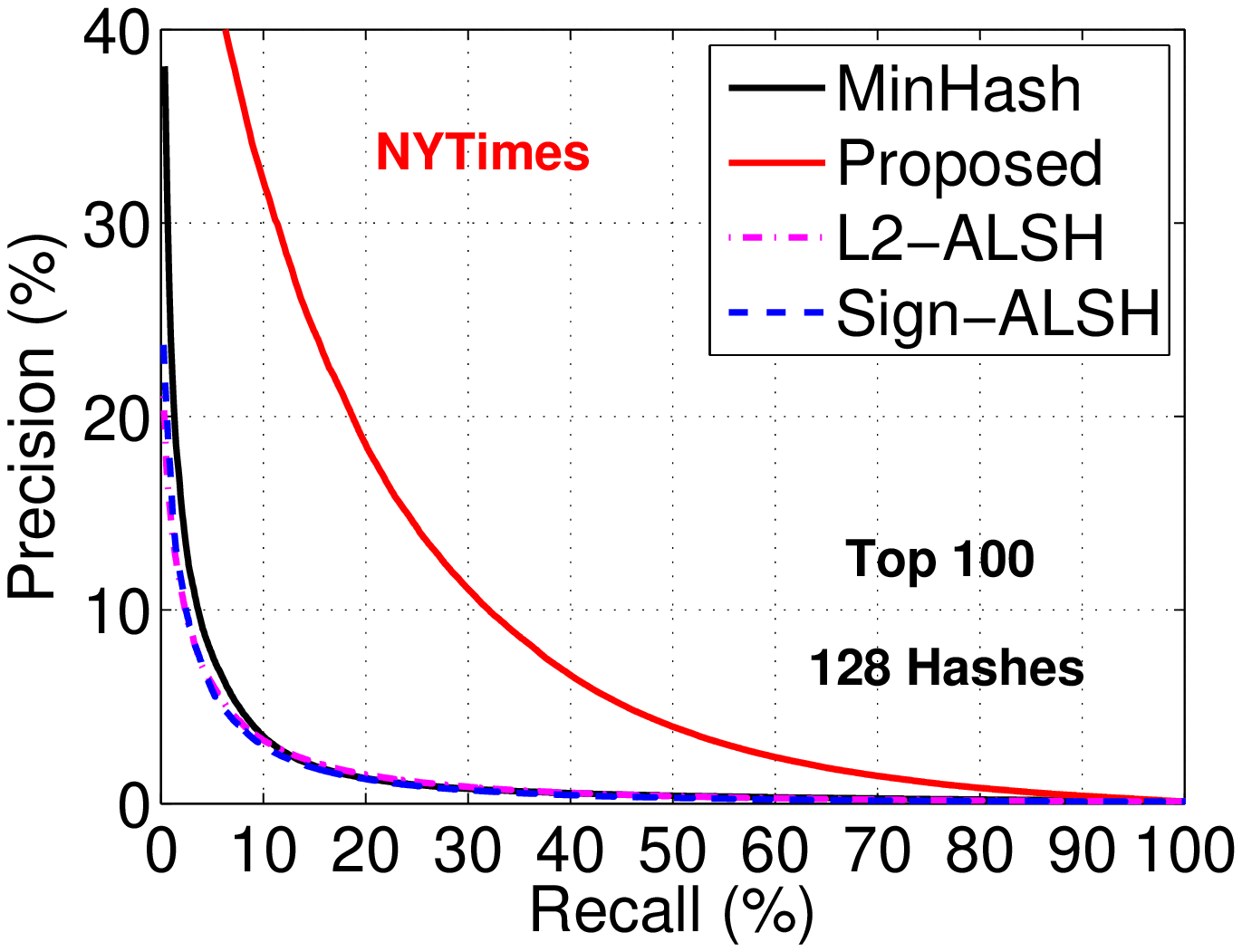}}

\end{center}
\vspace{-0.2in}
\caption{\textbf{Ranking Experiments}. \ Precision Vs Recall curves for retrieving top-100 items, for different hashing schemes on 4 chosen datasets. The precision and the recall were computed based on the rankings obtained by different hash functions using 32, 64 and 128 independent hash evaluations. Higher precision at a given recall is better.}\label{fig:hashquality}
\end{figure*}

\newpage\clearpage
It should be noted that ranking experiments only validate the monotonicity of the collision probability. Although, better ranking is definitely a very good indicator of good hash function, it does not always mean that we will achieve faster sub-linear LSH algorithm. For bucketing the probability sensitivity around a particular threshold is the most important factor, see~\cite{Book:Raj_Ullman} for more details. What matters is the \textbf{gap} between the collision probability of good and the bad points. In the next subsection, we compare these schemes on the actual task of near neighbor retrieval with Jaccard containment.

\subsection{Bucketing Experiment: Computational Savings in Near Neighbor Retrieval}

In this section, we evaluate the four hashing schemes on the standard $(K,L)$-parameterized bucketing algorithm~\cite{Report:E2LSH} for sub-linear time retrieval of near neighbors based on Jaccard containment.  In $(K,L)$-parameterized LSH algorithm, we generate $L$ different meta-hash functions. Each of these meta-hash functions is formed by concatenating $K$ different hash values as
\begin{equation}
\label{eq:bucket}
B_j(x)  = [h_{j1}(x);h_{j2}(x);...;h_{{jK}}(x)],
\end{equation}
 where $h_{ij}, i \in \{1,2,...,K \}$ and $j \in  \{1,2,...,L \}$, are $KL$ different independent evaluations of the hash function under consideration. Different competing scheme uses its own underlying randomized hash function $h$.\\

In general, the $(K,L)$-parameterized LSH works in two phases:
\begin{enumerate}[i)]
\item {\bf Preprocessing Phase:} We construct $L$ hash tables from the data by storing element $x$, in the training set, at location $B_j(P(x))$ in the hash-table $j$. Note that for vanilla minhash which is a symmetric hashing scheme $P(x) = x$. For other asymmetric schemes, we use their corresponding $P$ functions. Preprocessing is a one time operation, once the hash tables are created they are fixed.
\item {\bf Query Phase:} Given a query $q$, we report the union of all the points in the buckets $B_j(Q(q))$ $\forall j \in \{1,2,...,L\}$, where the union is over $L$ hash tables. Again here $Q$ is the corresponding $Q$ function of the asymmetric hashing scheme, for  minhash $Q(x) = x$.
\end{enumerate}

Typically, the performance of a bucketing algorithm is  sensitive to the choice of  parameters $K$ and $L$. Ideally, to find best $K$ and $L$, we need to know the operating threshold $S_0$ and the approximation ratio $c$ in advance. Unfortunately, the data and the queries are very diverse and therefore for retrieving top-ranked near neighbors there are no common fixed threshold $S_0$ and approximation ratio $c$ that work for all the  queries.\\

Our objective is to compare the four hashing schemes and minimize the effect of $K$ and $L$, if any, on the evaluations. This is achieved by finding best $K$ and $L$ at every recall level.  We run the bucketing experiment for all combinations of $K \in \{1,2,3,...40\}$ and $L \in \{1,2,3,...,400\}$ for all the four hash functions independently. These choices include the recommended optimal combinations at various thresholds.   We then compute, for every $K$ and $L$, the mean recall of Top-$T$  pairs and the mean number of points reported, per query, to achieve that recall. The best $K$ and $L$ at every recall level is chosen independently for different $T$s. The plot of the mean fraction of points scanned with respect to the recall of top-$T$ gold standard near neighbors, where $T \in \{5, 10, 20, 50\}$,  is summarized in Figure~\ref{fig:topk}.

\begin{figure*}[!ht]
\begin{center}

\mbox{
\includegraphics[width=1.8in]{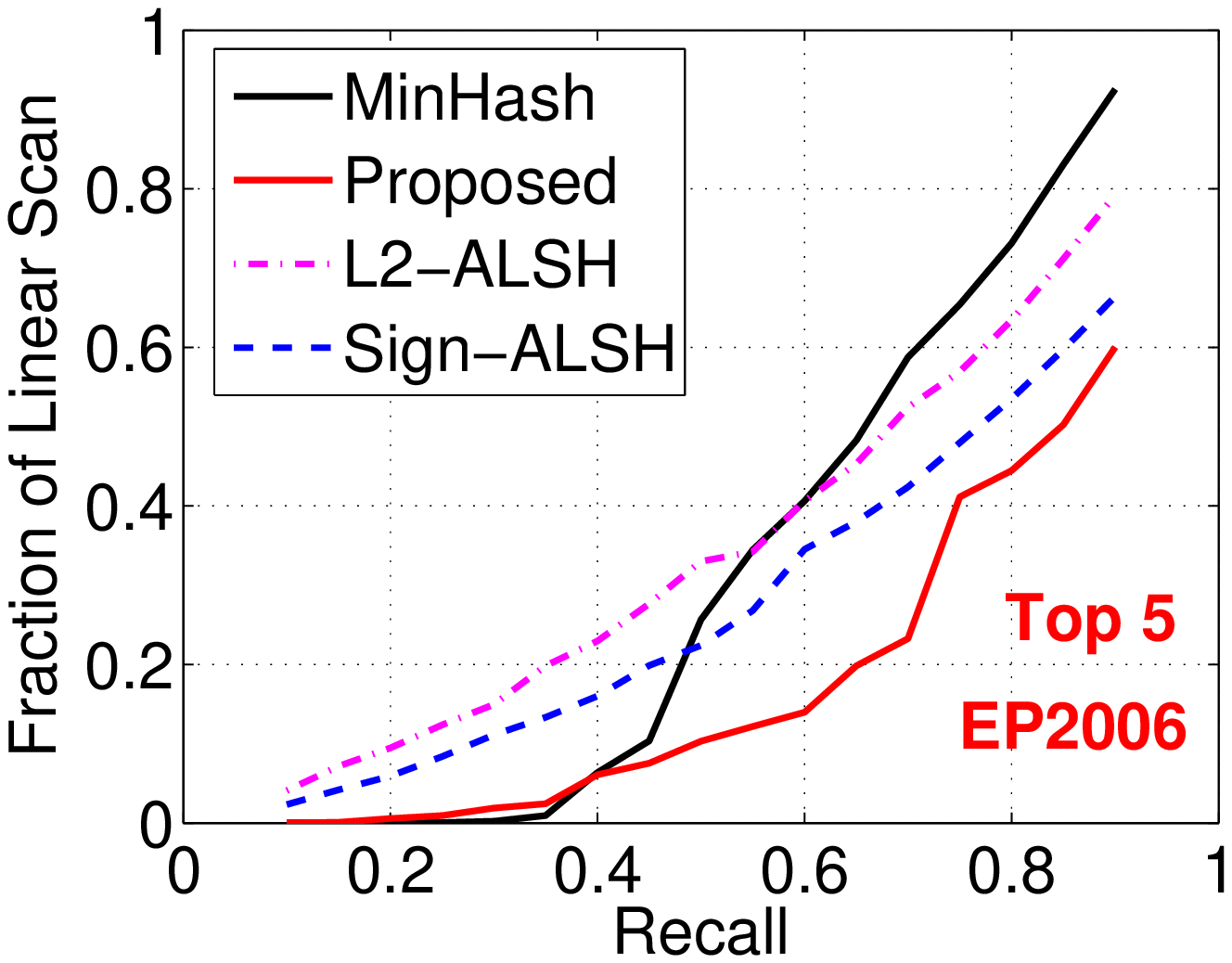}\hspace{-0.13in}
\includegraphics[width=1.8in]{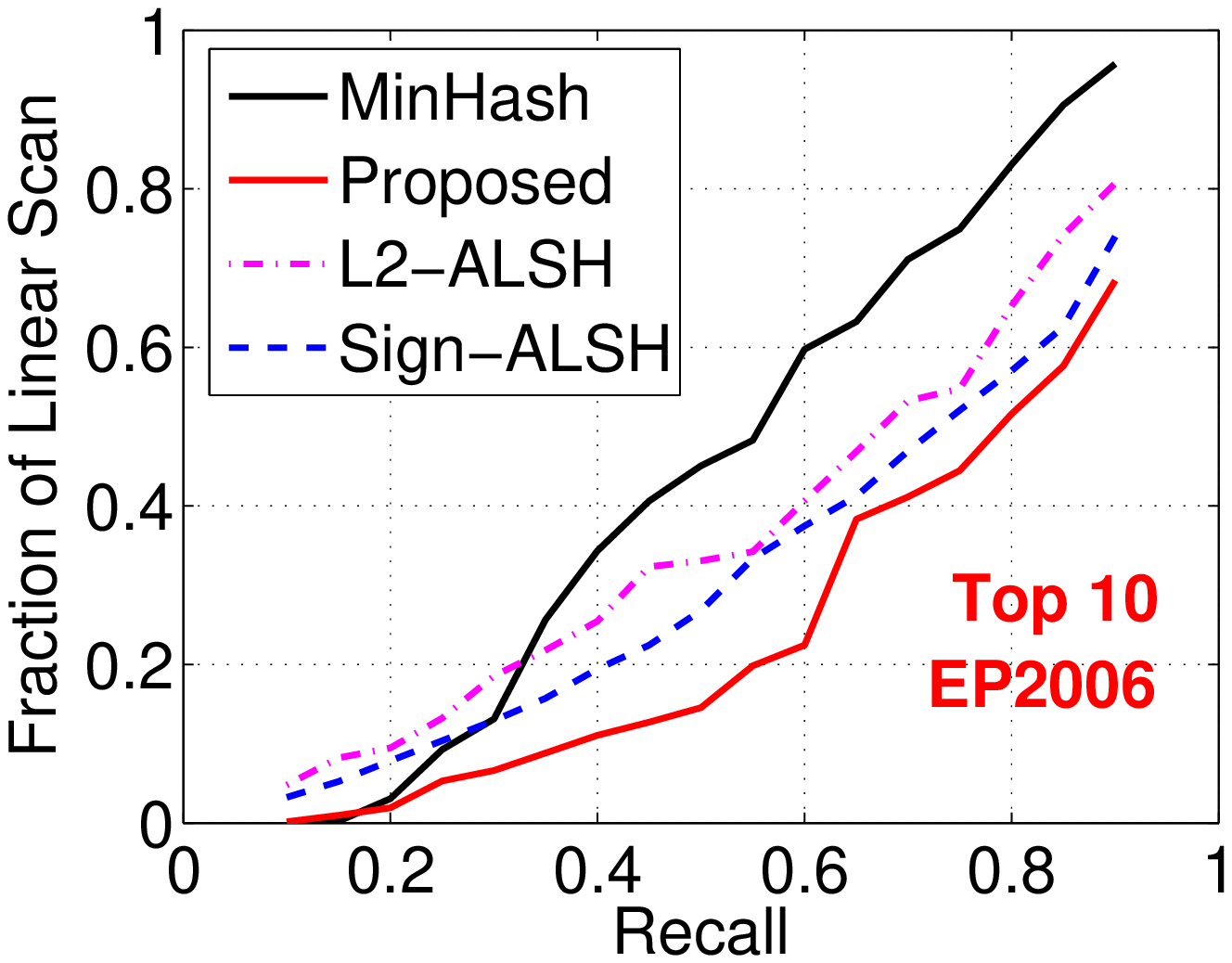}\hspace{-0.13in}
\includegraphics[width=1.8in]{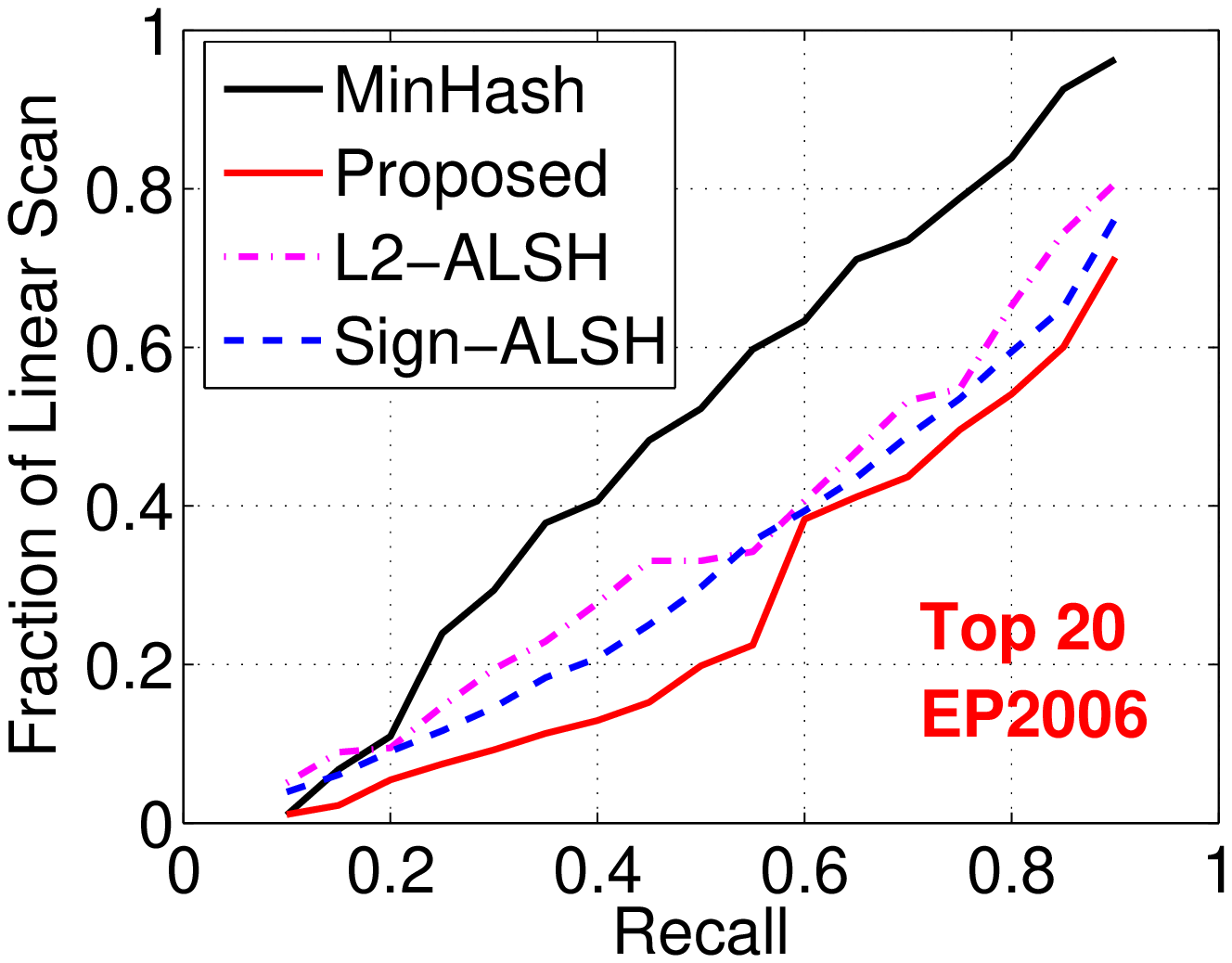}\hspace{-0.13in}
\includegraphics[width=1.8in]{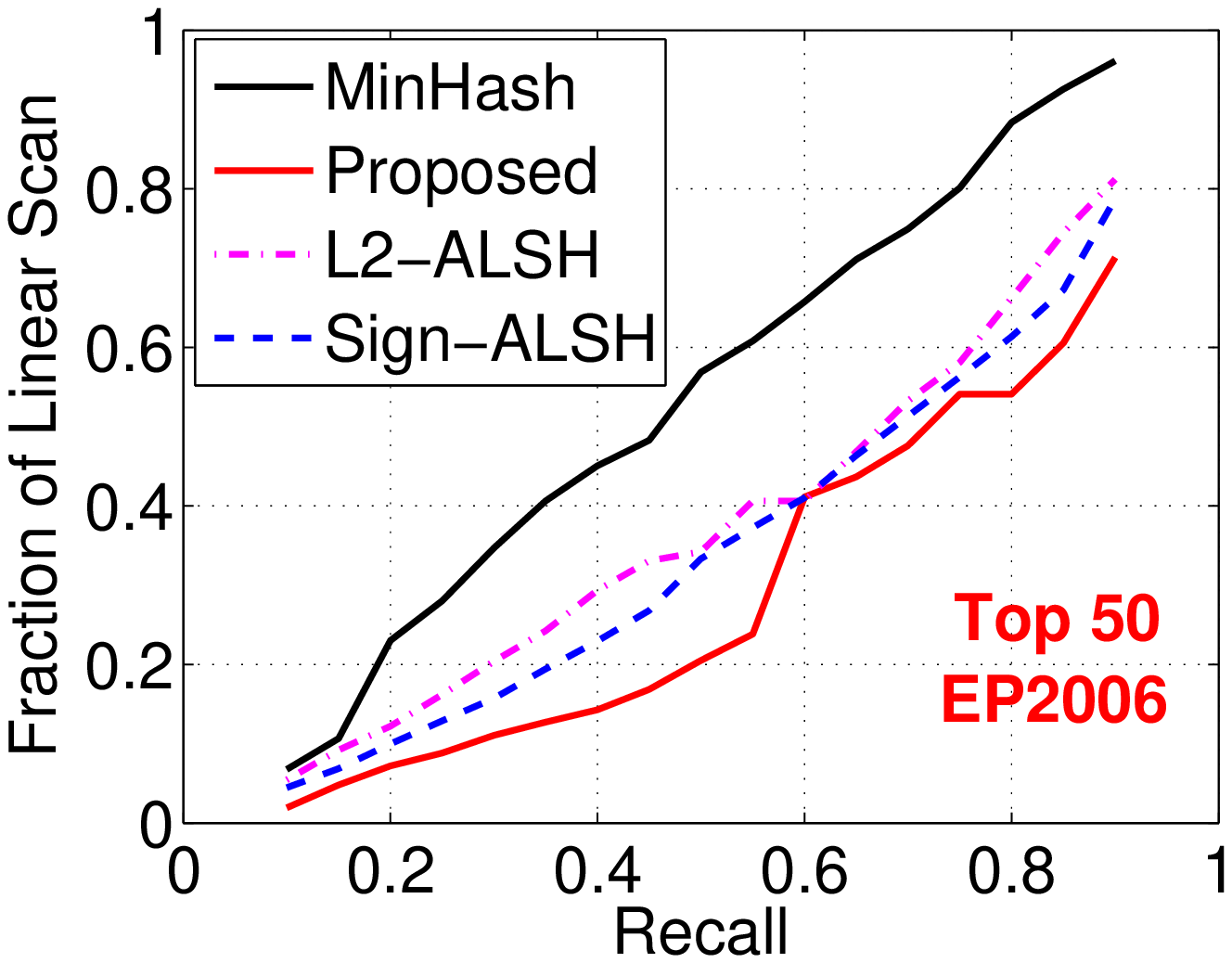}
}
\mbox{
\includegraphics[width=1.8in]{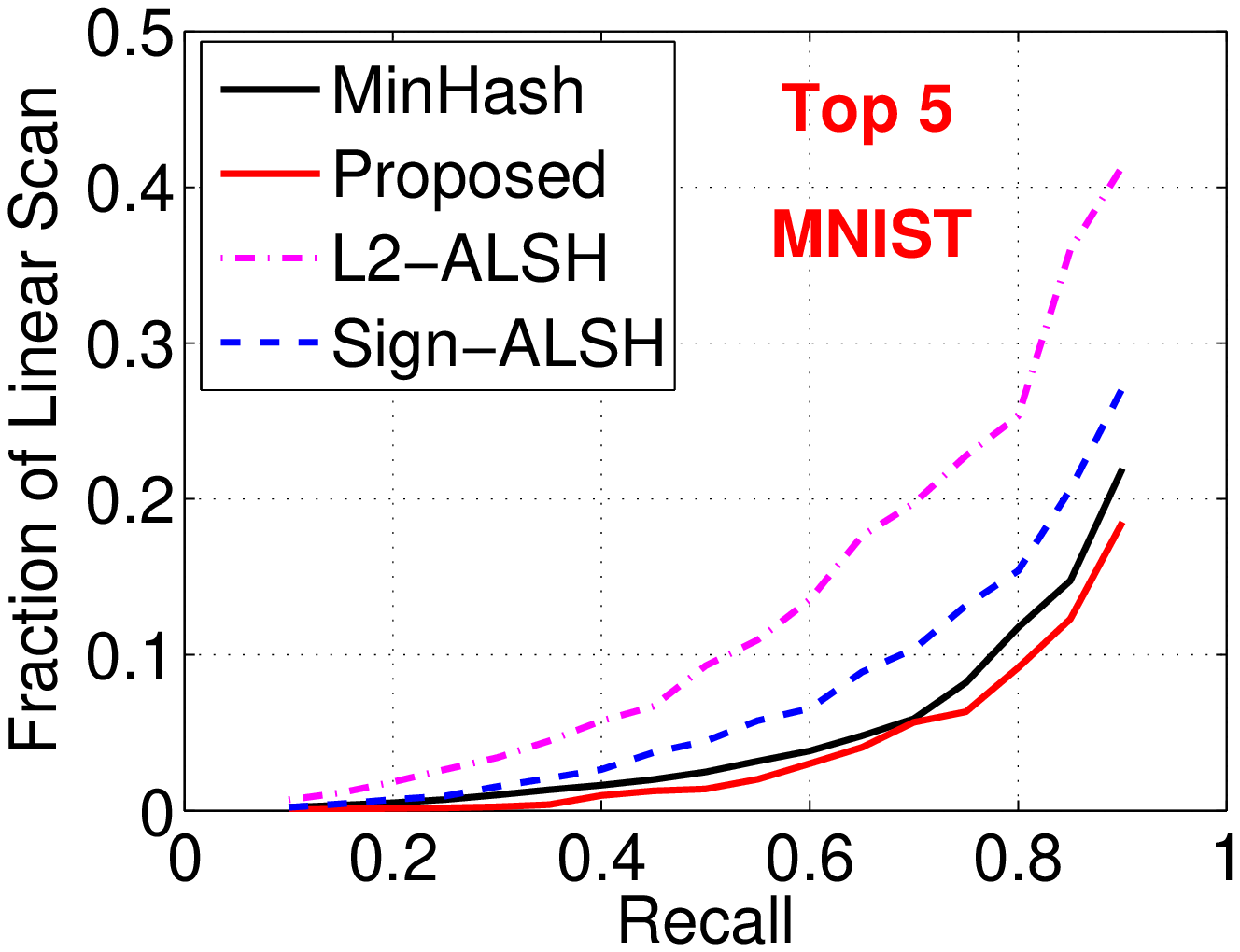}\hspace{-0.13in}
\includegraphics[width=1.8in]{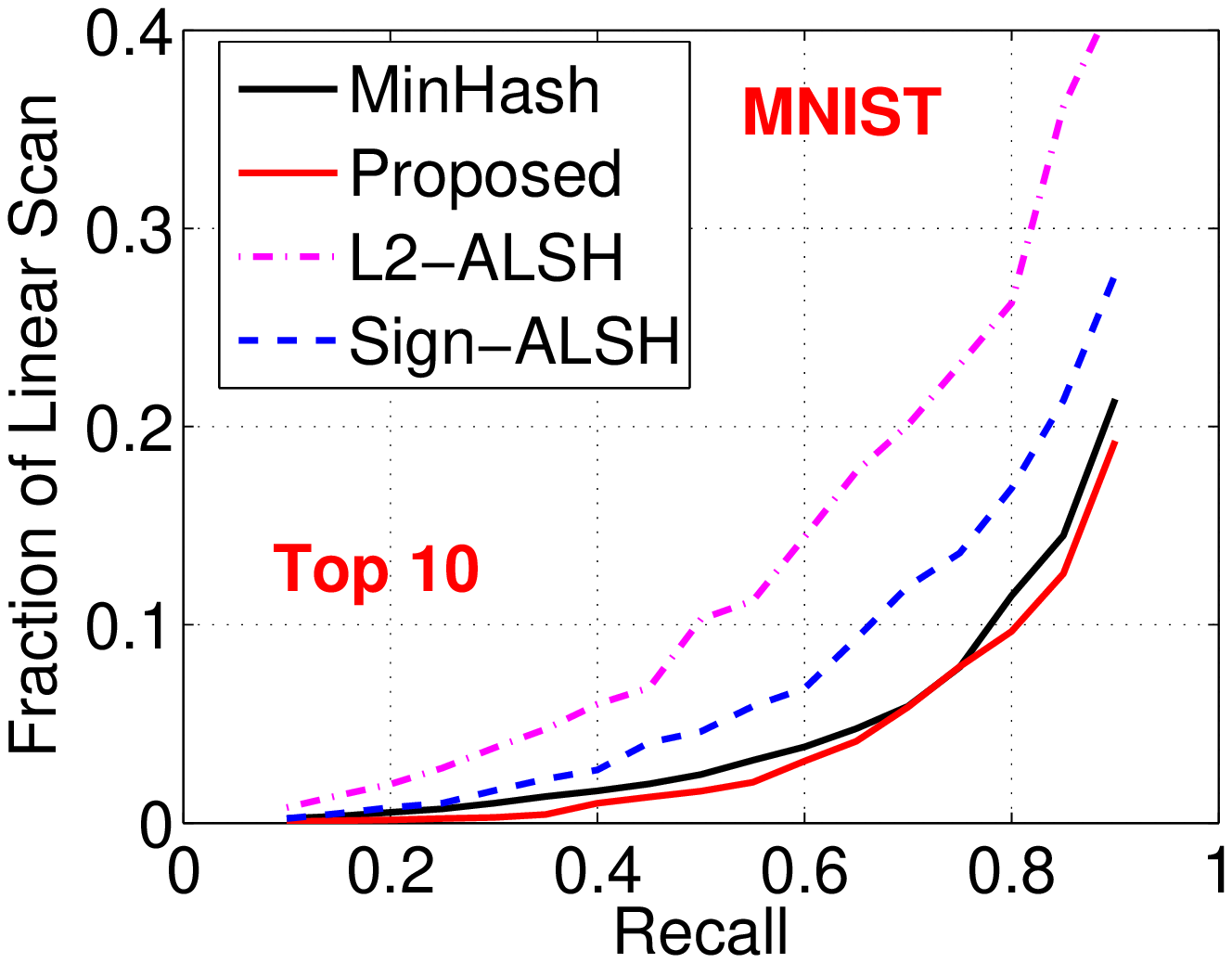}\hspace{-0.13in}
\includegraphics[width=1.8in]{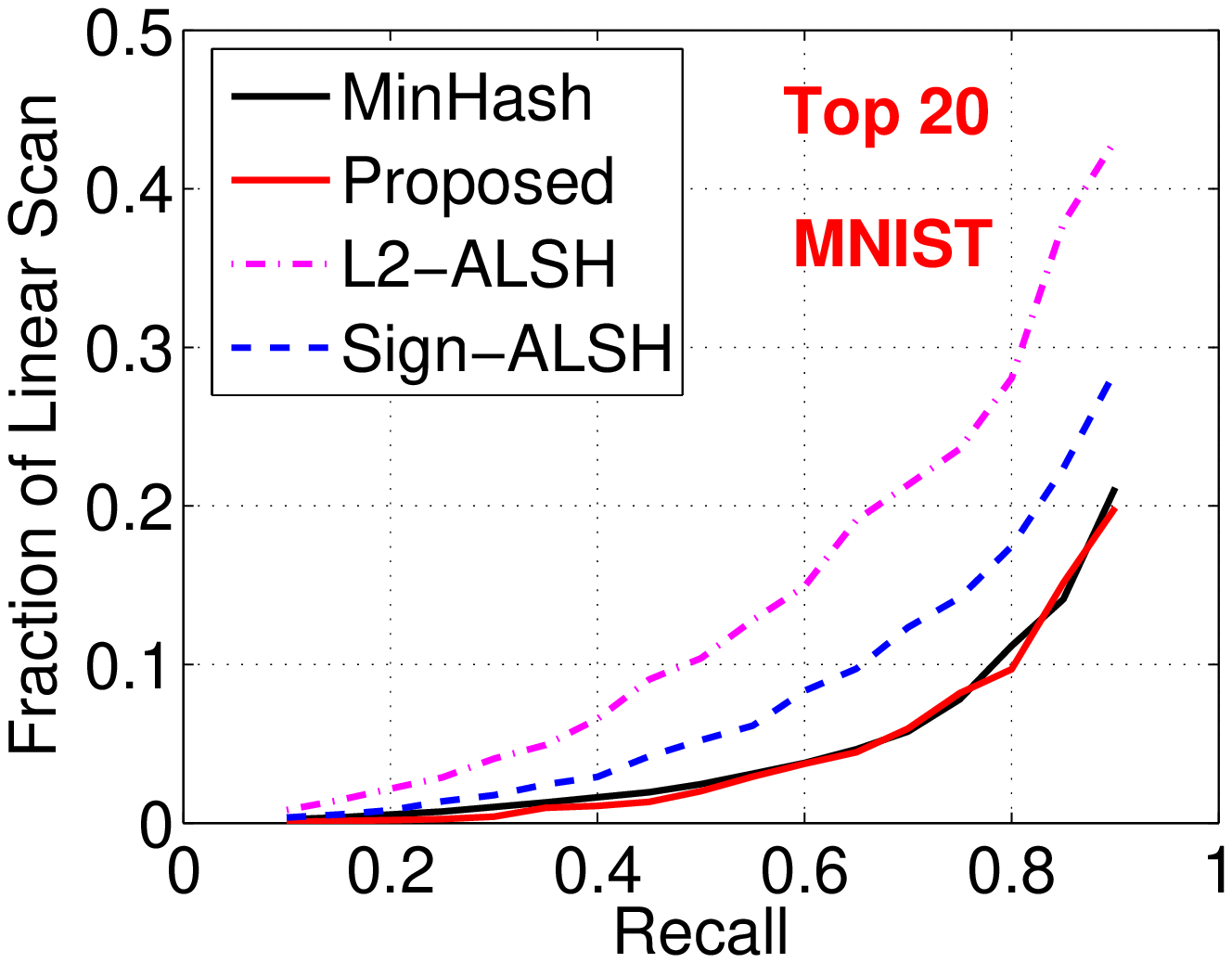}\hspace{-0.13in}
\includegraphics[width=1.8in]{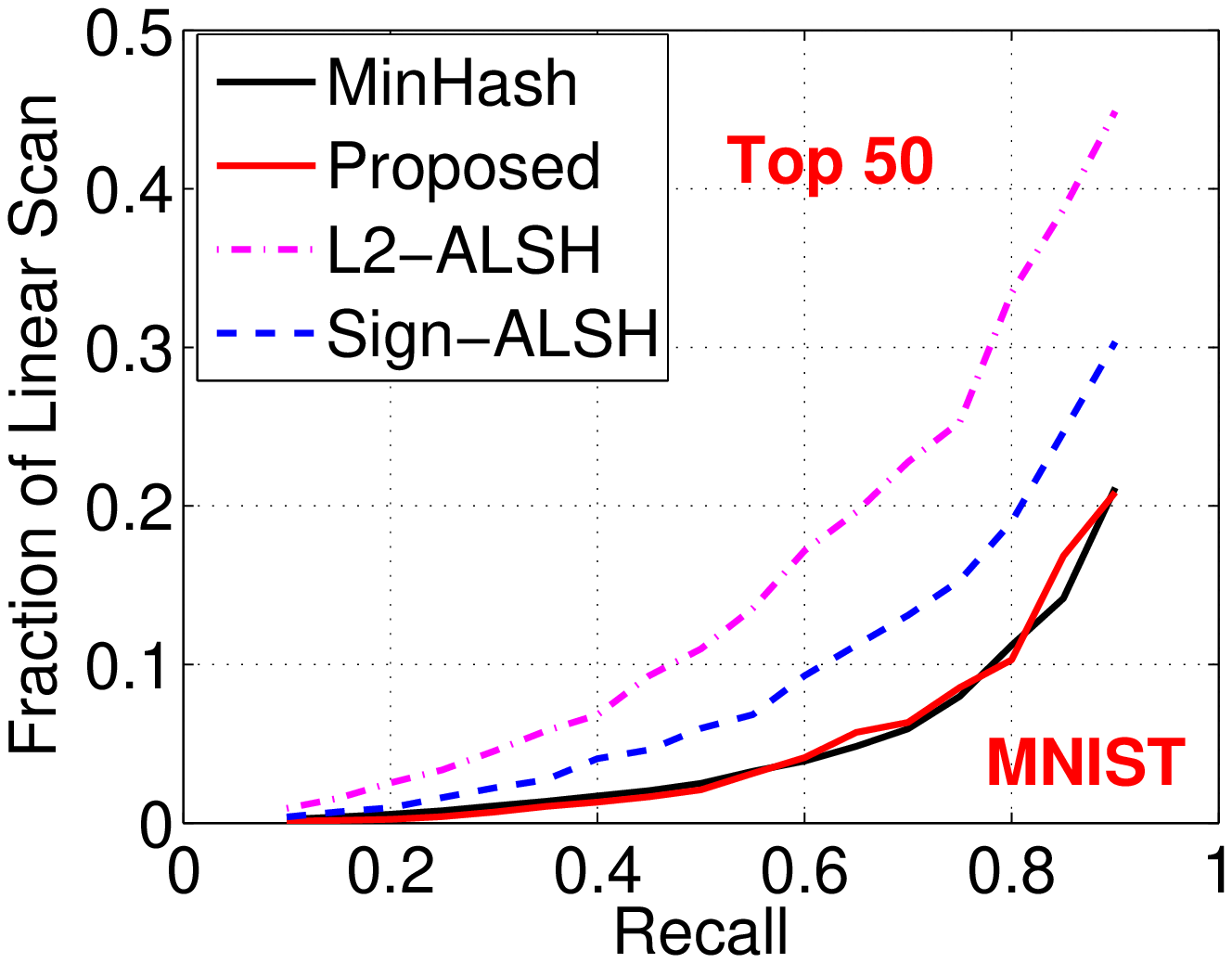}
}
\mbox{
\includegraphics[width=1.8in]{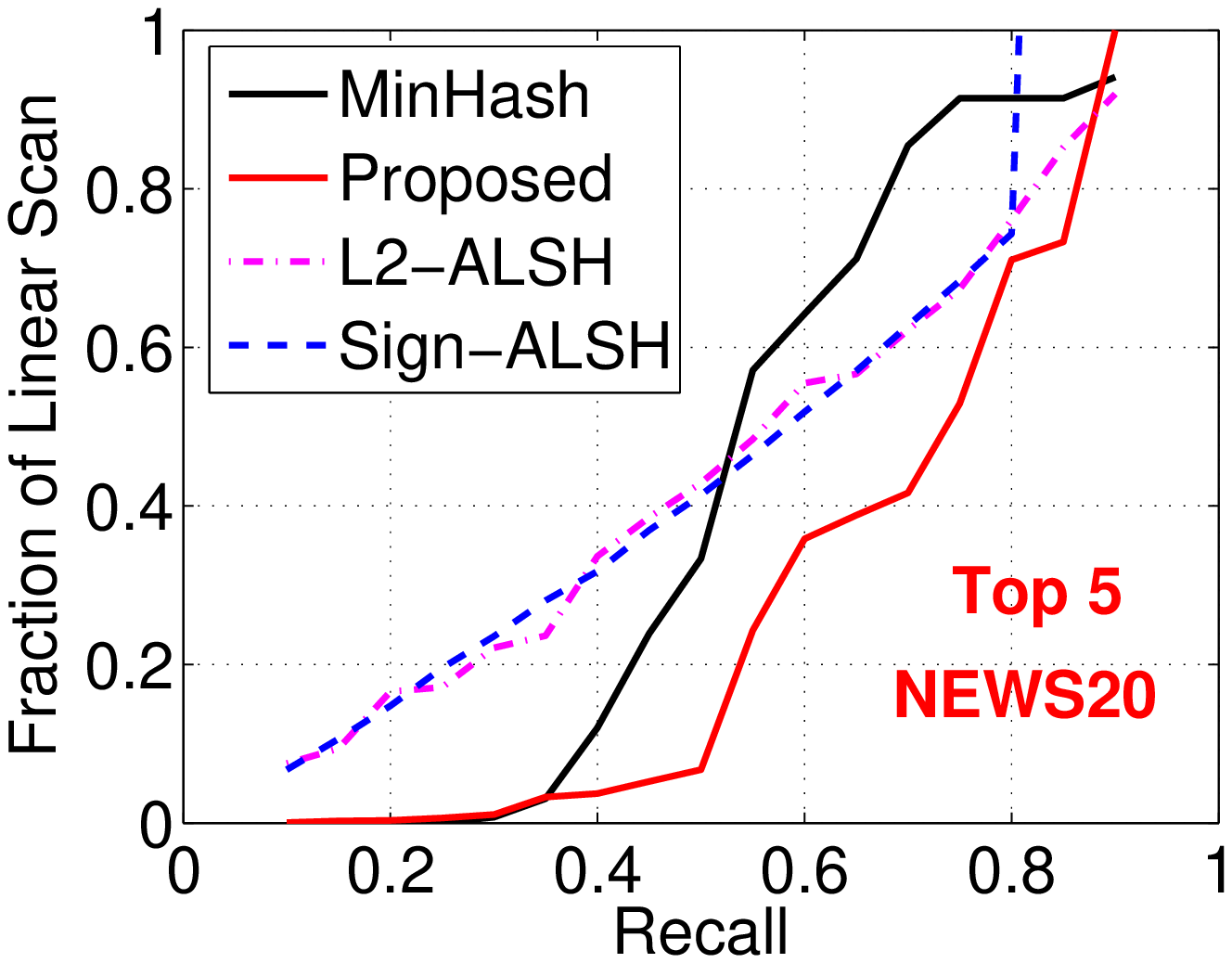}\hspace{-0.13in}
\includegraphics[width=1.8in]{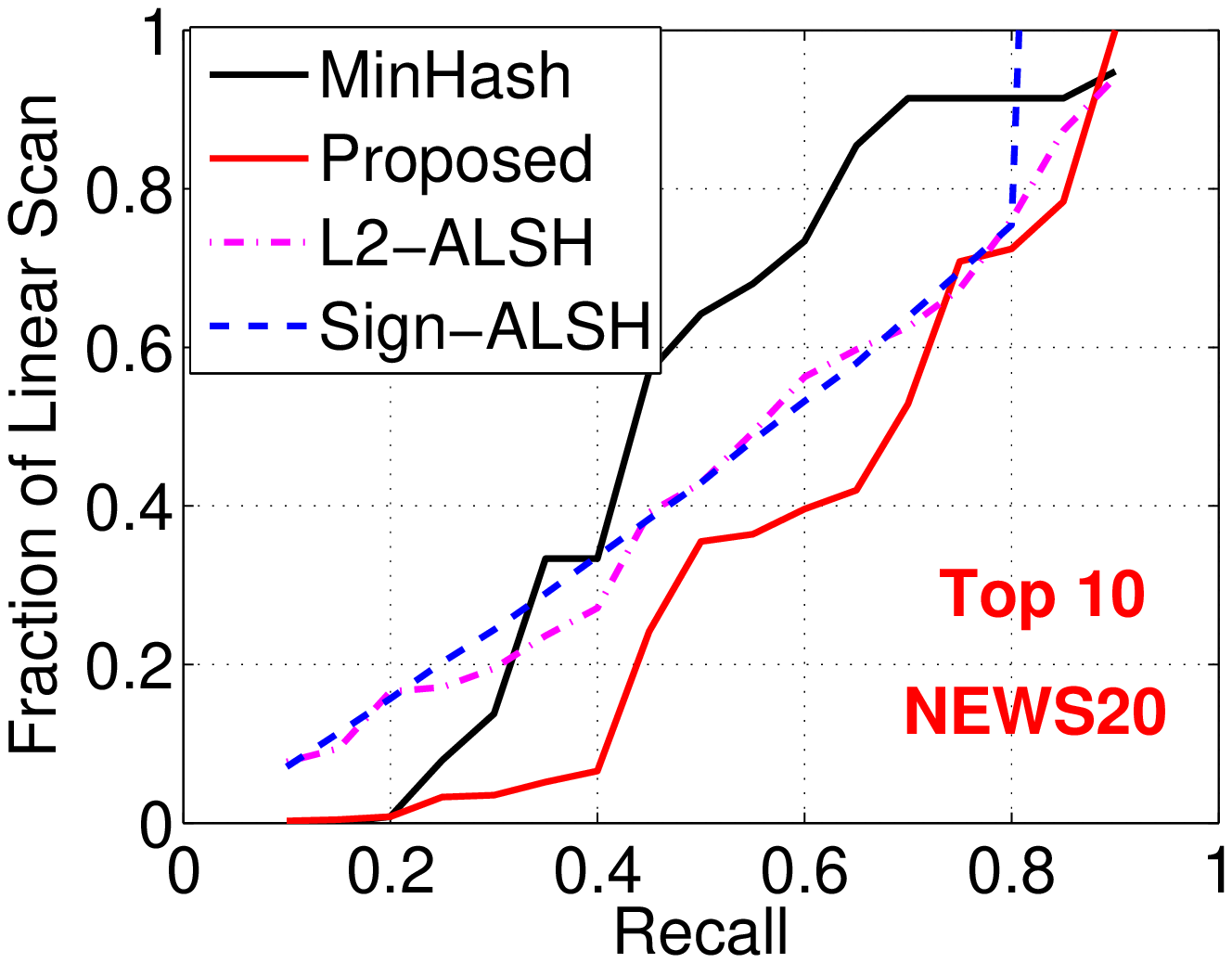}\hspace{-0.13in}
\includegraphics[width=1.8in]{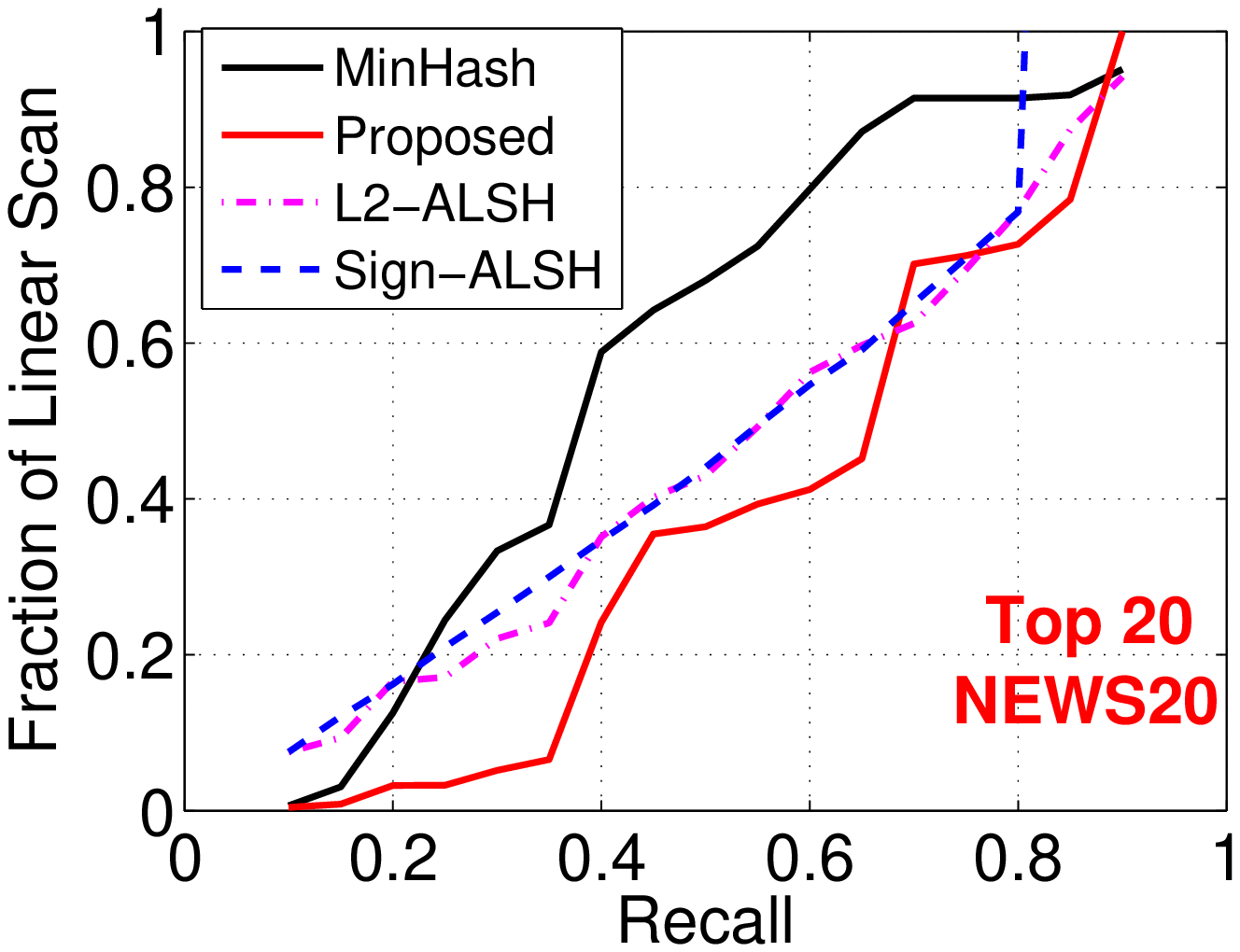}\hspace{-0.13in}
\includegraphics[width=1.8in]{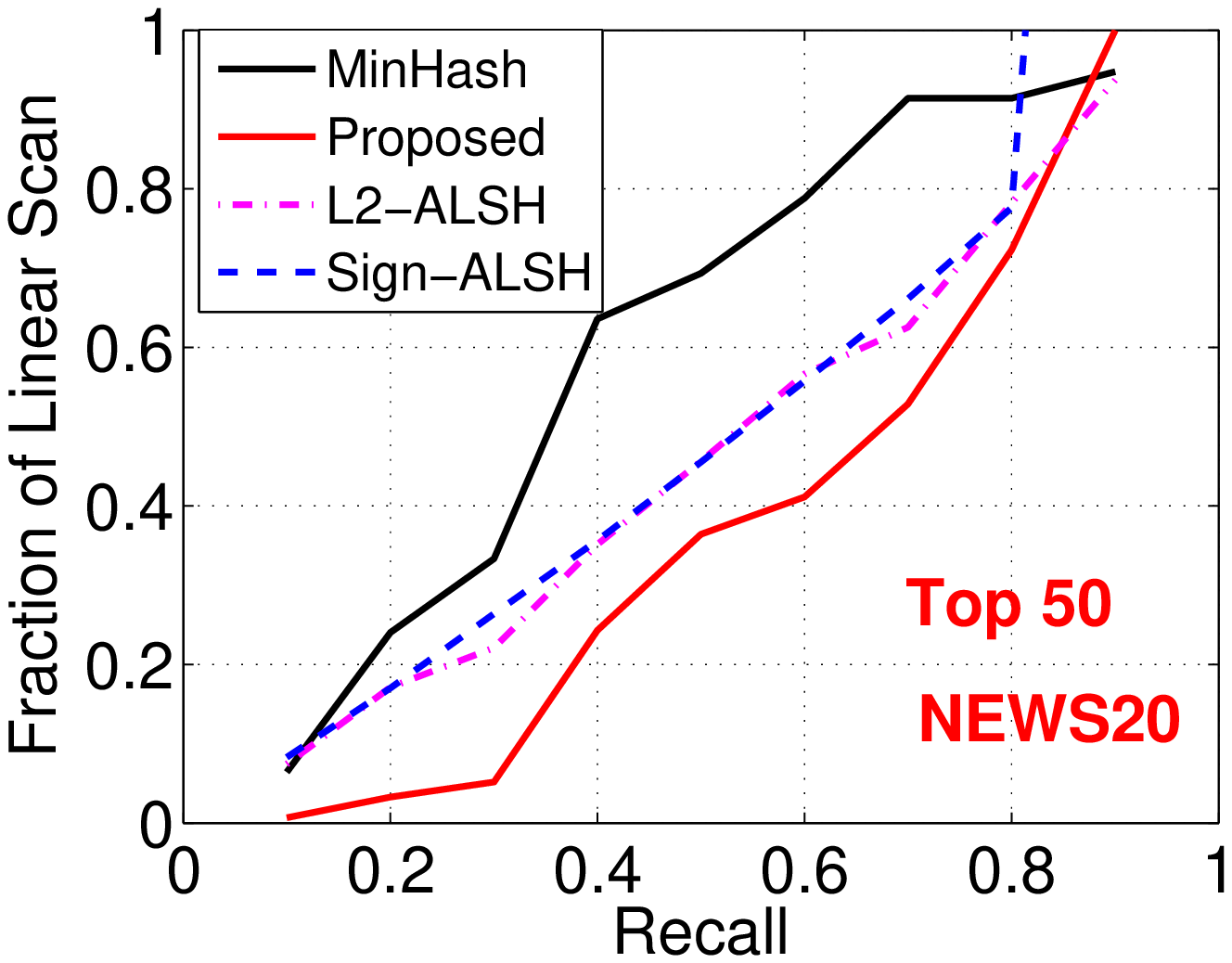}
}

\mbox{
\includegraphics[width=1.8in]{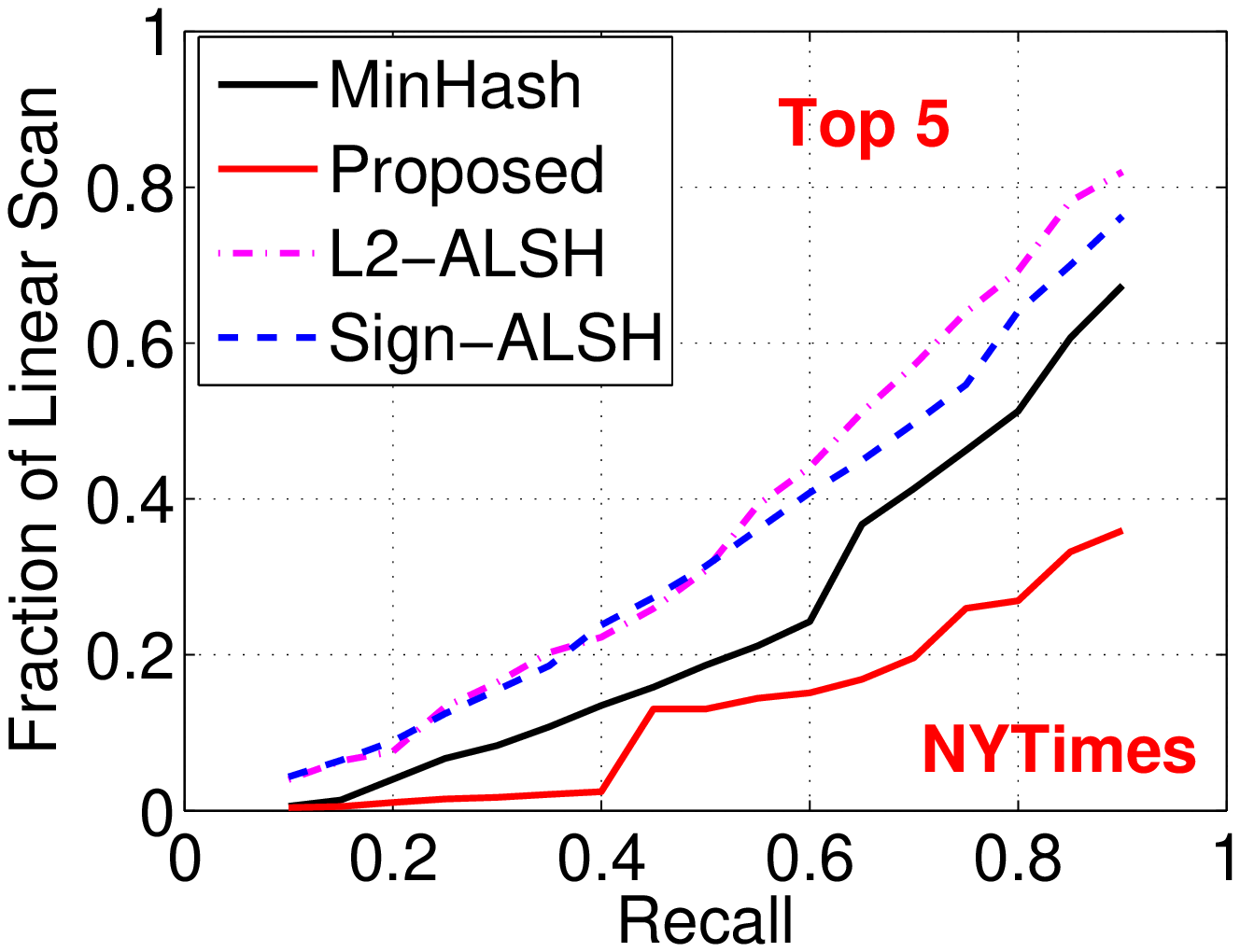}\hspace{-0.13in}
\includegraphics[width=1.8in]{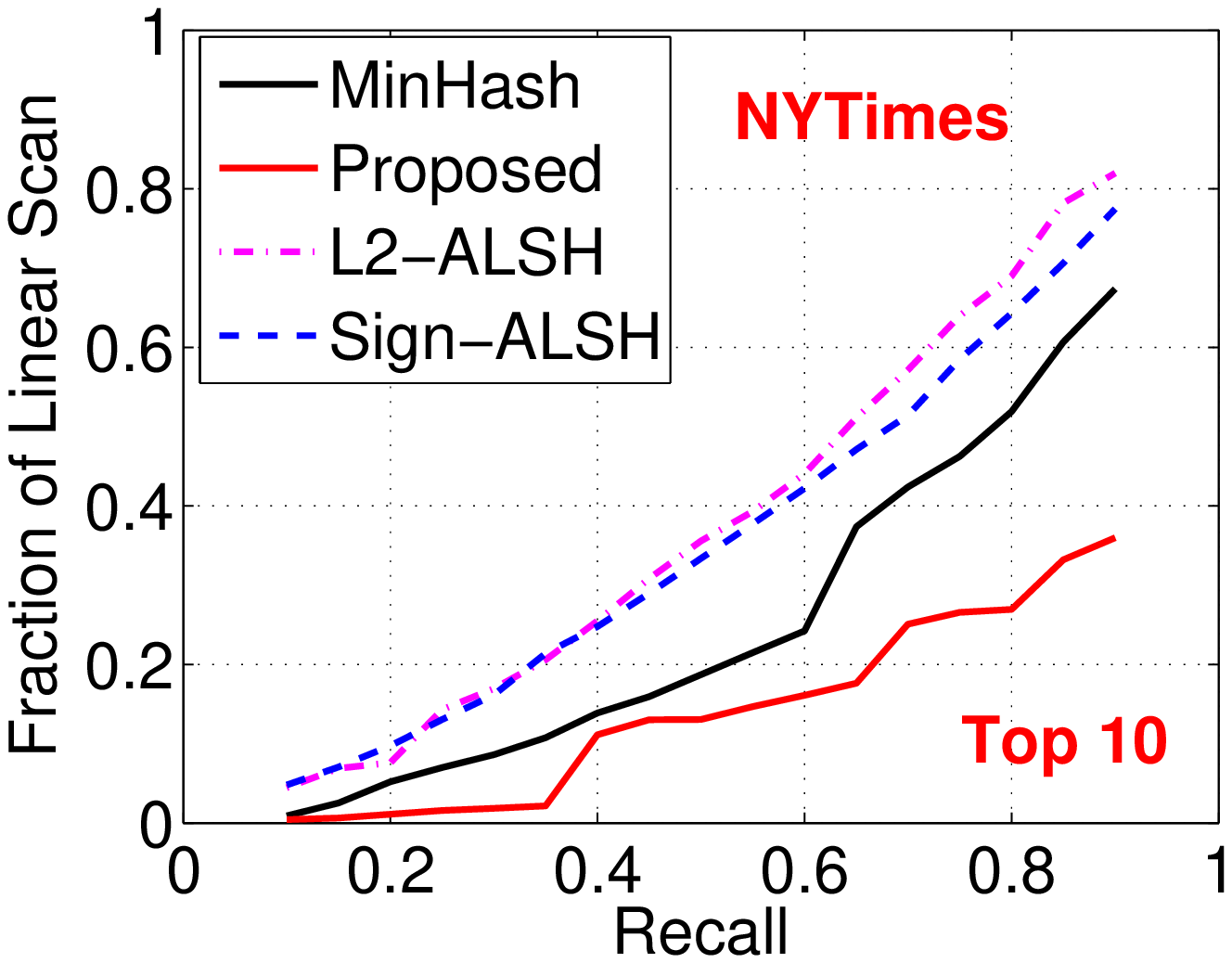}\hspace{-0.13in}
\includegraphics[width=1.8in]{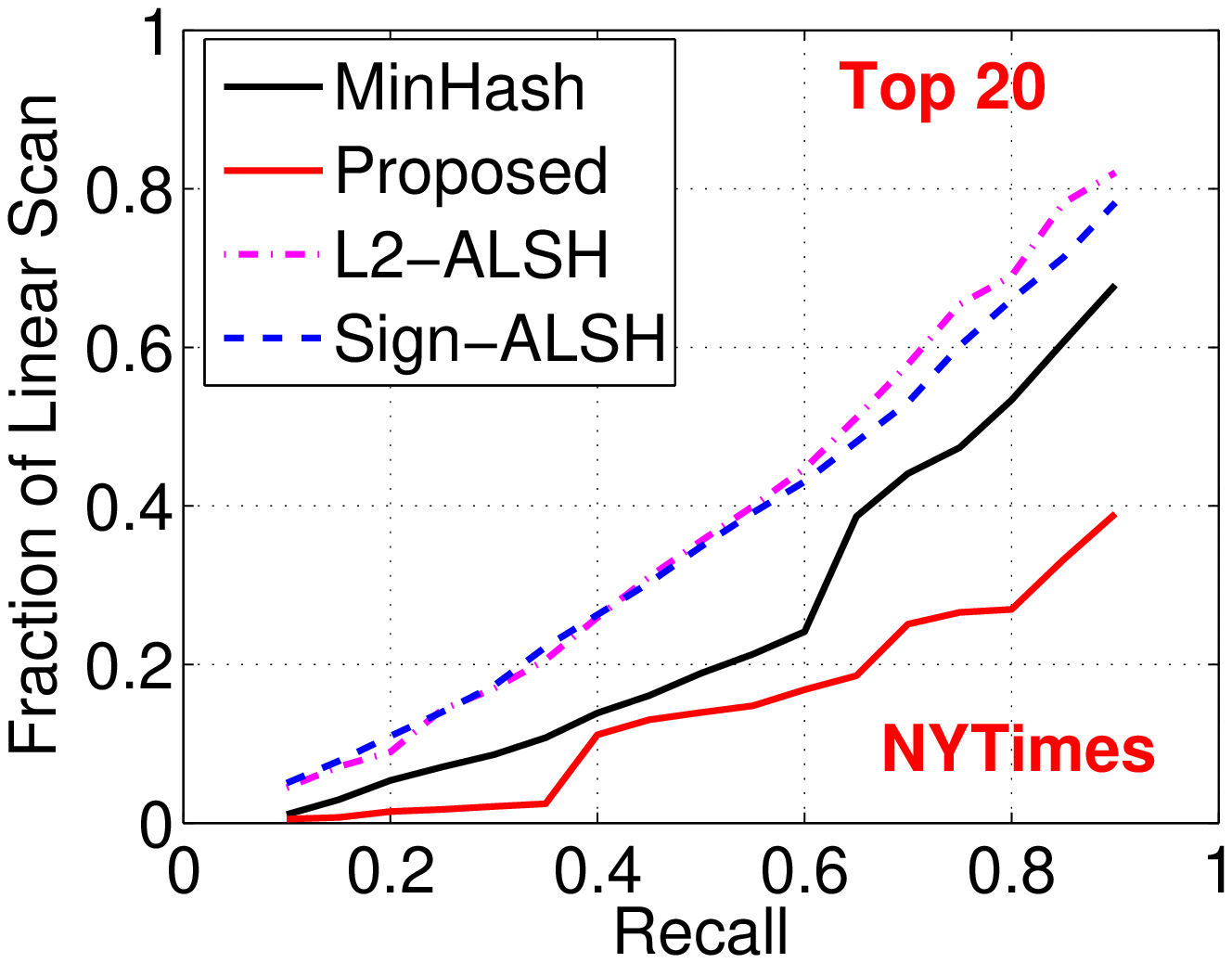}\hspace{-0.13in}
\includegraphics[width=1.8in]{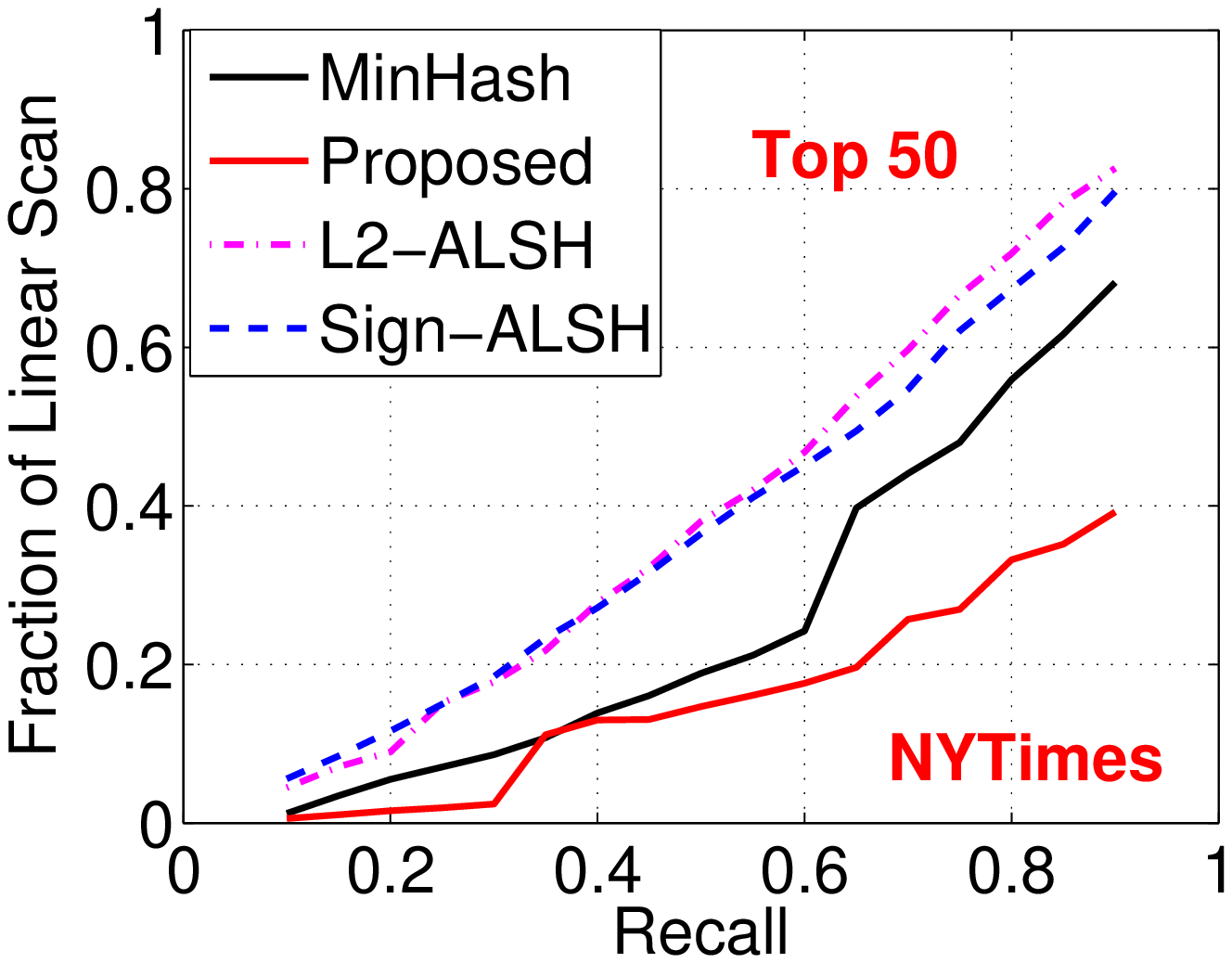}\hspace{-0.13in}
}
\end{center}
\vspace{-0.25in}
\caption{\textbf{LSH Bucketing Experiments.}\ \ Average number of  points retrieved per query (lower is better), relative to linear scan, evaluated by different hashing schemes at different recall levels, for top-5, \ top-10, \ top-20, \ top-50 nearest neighbors based on Jaccard containment (or equivalently inner products), on  four datasets. We show that results at the best $K$ and $L$ values chosen at every recall value, independently for each of the four hashing schemes.}\label{fig:topk}
\end{figure*}

The performance of a hashing based method varies with the variations in the similarity levels in the datasets. It can be seen that the proposed asymmetric minhash always retrieves much less number of points, and hence requires significantly less computations, compared to other hashing schemes at any recall level on all the four datasets. Asymmetric minhash consistently outperforms other hash functions irrespective of the operating point. The plots clearly establish the superiority of the proposed scheme for indexing Jaccard containment (or inner products).

 L2-ALSH and Sign-ALSH  perform better than traditional minhash on EP2006 and NEWS20 datasets while they are worse than plain minhash on NYTIMES and MNIST datasets. If we look at the statistics of the dataset from Table~\ref{tab_data}, NYTIMES and MNIST are precisely the datasets with less variations in the number of nonzeros and hence minhash performs better. In fact, for MNIST dataset with very small variations in the number of nonzeros, the performance of plain minhash  is very close  to the performance of asymmetric minhash. This is of course expected because there is negligible effect of penalization on the ordering. EP2006 and NEWS20 datasets have huge variations in their number of nonzeros and hence minhash performs very poorly on these datasets. What is exciting is that despite these variations in the nonzeros, asymmetric minhash always outperforms other ALSH for general  inner products.

The difference in the performance of plain minhash and asymmetric minhash clearly establishes the utility of our proposal which is simple and does not require any major modification over traditional minhash implementation. Given the fact that minhash is widely popular, we hope that our proposal will be adopted.

\section{Conclusion and Future Work}

Minwise hashing (minhash) is a widely popular indexing scheme in practice for similarity search. Minhash is originally designed for estimating set resemblance (i.e., normalized size of set intersections).  In many applications the performance of minhash is severely affected because minhash has a bias towards smaller sets. In this study, we propose asymmetric corrections (asymmetric minwise hashing, or MH-ALSH)  to minwise hashing that remove this often undesirable bias. Our corrections lead to a provably  superior algorithm for  retrieving binary inner products in the literature. Rigorous experimental evaluations on the task of retrieving maximum inner products clearly establish that the proposed approach can be significantly advantageous over the existing state-of-the-art hashing schemes in practice, when the desired similarity is the inner product (or containment) instead of the resemblance. Our proposed method requires only minimal modification of the original minwise hashing algorithm and should be straightforward to implement in practice. \\

\noindent \textbf{Future work}:\  One immediate future work would be {\em asymmetric consistent weighted sampling} for hashing weighted intersection:  $\sum_{i = 1} ^D \min \{x_i,y_i\}$, where  $x$ and $y$ are general real-valued  vectors. One proposal of the new  asymmetric transformation is the following:
\begin{align}
P(x) &= [x; M - \sum_{i=1}^D x_i ;0], \hspace{0.3in}
Q(x) = [x; 0; M - \sum_{i=1}^D x_i ],
\end{align}
where $M = \max_{x\in \mathcal{C}}\sum_{i} x_i$.   It is not difficult to show that the weighted Jaccard similarity between $P(x)$ and $Q(y)$ is monotonic in $\sum_{i = 1} ^D \min \{x_i,y_i\}$ as desired. At this point, we can use existing methods for consistent weighted sampling on the new data after asymmetric transformations~\cite{Report:Manasse_00,Proc:Ioffe_ICDM10,Report:Haeupler_arXiv14}.\\

Another potentially promising  topic for future work might be asymmetric minwise hashing for 3-way (or higher-order) similarities~\cite{Article:Li_Church_CL07,Proc:Shrivastava_NIPS13}


\begin{thebibliography}{10}

\bibitem{Proc:Agrawal_SIGMOD2010}
P.~Agrawal, A.~Arasu, and R.~Kaushik.
\newblock On indexing error-tolerant set containment.
\newblock In {\em Proceedings of the 2010 ACM SIGMOD International Conference
  on Management of data}, pages 927--938. ACM, 2010.

\bibitem{Report:E2LSH}
A.~Andoni and P.~Indyk.
\newblock E2lsh: Exact euclidean locality sensitive hashing.
\newblock Technical report, 2004.

\bibitem{Proc:Bachrach_recsys14}
Y.~Bachrach, Y.~Finkelstein, R.~Gilad-Bachrach, L.~Katzir, N.~Koenigstein,
  N.~Nice, and U.~Paquet.
\newblock Speeding up the xbox recommender system using a euclidean
  transformation for inner-product spaces.
\newblock In {\em Proceedings of the 8th ACM Conference on Recommender
  Systems}, RecSys '14, 2014.

\bibitem{Proc:Bachrach_IJCAI09}
Y.~Bachrach, E.~Porat, and J.~S. Rosenschein.
\newblock Sketching techniques for collaborative filtering.
\newblock In {\em Proceedings of the 21st International Jont Conference on
  Artifical Intelligence}, IJCAI'09, 2009.

\bibitem{Proc:Bayardo_WWW07}
R.~J. Bayardo, Y.~Ma, and R.~Srikant.
\newblock Scaling up all pairs similarity search.
\newblock In {\em WWW}, pages 131--140, 2007.

\bibitem{Proc:Broder}
A.~Z. Broder.
\newblock On the resemblance and containment of documents.
\newblock In {\em the Compression and Complexity of Sequences}, pages 21--29,
  Positano, Italy, 1997.

\bibitem{Proc:Broder_STOC98}
A.~Z. Broder, M.~Charikar, A.~M. Frieze, and M.~Mitzenmacher.
\newblock Min-wise independent permutations.
\newblock In {\em STOC}, pages 327--336, Dallas, TX, 1998.

\bibitem{Report:chandra_10}
T.~Chandra, E.~Ie, K.~Goldman, T.~L. Llinares, J.~McFadden, F.~Pereira,
  J.~Redstone, T.~Shaked, and Y.~Singer.
\newblock Sibyl: a system for large scale machine learning.

\bibitem{Article:Chapelle_99}
O.~Chapelle, P.~Haffner, and V.~N. Vapnik.
\newblock Support vector machines for histogram-based image classification.
\newblock {\em IEEE Transactions on Neural Networks}, 10(5):1055--1064, 1999.

\bibitem{Proc:Charikar}
M.~S. Charikar.
\newblock Similarity estimation techniques from rounding algorithms.
\newblock In {\em STOC}, pages 380--388, Montreal, Quebec, Canada, 2002.

\bibitem{Proc:chaudhuri_ICDE06}
S.~Chaudhuri, V.~Ganti, and R.~Kaushik.
\newblock A primitive operatior for similarity joins in data cleaning.
\newblock In {\em ICDE}, 2006.

\bibitem{Report:Chaudhuri_VLDB09}
S.~Chaudhuri, V.~Ganti, and D.~Xin.
\newblock Mining document collections to facilitate accurate approximate entity
  matching.
\newblock {\em Proceedings of the VLDB Endowment}, 2(1):395--406, 2009.

\bibitem{Proc:Chierichetti_KDD09}
F.~Chierichetti, R.~Kumar, S.~Lattanzi, M.~Mitzenmacher, A.~Panconesi, and
  P.~Raghavan.
\newblock On compressing social networks.
\newblock In {\em KDD}, pages 219--228, Paris, France, 2009.

\bibitem{Proc:Cormode_SIGMOD05}
G.~Cormode and S.~Muthukrishnan.
\newblock Space efficient mining of multigraph streams.
\newblock In {\em Proceedings of the twenty-fourth ACM SIGMOD-SIGACT-SIGART
  symposium on Principles of database systems}, pages 271--282. ACM, 2005.

\bibitem{Proc:Das_WWW07}
A.~S. Das, M.~Datar, A.~Garg, and S.~Rajaram.
\newblock Google news personalization: scalable online collaborative filtering.
\newblock In {\em Proceedings of the 16th international conference on World
  Wide Web}, pages 271--280. ACM, 2007.

\bibitem{Proc:Datar_SCG04}
M.~Datar, N.~Immorlica, P.~Indyk, and V.~S. Mirrokn.
\newblock Locality-sensitive hashing scheme based on $p$-stable distributions.
\newblock In {\em SCG}, pages 253 -- 262, Brooklyn, NY, 2004.

\bibitem{Article:Goemans_JACM95}
M.~X. Goemans and D.~P. Williamson.
\newblock Improved approximation algorithms for maximum cut and satisfiability
  problems using semidefinite programming.
\newblock {\em Journal of ACM}, 42(6):1115--1145, 1995.

\bibitem{Report:Haeupler_arXiv14}
B.~Haeupler, M.~Manasse, and K.~Talwar.
\newblock Consistent weighted sampling made fast, small, and easy.
\newblock Technical report, arXiv:1410.4266, 2014.

\bibitem{Proc:Hein_AISTATS05}
M.~Hein and O.~Bousquet.
\newblock Hilbertian metrics and positive definite kernels on probability
  measures.
\newblock In {\em AISTATS}, pages 136--143, Barbados, 2005.

\bibitem{Proc:Henzinger_SIGIR06}
M.~R. Henzinger.
\newblock Finding near-duplicate web pages: a large-scale evaluation of
  algorithms.
\newblock In {\em SIGIR}, pages 284--291, 2006.

\bibitem{Proc:Indyk_STOC98}
P.~Indyk and R.~Motwani.
\newblock Approximate nearest neighbors: Towards removing the curse of
  dimensionality.
\newblock In {\em STOC}, pages 604--613, Dallas, TX, 1998.

\bibitem{Proc:Ioffe_ICDM10}
S.~Ioffe.
\newblock Improved consistent sampling, weighted minhash and \text{L1}
  sketching.
\newblock In {\em ICDM}, pages 246--255, Sydney, AU, 2010.

\bibitem{Proc:Jiang_CIVR07}
Y.~Jiang, C.~Ngo, and J.~Yang.
\newblock Towards optimal bag-of-features for object categorization and
  semantic video retrieval.
\newblock In {\em CIVR}, pages 494--501, Amsterdam, Netherlands, 2007.

\bibitem{Proc:Koudas_SIGMOD06}
N.~Koudas, S.~Sarawagi, and D.~Srivastava.
\newblock Record linkage: similarity measures and algorithms.
\newblock In {\em Proceedings of the 2006 ACM SIGMOD international conference
  on Management of data}, pages 802--803. ACM, 2006.

\bibitem{Article:Li_Church_CL07}
P.~Li and K.~W. Church.
\newblock A sketch algorithm for estimating two-way and multi-way associations.
\newblock {\em Computational Linguistics (Preliminary results appeared in
  HLT/EMNLP 2005)}, 33(3):305--354, 2007.

\bibitem{Proc:Li_ICML14}
P.~Li, M.~Mitzenmacher, and A.~Shrivastava.
\newblock Coding for random projections.
\newblock In {\em ICML}, 2014.

\bibitem{Report:RPCodeLSH2014}
P.~Li, M.~Mitzenmacher, and A.~Shrivastava.
\newblock Coding for random projections and approximate near neighbor search.
\newblock Technical report, arXiv:1403.8144, 2014.

\bibitem{Proc:HashLearning_NIPS11}
P.~Li, A.~Shrivastava, J.~Moore, and A.~C. K\"onig.
\newblock Hashing algorithms for large-scale learning.
\newblock In {\em NIPS}, Granada, Spain, 2011.

\bibitem{Report:Manasse_00}
M.~Manasse, F.~McSherry, and K.~Talwar.
\newblock Consistent weighted sampling.
\newblock Technical Report MSR-TR-2010-73, Microsoft Research, 2010.

\bibitem{Proc:Melnik_TODC03}
S.~Melnik and H.~Garcia-Molina.
\newblock Adaptive algorithms for set containment joins.
\newblock {\em ACM Transactions on Database Systems (TODS)}, 28(1):56--99,
  2003.

\bibitem{Report:Neyshabur_arXiv14}
B.~Neyshabur and N.~Srebro.
\newblock A simpler and better lsh for maximum inner product search (mips).
\newblock Technical report, arXiv:1410.5518, 2014.

\bibitem{Book:Raj_Ullman}
A.~Rajaraman and J.~Ullman.
\newblock {\em Mining of Massive Datasets}.
\newblock http://i.stanford.edu/~ullman/mmds.html.

\bibitem{ramasamy2000set}
K.~Ramasamy, J.~F. Naughton, and R.~Kaushik.
\newblock Set containment joins: The good, the bad and the ugly.

\bibitem{Proc:Shrivastava_NIPS13}
A.~Shrivastava and P.~Li.
\newblock Beyond pairwise: Provably fast algorithms for approximate k-way
  similarity search.
\newblock In {\em NIPS}, Lake Tahoe, NV, 2013.

\bibitem{Proc:Shrivastava_NIPS14}
A.~Shrivastava and P.~Li.
\newblock Asymmetric \text{LSH (ALSH)} for sublinear time maximum inner product
  search (mips).
\newblock In {\em NIPS}, Montreal, CA, 2014.

\bibitem{Proc:OneHashLSH_ICML14}
A.~Shrivastava and P.~Li.
\newblock Densifying one permutation hashing via rotation for fast near
  neighbor search.
\newblock In {\em ICML}, Beijing, China, 2014.

\bibitem{Article:Shrivastava_arXiv14}
A.~Shrivastava and P.~Li.
\newblock An improved scheme for asymmetric lsh.
\newblock {\em arXiv preprint arXiv:1410.5410}, 2014.

\bibitem{Proc:Shrivastava_AISTATS14}
A.~Shrivastava and P.~Li.
\newblock In defense of minhash over simhash.
\newblock In {\em AISTATS}, 2014.

\bibitem{Proc:Weber_VLDB98}
R.~Weber, H.-J. Schek, and S.~Blott.
\newblock A quantitative analysis and performance study for similarity-search
  methods in high-dimensional spaces.
\newblock In {\em VLDB}, pages 194--205, 1998.

\end{thebibliography}
\end{document}